\def\expandafter\normalsize\expandafter{%
    \normalsize
    \setlength\abovedisplayskip{2pt}
    \setlength\belowdisplayskip{2pt}
    \setlength\abovedisplayshortskip{2pt}
    \setlength\belowdisplayshortskip{2pt}
}
\newtheorem{theorem}{Theorem}
\newtheorem{Definition}{Definition}
\newtheorem{Corollary}{Corollary}
\newtheorem{Lemma}{Lemma}
\newtheorem{Remark}{Remark}
\newtheorem{Assumption}{Assumption}
\title{Stochastic-Sign SGD for Federated Learning with Theoretical Guarantees}
\author{%
\textbf{Richeng Jin}\thanks{North Carolina State University, Email: rjin2@ncsu.edu.},~~~
\textbf{Yufan Huang}\thanks{North Carolina State University, Email: yhuang20@ncsu.edu.},~~~
\textbf{Xiaofan He}\thanks{Wuhan University, Email: xiaofanhe@whu.edu.cn.},~~~
\textbf{Huaiyu Dai}\thanks{Corresponding author; North Carolina State University, Email: hdai@ncsu.edu.},~~~
\textbf{Tianfu Wu}\thanks{North Carolina State University, Email: tianfu\textunderscore wu@ncsu.edu.}
}
\begin{document}
\maketitle
\begin{abstract}
Federated learning (FL) has emerged as a prominent distributed learning paradigm. FL entails some pressing needs for developing novel parameter estimation approaches with theoretical guarantees of convergence, which are also communication efficient, differentially private and Byzantine resilient in the heterogeneous data distribution settings. Quantization-based SGD solvers have been widely adopted in FL and the recently proposed {\scriptsize SIGN}SGD with majority vote shows a promising direction. However, no existing methods enjoy all the aforementioned properties. In this paper, we propose an intuitively-simple yet theoretically-sound method based on {\scriptsize SIGN}SGD to bridge the gap. We present Stochastic-Sign SGD which utilizes novel stochastic-sign based gradient compressors enabling the aforementioned properties in a unified framework. We also present an error-feedback variant of the proposed Stochastic-Sign SGD which further improves the learning performance in FL. We test the proposed method with extensive experiments using deep neural networks on the MNIST dataset and the CIFAR-10 dataset. The experimental results corroborate the effectiveness of the proposed method.
\end{abstract}

\section{Introduction}
\label{Introduction}
Recently, Federated Learning (FL) has become a prominent distributed learning paradigm since it allows training on a large amount of decentralized data residing on devices like mobile phones \cite{mcmahan2017communication}. However, FL imposes several critical challenges. First of all, the communication capability of the mobile devices can be a significant bottleneck. Furthermore, the training data on a given worker is typically based on its usage of the mobile devices, which results in heterogeneous data distribution. In addition, the local data usually contains some sensitive information of a particular mobile device user. Therefore, there is an pressing need to develop a privacy-preserving distributed learning algorithm. Finally, similar to many distributed learning methods, FL may suffer from malicious participants. As is shown in \cite{chen2017distributed}, even a single Byzantine worker, which may transmit arbitrary information, can severely disrupt the convergence of distributed gradient descent algorithms. However, to the best of our knowledge, no existing methods can cope with all the aforementioned challenges.
\setcounter{footnote}{0}
To alleviate the communication burden of the workers, there have been various gradient quantization methods \cite{alistarh2017qsgd,wen2017terngrad,bernstein2018signsgd1,wu2018error,agarwal2018cpsgd} in the literature, among which the recently proposed {\scriptsize SIGN}SGD with majority vote \cite{bernstein2018signsgd2} is of particular interest due to its robustness and communication efficiency.\footnote{Note that all the algorithms considered in this work use the idea of majority vote. Therefore, we ignore the term ``with majority vote" in the following discussions for the ease of presentation.} In {\scriptsize SIGN}SGD, during each communication round, only the signs of the gradients and aggregation results are exchanged between the workers and the server, which leads to around 32$\times$ less communication than full-precision distributed SGD. Nonetheless, it has been shown in \cite{chen2019distributed} that {\scriptsize SIGN}SGD fails to converge when the data on different workers are heterogeneous (i.e., drawn from different distributions), which is one of the most important features in FL.


In this work, inspired by the idea of adding carefully designed noise before taking the sign operation in \cite{chen2019distributed}, we present Stochastic-Sign SGD, which is a class of stochastic-sign based SGD algorithms. In particular, we first propose a stochastic compressor $sto\text{-}sign$, which extends {\scriptsize SIGN}SGD to its stochastic version {\scriptsize Sto-SIGN}SGD. In this scheme, instead of directly transmitting the signs of gradients, the workers adopt a two-level stochastic quantization and transmit the signs of the quantized results. We note that different from the existing 1-bit stochastic quantization schemes (e.g., QSGD \cite{alistarh2017qsgd}, cpSGD \cite{agarwal2018cpsgd}), the proposed algorithm also uses the majority vote rule in gradient aggregation, which allows the server-to-worker communication to be 1-bit compressed and ensures robustness as well. Then, to further resolve the privacy concerns, a differentially private stochastic compressor $dp\text{-}sign$ is proposed, which can accommodate the requirement of $(\epsilon,\delta)$-local differential privacy \cite{dwork2014algorithmic}. The corresponding algorithm is termed as {\scriptsize DP-SIGN}SGD. We then prove that {\color{black}both of the proposed algorithms converge to the neighborhood of the (local) optimum under heterogeneous data distribution}. In addition, assuming that there are $M$ normal (benign) workers, it is shown that the Byzantine resilience of the proposed algorithms is upper bounded by $|\sum_{m=1}^{M}(\boldsymbol{g}_{m}^{(t)})_{i}|/b_{i}, \forall i$, where $(\boldsymbol{g}_{m}^{(t)})_{i}$ is the $i$-th entry of worker $m$'s gradient at iteration $t$ and $b_{i} \geq \max_{m}(\boldsymbol{g}_{m}^{(t)})_{i}$ is some design parameter. Particularly, $b_{i}$ depends on the data heterogeneity (through $\max_{m}(\boldsymbol{g}_{m}^{(t)})_{i}$). As a special case, the proposed algorithms can tolerate $M-1$ Byzantine workers when the normal workers can access the same dataset (i.e., $(\boldsymbol{g}_{m}^{(t)})_{i} = (\boldsymbol{g}_{j}^{(t)})_{i}, \forall 1\leq j, m\leq M$), which recovers the result of {\scriptsize SIGN}SGD. We also introduce weighted vote and Top-$k$ sparsification based schemes to improve resilience against attackers and differential privacy, respectively.

Finally, we extend the proposed algorithm to its error-feedback variant, termed as Error-Feedback Stochastic-Sign SGD. In this scheme, the server keeps track of the error induced by the majority vote operation and compensates for the error in the next communication round. Both the convergence and the Byzantine resilience are established. Extensive simulations are performed to demonstrate the effectiveness of all the proposed algorithms.

\section{Related Works}
\textbf{Gradient Quantization:} To accommodate the need of communication efficiency in distributed learning, various gradient compression methods have been proposed. Most of the existing works focus on unbiased methods \cite{tang2018communication,jiang2018linear}. QSGD \cite{alistarh2017qsgd}, TernGrad \cite{wen2017terngrad} and ATOMO \cite{wang2018atomo} propose to use stochastic quantization schemes, based on which a differentially private variant is proposed in \cite{agarwal2018cpsgd}. Due to the unbiased nature of such quantization methods, the convergence of the corresponding algorithms can be established.

The idea of sharing the signs of gradients in SGD can be traced back to 1-bit SGD \cite{seide20141}. Despite that sign-based quantization is biased in nature, \cite{carlson2015stochastic} and \cite{bernstein2018signsgd1,bernstein2018signsgd2} show theoretical and empirical evidence that sign-based gradient schemes can converge well in the homogeneous data distribution scenario. {\color{black}\cite{safaryan2021stochastic} shows that the convergence of {\scriptsize SIGN}SGD can be guaranteed if the probability of wrong aggregation is less than $1/2$.} In the heterogeneous data distribution case, \cite{chen2019distributed} shows that the convergence of {\scriptsize SIGN}SGD is not guaranteed and proposes to add carefully designed noise to ensure a convergence rate of $O(d^{\frac{3}{4}}/T^{\frac{1}{4}})$. However, their analysis assumes second order differentiability of the noise probability density function and cannot be applied to some commonly used noise distributions (e.g., uniform and Laplace distributions). In addition, their analysis requires that the variance of the noise goes to infinity as the number of iterations grows, which may be unrealistic in practice. {\color{black}\cite{safaryan2021stochastic} proposes Stochastic Sign Descent with Momentum (SSDM) to accommodate the data heterogeneity, and annother independent work proposes FedCOMGATE \cite{haddadpour2021federated}. Compared to SSDM and FedCOMGATE, the proposed Stochastic-Sign SGD is stateless and therefore more suitable for cross-device FL \cite{kairouz2019advances}. Moreover, the Byzantine resilience of Stochastic-Sign SGD is further quantified.}

\textbf{Error-Compensated SGD:} Instead of directly using the biased approximation of the gradients, \cite{seide20141} corrects the quantization error by adding error feedback in subsequent updates and observes almost no accuracy loss empirically. \cite{wu2018error} proposes the error-compensated quantized SGD in quadratic optimization and proves its convergence for unbiased stochastic quantization. \cite{stich2018sparsified} proves the convergence of the proposed error compensated algorithm for strongly-convex loss functions and \cite{alistarh2018convergence} proves the convergence of sparsified gradient methods with error compensation for both convex and non-convex loss functions. In addition, \cite{karimireddy2019error} proposes {\scriptsize EF\text{-}SIGN}SGD, which combines the error compensation methods and {\scriptsize SIGN}SGD; however, only the single worker scenario is considered. \cite{zheng2019communication} further extends it to the multi-worker scenario and the convergence is established. However, it is required in these two works that the compressing error cannot be larger than the magnitude of the original vector, which is not the case for some biased compressors like {\scriptsize SIGN}SGD. \cite{tang2019doublesqueeze} considers more general compressors and proves the convergence under the assumption that the compressors have bounded magnitude of error. However, to the best our knowledge, none of the existing works consider the Byzantine resilience of the error-compensated methods.

\textbf{Byzantine Tolerant SGD in Heterogenous Environment:} There have been significant research interests in developing SGD based Byzantine tolerant algorithms, most of which consider homogeneous data distribution, e.g., Krum \cite{blanchard2017machine}, ByzantineSGD \cite{alistarh2018byzantine}, and the median based algorithms \cite{yin2018byzantine}. \cite{bernstein2018signsgd2} shows that {\scriptsize SIGN}SGD can tolerate up to half ``blind" Byzantine workers who determine how to manipulate their gradients before observing the gradients.

To accommodate the need for robust FL, some Byzantine tolerant algorithms that can deal with heterogeneous data distributions have been developed. \cite{li2019rsa} proposes to incorporate a regularized term with the objective function. However, it requires strong convexity and can only converge to the neighborhood of the optimal solution. \cite{cong2019slsgd} uses trimmed mean to aggregate the shared parameters. {\color{black}\cite{data2021byzantine} adopts the RAGE algorithm in \cite{steinhardt2017resilience} for robust aggregation. Despite that these methods provide certain Byzantine resilience, none of them take the communication efficiency into consideration.}

\textbf{Our Contributions.} This paper makes three main contributions to the field of FL as follows.
\begin{enumerate}[topsep=0pt,parsep=0pt,partopsep=5pt]
    \item We derive a sufficient condition for the convergence of sign-based gradient descent methods in the presence of data heterogeneity, based on which we propose the framework of Stochastic-Sign SGD, which utilizes the stochastic-sign based gradient compressors to overcome the convergence issue of {\scriptsize SIGN}SGD given heterogeneous data distribution. In particular, two novel stochastic compressors, $sto\text{-}sign$ and $dp\text{-}sign$, are proposed, which extend {\scriptsize SIGN}SGD to {\scriptsize Sto-SIGN}SGD and {\scriptsize DP-SIGN}SGD, respectively. {\scriptsize DP-SIGN}SGD is shown to improve the privacy and the accuracy simultaneously, without sacrificing any communication efficiency. We further improve the learning performance of the proposed algorithm by incorporating the error-feedback method.
    \item {\color{black}We prove that {\scriptsize Sto-SIGN}SGD converges to the neighborhood of the (local) optimum in the heterogeneous data distribution scenario. As the number of workers increases, the gap between the converged solution and the (local) optimum decreases.}
    \item {\color{black}We theoretically quantify the Byzantine resilience of the proposed algorithm, which depends on the heterogeneity of the local datasets of the workers. To further improve the Byzantine resilience of {\scriptsize Sto-SIGN}SGD, a reputation based weighted vote mechanism is proposed and its effectiveness is validated by simulations.}
\end{enumerate}

\section{Problem Formulation}
In this paper, we consider a typical federated optimization problem with $M$ normal workers as in \cite{mcmahan2017communication}. Formally, the goal is to minimize the finite-sum objective of the form
\begin{equation}
\min_{w\in \mathbb{R}^d}F(w)~~~~ \text{where}~~~~ F(w) \overset{\mathrm{def}}{=} \frac{1}{M}\sum_{m=1}^{M}f_{m}(w).
\end{equation}
For a machine learning problem, we have a sample space $I = X \times Y$, where $X$ is a space of feature vectors and $Y$ is a label space. Given the hypothesis space $\mathcal{W} \subseteq \mathbb{R}^{d}$, we define a loss function $l: \mathcal{W}\times I \rightarrow \mathbb{R}$ which measures the loss of prediction on the data point $(x,y) \in I$ made with the hypothesis vector $w \in \mathcal{W}$. In such a case, $f_{m}(w)$ is a local function defined by the local dataset of worker $m$ and the hypothesis $w$. More specifically,
\begin{equation}
f_{m}(w)=\frac{1}{|D_{m}|}\sum_{(x_n,y_n)\in D_{m}}l(w;(x_n,y_n)),
\end{equation}
where $|D_{m}|$ is the size of worker $m$'s local dataset $D_{m}$. {\color{black}If the training data are distributed over the workers uniformly at random, then we would have $\mathbb{E}[f_{m}(w)]=F(w)$, where the expectation is over the training data distribution. This is the \textit{homogeneous data distribution assumption} typically made in distributed optimization \cite{mcmahan2017communication}. In many FL applications, however, the local datasets of the workers are heterogeneously distributed.}

We consider a parameter server paradigm. At each communication round $t$, each worker $m$ forms a batch of training samples, based on which it computes and transmits the stochastic gradient $\boldsymbol{g}_{m}^{(t)}$ as an estimate to the true gradient $\nabla f_{m}(w^{(t)}_{m})$. When the worker $m$ evaluates the gradient over its whole local dataset, we have $\boldsymbol{g}_{m}^{(t)} = \nabla f_{m}(w^{(t)}_{m})$. After receiving the gradients from the workers, the server performs aggregation and sends the aggregated gradient back to the workers. Finally, the workers update their local model weights using the aggregated gradient. In this sense, the classic stochastic gradient descent (SGD) algorithm \cite{robbins1951stochastic} performs iterations of the form
\begin{equation}
w^{(t+1)}_{m} = w^{(t)}_{m} - \frac{\eta}{M}\sum_{m=1}^{M}\boldsymbol{g}_{m}^{(t)}.
\end{equation}

In this case, since all the workers adopt the same update rule using the aggregated gradient, $w^{(t)}_{m}$'s are the same for all the workers. Therefore, in the following discussions, we omit the worker index $m$ for the ease of presentation. To accommodate the requirement of communication efficiency in FL, we adopt the popular idea of gradient quantization and assume that each worker $m$ quantizes the gradient with a stochastic 1-bit compressor $q(\cdot)$ and sends $q(\boldsymbol{g}_{m}^{(t)})$ instead of its actual local gradient $\boldsymbol{g}_{m}^{(t)}$. Combining with the idea of majority vote in \cite{bernstein2018signsgd1}, the corresponding algorithm is presented in Algorithm \ref{QuantizedSIGNSGD}.

\begin{algorithm}
\caption{Stochastic-Sign SGD with majority vote}
\label{QuantizedSIGNSGD}
\begin{algorithmic}
\STATE \textbf{Input}: learning rate $\eta$, current hypothesis vector $w^{(t)}$, $M$ workers each with an independent gradient $\boldsymbol{g}_{m}^{(t)}$, the 1-bit compressor $q(\cdot)$.
\STATE \textbf{on server:}
\STATE ~~\textbf{pull} $q(\boldsymbol{g}_{m}^{(t)})$ \textbf{from} worker $m$.
\STATE ~~\textbf{push} $\tilde{\boldsymbol{g}}^{(t)}= sign\big(\frac{1}{M}\sum_{m=1}^{M}q(\boldsymbol{g}_{m}^{(t)})\big)$ \textbf{to} all the workers.
\STATE \textbf{on each worker:}
\STATE ~~\textbf{update} $w^{(t+1)} = w^{(t)} - \eta\tilde{\boldsymbol{g}}^{(t)}$.
\end{algorithmic}
\end{algorithm}

Intuitively, the performance of Algorithm \ref{QuantizedSIGNSGD} is limited by the probability of wrong aggregation, which is given by
\begin{equation}\label{ConvergeneGuarantee}
sign\bigg(\frac{1}{M}\sum_{m=1}^{M}q(\boldsymbol{g}_{m}^{(t)})\bigg) \neq sign\bigg(\frac{1}{M}\sum_{m=1}^{M}\nabla f_{m}(w^{(t)})\bigg).
\end{equation}

In {\scriptsize SIGN}SGD, $q(\boldsymbol{g}_{m}^{(t)}) = sign(\boldsymbol{g}_{m}^{(t)})$ and (\ref{ConvergeneGuarantee}) holds when $\nabla f_{m}(w^{(t)}) \neq \nabla f_{j}(w^{(t)}), \forall m \neq j$ with a high probability, which prevents its convergence. In this work, we propose two compressors $sto\text{-}sign$ and $dp\text{-}sign$, which guarantee that (\ref{ConvergeneGuarantee}) holds with a probability that is strictly smaller than 0.5 and therefore the convergence of Algorithm \ref{QuantizedSIGNSGD} follows. Moreover, $dp\text{-}sign$ is differentially private, i.e., given the quantized gradient $q(\boldsymbol{g}_{m}^{(t)})$, the adversary cannot distinguish the local dataset of worker $m$ from its neighboring datasets that differ in only one data point with a high probability. The detailed definition of differential privacy can be found in Section 1 of the supplementary document.

In addition to the $M$ normal workers, it is assumed that there exist $B$ Byzantine attackers, and its set is denoted as $\mathcal{B}$. Instead of using $sto\text{-}sign$ and $dp\text{-}sign$, the Byzantine attackers can use an arbitrary compressor denoted by $byzantine\text{-}sign$. In this work, we consider the scenario that the Byzantine attackers have access to the average gradients of all the $M$ normal workers (i.e., $\boldsymbol{g}_{j}^{(t)} = \frac{1}{M}\sum_{m=1}^{M}\boldsymbol{g}_{m}^{(t)}$, $\forall j \in \mathcal{B}$) and follow the same procedure as the normal workers. Therefore, we assume that the Byzantine attacker $j$ shares the opposite signs of the true gradients, i.e., $byzantine\text{-}sign(\boldsymbol{g}_{j}^{(t)}) = -sign(\boldsymbol{g}_{j}^{(t)})$.

In order to facilitate the convergence analysis, the following commonly adopted assumptions are made.
\begin{Assumption}\label{A1}(Lower bound).
 For all $w$ and some constant $F^{*}$, we have objective value $F(w) \geq F^{*}$.
\end{Assumption}
\begin{Assumption}\label{A2}(Smoothness).
$\forall w_1,w_2$, we require for some non-negative constant $L$
\begin{equation}
F(w_1) \leq F(w_2) + <\nabla F(w_2), w_1-w_2> + \frac{L}{2}||w_1 - w_2||^{2}_2,
\end{equation}
where $<\cdot,\cdot>$ is the standard inner product.
\end{Assumption}
\begin{Assumption}\label{A4}(Variance bound).
For any worker $m$, the stochastic gradient oracle gives an independent unbiased estimate $g_{m}$ that has coordinate bounded variance:
\begin{equation}
\mathbb{E}[g_{m}] = \nabla f_{m}(w),\mathbb{E}[((g_{m})_{i}-\nabla f_{m}(w)_{i})^2] \leq \sigma^2_{i},
\end{equation}
for a vector of non-negative constants $\bar{\sigma} = [\sigma_{1},\cdots,\sigma_{d}]$; $(g_{m})_{i}$ and $\nabla f_{m}(w)_{i}$ are the $i$-th coordinate of the stochastic and the true gradient, respectively.
\end{Assumption}
\begin{Assumption}\label{A3}
The total number of workers is odd.
\end{Assumption}
We note that Assumptions \ref{A1}, \ref{A2} and \ref{A4} are standard for non-convex optimization and Assumption \ref{A3} is just to ensure that there is always a winner in the majority vote \cite{chen2019distributed}, which can be easily relaxed.

\textbf{Experimental Settings.} {\color{black}To facilitate empirical discussions on our proposed algorithms in the remaining sections, we first introduce our experimental settings here. We implement our proposed method with a two-layer fully connected neural network on the standard MNIST dataset and VGG-9 \cite{lee2020enabling} on the CIFAR-10 dataset. For MNIST, we use a fixed learning rate, which is tuned from the set $\{1,0.1,0.01,0.005,0.003,0.001,0.0001\}$. For CIFAR-10, we tune the initial learning rate from the set $\{1,0.1,0.01,0.001,0.0001\}$, which is reduced by a factor of 2, 5, 10 and 20 at iteration 1,500, 3,000, 5,000 and 7,000, respectively. We consider a scenario of $M=31$ normal workers. To simulate the heterogeneous data distribution scenario, each worker only stores exclusive data for one out of the ten categories, unless otherwise noted. Besides, for MNIST, the workers evaluate their gradients over the whole local datasets during each communication round, while for CIFAR-10, the workers train their local models with a mini-batch size of 32. More details about the implementation can be found in the supplementary document.}

\section{Algorithms and Convergence Analysis}\label{Algorithms}

{\color{black}
In this section, we first derive a sufficient condition for the convergence of sign-based gradient descent method in the presence of data heterogeneity. For the ease of presentation, we first consider a scalar case, which can be readily generalized to the vector case by applying the results independently on each coordinate.

\begin{theorem}\label{Lemma1}
Let $u_{1},u_{2},\cdots,u_{M}$ be $M$ known and fixed real numbers and consider binary random variables $\hat{u}_{m}$, $1\leq m \leq M$. Suppose that $\Bar{p} = \frac{1}{M}\sum_{m=1}^{M}\Pr\left(sign\left(\frac{1}{M}\sum_{m=1}^{M}u_{m}\right) \neq \hat{u}_{m}\right) < \frac{1}{2}$, we have
\begin{equation}\label{ProbabilityOfError}
\begin{split}
P\bigg(sign\bigg(\frac{1}{M}\sum_{m=1}^{M}\hat{u}_{m}\bigg)&\neq sign\bigg(\frac{1}{M}\sum_{m=1}^{M}u_{m}\bigg)\bigg) \leq \big[4\Bar{p}(1-\Bar{p})\big]^{\frac{M}{2}},
\end{split}
\end{equation}
\end{theorem}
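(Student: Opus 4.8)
The plan is to recognize the left-hand side of \eqref{ProbabilityOfError} as a binomial-type tail probability and to control it with the Chernoff (exponential-moment) method. Write $s := sign\big(\frac{1}{M}\sum_{m=1}^{M}u_{m}\big)\in\{-1,+1\}$ and set $X_{m}:=\mathds{1}[\hat u_{m}\neq s]$, so each $X_{m}$ is Bernoulli with mean $p_{m}:=\Pr(\hat u_{m}\neq s)$ and $\bar p=\frac{1}{M}\sum_{m}p_{m}$. Since the compressed gradients of distinct workers are independent, the $\hat u_{m}$ — and hence the $X_{m}$ — are independent. Letting $X:=\sum_{m=1}^{M}X_{m}$ count the ``wrong votes'', a case check on the two possible values of $s$ (using $\hat u_{m}\in\{-1,+1\}$, which gives $\sum_{m}\hat u_{m}=s(M-2X)$) shows that the event $sign\big(\frac{1}{M}\sum_{m}\hat u_{m}\big)\neq s$ coincides with $\{X\geq M/2\}$; the tie $\sum_{m}\hat u_{m}=0$ is already contained in this event, which only enlarges it and so is harmless. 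It thus suffices to prove $\Pr(X\geq M/2)\leq[4\bar p(1-\bar p)]^{M/2}$.

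For the Chernoff step, fix $\theta>0$, apply Markov's inequality to $e^{\theta X}$, and factor the moment generating function using independence:
\[
\Pr(X\geq M/2)\leq e^{-\theta M/2}\prod_{m=1}^{M}\mathbb{E}\big[e^{\theta X_{m}}\big]=e^{-\theta M/2}\prod_{m=1}^{M}\big(1+p_{m}(e^{\theta}-1)\big).
\]
Each factor $1+p_{m}(e^{\theta}-1)$ is positive, so the AM--GM inequality gives $\prod_{m}\big(1+p_{m}(e^{\theta}-1)\big)\leq\big(1+\bar p(e^{\theta}-1)\big)^{M}$, whence $\Pr(X\geq M/2)\leq\big[e^{-\theta/2}\big((1-\bar p)+\bar p\,e^{\theta}\big)\big]^{M}$.

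It remains to optimize $h(\theta):=e^{-\theta/2}(1-\bar p)+\bar p\,e^{\theta/2}$ over $\theta>0$. Solving $h'(\theta)=0$ yields $e^{\theta}=(1-\bar p)/\bar p$, which is $>1$ — so $\theta^{*}>0$ is admissible — precisely because $\bar p<1/2$; substituting back gives $h(\theta^{*})=2\sqrt{\bar p(1-\bar p)}$. Therefore $\Pr(X\geq M/2)\leq\big[2\sqrt{\bar p(1-\bar p)}\big]^{M}=[4\bar p(1-\bar p)]^{M/2}$, which is \eqref{ProbabilityOfError}.

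The individual ingredients — Markov's inequality, factoring the MGF, AM--GM, and the one-variable minimization — are all routine; the steps that deserve care are the reduction of the aggregation-error event to $\{X\geq M/2\}$ (verifying it is clean for both signs of $s$ and that ties cause no loss) and making explicit that independence of the $\hat u_{m}$ is exactly what permits factoring $\mathbb{E}[e^{\theta X}]$. The hypothesis $\bar p<1/2$ is not merely cosmetic: it is what makes the minimizing $\theta^{*}$ positive and hence the bound nontrivial (indeed $4\bar p(1-\bar p)<1$ iff $\bar p\neq 1/2$).
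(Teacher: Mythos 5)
Your proof is correct and follows essentially the same route as the paper's: a Chernoff/Markov bound on the count of wrong votes, the product-to-power step (your AM--GM on $\prod_m(1+p_m(e^\theta-1))$ is exactly the paper's Jensen step applied to the logarithm), and the same optimal choice $e^{\theta^*}=(1-\bar p)/\bar p$ yielding $[4\bar p(1-\bar p)]^{M/2}$. Your version is slightly more careful than the paper's in two minor respects — stating explicitly that independence of the $\hat u_m$ is what licenses factoring the moment generating function, and checking that ties in the vote are absorbed into the bad event — but these are refinements of the same argument, not a different one.
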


\begin{Remark}
Let $u_{m} = \nabla f_{m}(w^{(t)})_{i}$ be the $i$-th coordinate of worker $m$'s true local gradient and $\hat{u}_{m} = sign(\boldsymbol{g}_{m}^{(t)})_{i}$ the $i$-th coordinate of the 1-bit estimator, $\big[4\Bar{p}(1-\Bar{p})\big]^{\frac{M}{2}} < 1/2$ is a suficient condition that the probability of wrong aggregation on the $i$-th coordinate is less than $1/2$, where $\Bar{p} = \frac{1}{M}\sum_{m=1}^{M}\Pr(sign(\frac{1}{M}\sum_{m=1}^{M}\nabla f_{m}(w^{(t)})_{i}) \neq sign(\boldsymbol{g}_{m}^{(t)})_{i})$ characterizes the impact of data heterogeneity. Essentially, as long as $\Bar{p} < 1/2$, there exists some $M$ such that the probability of wrong aggregation is less than $1/2$, and the convergence of the sign-based gradient descent method can be established.

We note that our work is different from \cite{safaryan2021stochastic} in three aspects: (1) we do not assume the same probability of wrong signs $\Pr(sign(\nabla f_{m}(w^{(t)})_{i}) \neq sign(\boldsymbol{g}_{m}^{(t)})_{i})$ across the workers; (2) instead of $\Pr(sign(\nabla f_{m}(w^{(t)})_{i}) \neq sign(\boldsymbol{g}_{m}^{(t)})_{i}) < 1/2, \forall m$, we only require the average probability of wrong signs $\Bar{p} < 1/2$. (3) we propose a stochastic-sign based compressor to overcome the non-convergence issue of {\scriptsize SIGN}SGD when $\Bar{p} \geq 1/2$. We emphasize that such a result is crucial in the heterogeneous data distribution scenario since the probability of wrong signs can be very different in this case.
\end{Remark}
}

{\color{black}In the above discussion, we show that {\scriptsize SIGN}SGD works for a sufficiently large $M$ given that the average probability of wrong signs $\Bar{p} < 1/2$. In the scenarios with more severe data heterogeneity where $\Bar{p} \geq 1/2$, however, its convergence is not guaranteed. In the following,} we propose two compressors $sto\text{-}sign$ and $dp\text{-}sign$ for the Stochastic-Sign SGD framework, which can deal with the heterogeneous data distribution scenario. The basic ideas of the two compressors are given as follows.
\begin{itemize}[topsep=0pt,parsep=0pt,partopsep=5pt]
    \item $sto\text{-}sign$: instead of directly sharing the signs of the gradients, $sto\text{-}sign$ first performs a two-level stochastic quantization and then transmits the signs of the quantized results.
    \item $dp\text{-}sign$: it is a differentially private version of $sto\text{-}sign$. The probability of each coordinate of the gradients mapping to $\{-1,1\}$ is designed to accommodate the local differential privacy requirements.
\end{itemize}


\subsection{The Stochastic Compressor $sto\text{-}sign$}\label{StoSIGNSGD}
Formally, the compressor $sto\text{-}sign$ is defined as follows.
\begin{Definition}
For any given gradient $\boldsymbol{g}_{m}^{(t)}$, the compressor $sto\text{-}sign$ outputs $sto\text{-}sign(\boldsymbol{g}_{m}^{(t)},\boldsymbol{b})$, where $\boldsymbol{b}$ is a vector of design parameters. The $i$-th entry of $sto\text{-}sign(\boldsymbol{g}_{m}^{(t)},\boldsymbol{b})$ is given by
\begin{equation}\label{stoprob}
sto\text{-}sign(\boldsymbol{g}_{m}^{(t)},\boldsymbol{b})_{i} =
\begin{cases}
\hfill 1, \hfill \text{with probability $\frac{b_{i}+(\boldsymbol{g}_{m}^{(t)})_{i}}{2b_{i}}$},\\
\hfill -1, \hfill \text{with probability $\frac{b_{i}-(\boldsymbol{g}_{m}^{(t)})_{i}}{2b_{i}}$},\\
\end{cases}
\end{equation}
where $(\boldsymbol{g}_{m}^{(t)})_{i}$ and $b_{i}\geq max_{m}|(\boldsymbol{g}_{m}^{(t)})_{i}|$ are the $i$-th entry of $\boldsymbol{g}_{m}^{(t)}$ and $\boldsymbol{b}$, respectively.
\end{Definition}


{\color{black}When $q(\boldsymbol{g}_{m}^{(t)}) = sign(\boldsymbol{g}_{m}^{(t)})$, the magnitude information of $\boldsymbol{g}_{m}^{(t)}$ is not utilized. As a result, $\Bar{p} < 1/2$ is not guaranteed. In the proposed compressor $sto\text{-}sign$, the magnitude information is encoded in the mapping probabilities in (\ref{stoprob}). By introducing the stochasticity, $sto\text{-}sign$ essentially makes use of the magnitude information (without incurring additional communication overhead) such that the probability of wrong aggregation can be theoretically bounded for an arbitrary realization of $\boldsymbol{g}_{m}^{(t)}$'s.

\begin{Corollary}\label{Corollary1}
Let $u_{1},u_{2},\cdots,u_{M}$ be $M$ known and fixed real numbers and consider binary random variables $\hat{u}_{m} = sto\text{-}sign(u_{m},b)$, $1\leq m \leq M$. We have $\Bar{p}_{sto} = \frac{1}{M}\sum_{m=1}^{M}\Pr\left(sign\left(\frac{1}{M}\sum_{m=1}^{M}u_{m}\right) \neq \hat{u}_{m}\right) = \frac{bM-|\sum_{m=1}^{M}u_{m}|}{2bM}$, and
\begin{equation}\label{ProbabilityOfError2}
\begin{split}
P\bigg(sign\bigg(\frac{1}{M}\sum_{m=1}^{M}\hat{u}_{m}\bigg)&\neq sign\bigg(\frac{1}{M}\sum_{m=1}^{M}u_{m}\bigg)\bigg) \leq \left(1-x^2\right)^{\frac{M}{2}},
\end{split}
\end{equation}
where $x = \frac{|\sum_{m=1}^{M}u_{m}|}{bM}$.
\end{Corollary}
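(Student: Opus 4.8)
The plan is to obtain Corollary~\ref{Corollary1} as an immediate consequence of Theorem~\ref{Lemma1}. Two things need to be established: (i) the closed form $\bar p_{sto} = \frac{bM-|\sum_{m} u_m|}{2bM}$, and (ii) the algebraic identity $4\bar p_{sto}(1-\bar p_{sto}) = 1-x^2$. Once these are in hand, the bound \eqref{ProbabilityOfError2} is exactly \eqref{ProbabilityOfError} specialized to $\bar p = \bar p_{sto}$.

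First I would record that the design constraint $b \ge \max_m |u_m|$ forces both probabilities in \eqref{stoprob} into $[0,1]$, so each $\hat u_m = sto\text{-}sign(u_m,b)$ is a well-defined $\{-1,+1\}$-valued random variable, and moreover $|\sum_m u_m| \le \sum_m|u_m| \le bM$, so $x \in [0,1]$. To compute $\bar p_{sto}$, assume first $\sum_{m=1}^M u_m > 0$, so $sign(\frac1M\sum_m u_m) = 1$; then $\Pr(\hat u_m \ne 1) = \Pr(\hat u_m = -1) = \frac{b-u_m}{2b}$, and averaging over $m$ gives $\bar p_{sto} = \frac{1}{M}\sum_m \frac{b-u_m}{2b} = \frac{bM-\sum_m u_m}{2bM} = \frac{bM-|\sum_m u_m|}{2bM}$. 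The case $\sum_m u_m < 0$ is symmetric: the aggregate sign is $-1$, one uses the complementary probability $\frac{b+u_m}{2b}$, and the same formula results. The degenerate case $\sum_m u_m = 0$ makes the right-hand side of \eqref{ProbabilityOfError2} equal to $1$, so the inequality is vacuous there and can be set aside. This computation also shows $\bar p_{sto} < \tfrac12$ whenever $|\sum_m u_m| > 0$, so the hypothesis of Theorem~\ref{Lemma1} is satisfied in the non-degenerate case.

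Next, with $x = \frac{|\sum_m u_m|}{bM}$ the formula reads $\bar p_{sto} = \frac{1-x}{2}$, hence $1-\bar p_{sto} = \frac{1+x}{2}$ and $4\bar p_{sto}(1-\bar p_{sto}) = (1-x)(1+x) = 1-x^2$. Substituting $\bar p = \bar p_{sto}$ into \eqref{ProbabilityOfError} yields $P\big(sign(\tfrac1M\sum_m \hat u_m) \ne sign(\tfrac1M\sum_m u_m)\big) \le [4\bar p_{sto}(1-\bar p_{sto})]^{M/2} = (1-x^2)^{M/2}$, which is precisely \eqref{ProbabilityOfError2}.

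There is no genuinely hard step: all the content lives in Theorem~\ref{Lemma1}, and the only points requiring care are the reduction to $\sum_m u_m \ge 0$ by symmetry of the compressor and the trivial handling of the boundary case $\sum_m u_m = 0$ (where the strict hypothesis $\bar p < \tfrac12$ just fails but the claimed bound degenerates to $1$). If a self-contained argument were desired one could instead bound the error probability directly via a Hoeffding/Chernoff estimate on $\sum_m \hat u_m$ using $\mathbb{E}[\hat u_m] = u_m/b$, but routing through Theorem~\ref{Lemma1} is cleaner and reuses the $4\bar p(1-\bar p)$ structure already established.
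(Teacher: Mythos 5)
Your proposal is correct and follows essentially the same route as the paper: compute $\Bar{p}_{sto}$ explicitly for the $sto\text{-}sign$ compressor and substitute it into the bound of Theorem~\ref{Lemma1}, using the identity $4\Bar{p}_{sto}(1-\Bar{p}_{sto}) = 1-x^2$. The paper's own proof is just a terser version of this ("it can be easily shown... plugging it into the bound completes the proof"), so your added details on the sign cases, the well-definedness under $b \geq \max_m |u_m|$, and the vacuous boundary case $\sum_m u_m = 0$ are fine elaborations rather than a different argument.
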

}

\begin{Remark}\label{Remark1}(\textbf{selection of} $\boldsymbol{b}$)
{\color{black} According to Corollary \ref{Corollary1}, the average probability of wrong signs $\Bar{p}_{sto} < \frac{1}{2}$ is guaranteed when $|\sum_{m}^{M}u_{m}| > 0$, which therefore addresses the non-convergence issue of {\scriptsize SIGN}SGD. Some discussions on the choice of the vector $\boldsymbol{b}$ in (\ref{stoprob}) are in order. We take the $i$-th entry of $\boldsymbol{b}$ as an example.  In the FL application, the $i$-th entry of the gradient $\boldsymbol{g}_{m}^{(t)}$ corresponds to $u_{m}$ in Corollary \ref{Corollary1}. According to the definition of $sto\text{-}sign$, $b_{i} \geq max_{m}|(\boldsymbol{g}_{m}^{(t)})_{i}|$ and $0 \leq x = |\sum_{m=1}^{M}(\boldsymbol{g}_{m}^{(t)})_{i}|/(b_{i}M) \leq 1$. On the other hand, $\left(1-x^2\right)$ in (\ref{ProbabilityOfError2}) is a decreasing function of $x$ (and therefore an increasing function of $b_{i}$) when $0 \leq x \leq 1$. In this sense, once the requirement $b_{i} \geq max_{m}|(\boldsymbol{g}_{m}^{(t)})_{i}|$ is satisfied, the probability of wrong aggregation can be bounded by (\ref{ProbabilityOfError2}), which becomes the tightest when the equality holds. Therefore, to minimize the probability of wrong aggregation, the best strategy is to select $b_{i} = max_{m}|(\boldsymbol{g}_{m}^{(t)})_{i}|$. In practice, since $max_{m}|(\boldsymbol{g}_{m}^{(t)})_{i}|$ is unknown, the selection of an appropriate $\boldsymbol{b}$ is an interesting problem deserving further investigation.

In our experiments, we examine the performance of $sto\text{-}sign$ with a fixed vector $\boldsymbol{b}$. Since the true gradients change during the training process, it is possible that $b_{i} < max_{m}|(\boldsymbol{g}_{m}^{(t)})_{i}|$ for some $i$. In such cases, the probabilities defined in (\ref{stoprob}) may fall out of the range $[0,1]$. We round them to 1 if they are positive and 0 otherwise.}
\end{Remark}



For the convergence analysis, we first consider the scenario in which all the workers are benign. The Byzantine resilience of $sto\text{-}sign$ and $dp\text{-}sign$ will be discussed in Section \ref{ByzantineAnalysis}. In addition, we assume that each worker evaluates the gradients over its whole local dataset for simplicity (i.e., $\boldsymbol{g}_{m}^{(t)} = \nabla f_{m}(w^{(t)}), \forall 1 \leq m \leq M$). Particularly, in federated learning, the workers usually compute $\nabla f_{m}(w^{(t)})$ due to the small size of the local dataset. The discussion about stochastic gradients is presented in Section \ref{SGD}. The proofs of the theoretical results are provided in Section 2 of the supplementary document.

\begin{theorem}\label{convergerate}
\color{black}
Suppose Assumptions \ref{A1}, \ref{A2} and \ref{A3} are satisfied, and the learning rate is set as $\eta=\frac{1}{\sqrt{Td}}$. Then by running Algorithm \ref{QuantizedSIGNSGD} with $q(\boldsymbol{g}_{m}^{(t)}) = sto\text{-}sign(\nabla f_{m}(w^{(t)}),\boldsymbol{b})$ (termed as {\scriptsize Sto-SIGN}SGD) for $T$ iterations, we have
\begin{equation}\label{ConvergenceEquation}
\begin{split}
\frac{1}{T}\sum_{t=1}^{T}||\nabla F(w^{(t)})||_{1} &\leq
\frac{1}{c}\bigg[\frac{\mathbb{E}[F(w^{(0)}) - F(w^{(T+1)})]\sqrt{d}}{\sqrt{T}} + \frac{L\sqrt{d}}{2\sqrt{T}}+ \frac{2}{T}\sum_{t=1}^{T}\sum_{i=1}^{d}|\nabla F(w^{(t)})_i|\mathds{1}_{p_{i}^{(t)} > \frac{1-c}{2}}\bigg]\\
&\leq \frac{1}{c}\bigg[\frac{(F(w^{(0)}) - F^{*})\sqrt{d}}{\sqrt{T}} + \frac{L\sqrt{d}}{2\sqrt{T}} + 2\sum_{i=1}^{d}b_{i}\Delta(M)\bigg],
\end{split}
\end{equation}
where $0<c<1$ is some positive constant, $p_{i}^{(t)}$ is the probability that the aggregation on the $i$-coordinate of the gradient is wrong during the $t$-th communication round, and $\Delta(M)$ is the solution to $\big(1-x^2\big)^{\frac{M}{2}} = \frac{1-c}{2}$. The second inequality is due to the fact that $p_{i}^{(t)} > \frac{1-c}{2}$ only if $\frac{|\nabla F(w^{(t)})_i|}{b_{i}}\leq \Delta(M)$.
\end{theorem}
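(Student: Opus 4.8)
The plan is to run the textbook smoothness‑plus‑telescoping argument for sign‑type methods, routing the compressor randomness through Corollary~\ref{Corollary1} coordinate by coordinate. First I would apply Assumption~\ref{A2} to $w^{(t)}$ and $w^{(t+1)} = w^{(t)} - \eta\tilde{\boldsymbol{g}}^{(t)}$. Because every coordinate of $\tilde{\boldsymbol{g}}^{(t)}$ is $\pm 1$ (Assumption~\ref{A3} rules out ties in the majority vote, so $sign(\cdot)$ is always well defined here), we have $\|w^{(t+1)}-w^{(t)}\|_2^2 = \eta^2 d$ and therefore
\[
F(w^{(t+1)}) \le F(w^{(t)}) - \eta\langle \nabla F(w^{(t)}), \tilde{\boldsymbol{g}}^{(t)}\rangle + \frac{L\eta^2 d}{2}.
\]

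Next I would take the conditional expectation given $w^{(t)}$ over the randomness of $sto\text{-}sign$. The key identification is that, on coordinate $i$, the numbers $\nabla f_m(w^{(t)})_i$ play the role of $u_m$ in Corollary~\ref{Corollary1}, so that $\frac{1}{M}\sum_m u_m = \nabla F(w^{(t)})_i$ and the ``wrong aggregation'' event there is exactly $\tilde{g}^{(t)}_i \ne sign(\nabla F(w^{(t)})_i)$. Writing $p_i^{(t)}$ for the probability of this event, a one‑line computation gives $\mathbb{E}[\nabla F(w^{(t)})_i\,\tilde g^{(t)}_i \mid w^{(t)}] = |\nabla F(w^{(t)})_i|\,(1-2p_i^{(t)})$, hence
\[
\mathbb{E}[F(w^{(t+1)}) \mid w^{(t)}] \le F(w^{(t)}) - \eta\sum_{i=1}^{d} |\nabla F(w^{(t)})_i|\,(1-2p_i^{(t)}) + \frac{L\eta^2 d}{2}.
\]

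Then I would split the coordinates at the threshold $\tfrac{1-c}{2}$. Writing $1-2p_i^{(t)} = c + (1-c-2p_i^{(t)})$, on the ``good'' coordinates where $p_i^{(t)} \le \tfrac{1-c}{2}$ the residual bracket is nonnegative, so $1-2p_i^{(t)} \ge c$; on the ``bad'' coordinates I only use $p_i^{(t)} \le 1$ to bound $-(1-c-2p_i^{(t)}) \le 2$. Substituting back and taking total expectation, telescoping over $t=1,\dots,T$ leaves a descent inequality whose negative term is $-\eta c\sum_t \|\nabla F(w^{(t)})\|_1$ and whose positive correction is $2\eta\sum_t\sum_i |\nabla F(w^{(t)})_i|\mathds{1}_{p_i^{(t)} > (1-c)/2}$; dividing by $\eta c T$ and setting $\eta = 1/\sqrt{Td}$ yields the first inequality in~(\ref{ConvergenceEquation}). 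For the second, I would invoke Corollary~\ref{Corollary1}: $p_i^{(t)} \le (1-x^2)^{M/2}$ with $x = |\nabla F(w^{(t)})_i|/b_i \in [0,1]$, and since $(1-x^2)^{M/2}$ is strictly decreasing on $[0,1]$ and equals $\tfrac{1-c}{2}$ at $x = \Delta(M)$, the inequality $p_i^{(t)} > \tfrac{1-c}{2}$ forces $x \le \Delta(M)$, i.e. $|\nabla F(w^{(t)})_i|\mathds{1}_{p_i^{(t)} > (1-c)/2} \le b_i\Delta(M)$; plugging this in together with $F(w^{(T+1)}) \ge F^{*}$ (Assumption~\ref{A1}) gives the final bound.

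The descent lemma and the telescoping are routine. The one place that needs care is the second step: recognizing that the probability supplied by Corollary~\ref{Corollary1} (with $u_m = \nabla f_m(w^{(t)})_i$, so that the reference sign is $sign(\nabla F(w^{(t)})_i)$) is precisely what governs the expected descent $\mathbb{E}\langle \nabla F(w^{(t)}), \tilde{\boldsymbol{g}}^{(t)}\rangle$, and then calibrating the split at $\tfrac{1-c}{2}$ so that the good coordinates contribute a genuine $-\eta c\|\nabla F(w^{(t)})\|_1$ term while the bad ones are absorbed uniformly into $\sum_i b_i\Delta(M)$ via the monotonicity of $(1-x^2)^{M/2}$. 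I expect the expectation bookkeeping — the compressor noise at iteration $t$ versus the expectation over the whole trajectory $\{w^{(t)}\}$ — to be the easiest place to be sloppy, though it is not a genuine difficulty.
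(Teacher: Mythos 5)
Your proposal is correct and follows essentially the same route as the paper's proof: the smoothness descent step with $\|w^{(t+1)}-w^{(t)}\|_2^2=\eta^2 d$, the identity $\mathbb{E}[\langle\nabla F(w^{(t)}),\tilde{\boldsymbol{g}}^{(t)}\rangle]=\sum_i|\nabla F(w^{(t)})_i|(1-2p_i^{(t)})$, the coordinate split at the threshold $\tfrac{1-c}{2}$, telescoping with $\eta=1/\sqrt{Td}$, and the final bound $|\nabla F(w^{(t)})_i|\le b_i\Delta(M)$ on the bad coordinates via the monotonicity of $(1-x^2)^{M/2}$ from Corollary~\ref{Corollary1}. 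If anything, your explicit conditional-expectation bookkeeping is slightly cleaner than the paper's.
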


Given the results in Theorem \ref{Lemma1}, the proof of Theorem \ref{convergerate} follows the well known strategy of relating the norm of the gradient to the expected improvement of the global objective in a single iteration. Then accumulating the improvement over the iterations yields the convergence rate of the algorithm.

\begin{Remark}
\color{black}
Similar to {\scriptsize SIGN}SGD, the convergence rate of {\scriptsize Sto-SIGN}SGD depends on the $L_1$-norm of the gradient. A detailed discussion on this feature can be found in \cite{bernstein2018signsgd1}. Note that compared to the convergence rate of {\scriptsize SIGN}SGD, there are two differences: the positive coefficient $c<1$ and the gap term $2\sum_{i=1}^{d}b_{i}\Delta(M)$. It can be verified that $\Delta(M)$ is a decreasing function of $M$ and $\lim_{M \rightarrow \infty}\Delta(M) = 0$ for any $c<1$, which suggests that the convergence rate of {\scriptsize Sto-SIGN}SGD in the heterogeneous data distribution scenario approaches that of {\scriptsize SIGN}SGD with homogeneous data distribution as the number of workers increases.
\end{Remark}

{\color{black}
We note that the last term in (\ref{ConvergenceEquation}) captures the gap induced by the scenarios where the probability of wrong aggregation is larger than $\frac{1-c_{i}}{2}$, which vanishes as $M$ grows to infinity. One possible concern is that, given a finite $M$, this term may be unbounded for large $b_{i}$'s.} {In the following, we introduce two scenarios where this term can be eliminated given finite $M$.

\subsubsection{Scenario 1: Large enough $b_{i}$'s}
Essentially, the following theorem can be proved in this case.
\begin{theorem}\label{bsufficientlylarge}
Given Assumption \ref{A3} and the same $\{u_{m}\}_{m=1}^{M}$ and $\{\hat{u}_{m}\}_{m=1}^{M}$ as those in Theorem \ref{Lemma1}, we have  
\begin{equation}\label{T3EQ}
\begin{split}
P\bigg(sign\bigg(\frac{1}{M}\sum_{m=1}^{M}\hat{u}_{m}\bigg)\neq sign\bigg(\frac{1}{M}\sum_{m=1}^{M}u_{m}\bigg)\bigg) = \frac{1}{2} - \frac{{M-1 \choose \frac{M-1}{2}}}{2^{M}b}\bigg|\sum_{m=1}^{M}u_{m}\bigg| + O\bigg(\frac{1}{b^2}\bigg).
\end{split}
\end{equation}
\end{theorem}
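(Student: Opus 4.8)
The plan is to write the left-hand side of (\ref{T3EQ}) exactly as a polynomial in the Bernoulli parameters of the $sto\text{-}sign$ outputs and then Taylor-expand it in powers of $1/b$. First I would reformulate the event: writing $\hat{u}_{m} = sto\text{-}sign(u_{m},b) = 2X_{m} - 1$, where the $X_{m}$ are independent with $X_{m} \sim \mathrm{Bernoulli}(p_{m})$ and $p_{m} = \tfrac{1}{2} + \tfrac{u_{m}}{2b}$ by (\ref{stoprob}), and setting $S = \sum_{m=1}^{M} X_{m}$. Since $M$ is odd (Assumption \ref{A3}), $sign\big(\tfrac{1}{M}\sum_{m}\hat{u}_{m}\big) = sign(2S - M)$ is never ambiguous. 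Assuming without loss of generality that $\sum_{m} u_{m} \geq 0$ --- the case $\sum_{m} u_{m} < 0$ follows from the symmetry $u_{m} \mapsto -u_{m}$, which replaces each $\hat{u}_{m}$ by $-\hat{u}_{m}$ in distribution and flips both signs in (\ref{T3EQ}) simultaneously, while if $\sum_{m} u_{m} = 0$ the asserted first-order term already vanishes --- a wrong aggregation occurs exactly when $S \leq \tfrac{M-1}{2}$. Hence the quantity to estimate is $f(\boldsymbol{p}) := \Pr\big(S \leq \tfrac{M-1}{2}\big)$, regarded as a function of $\boldsymbol{p} = (p_{1},\dots,p_{M})$ and evaluated at $p_{m} = \tfrac{1}{2} + \tfrac{u_{m}}{2b}$.

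Next I would note that $f$ is a polynomial in $\boldsymbol{p}$ that is affine in each coordinate separately (hence of total degree at most $M$), and Taylor-expand it about $\boldsymbol{p}^{0} = (\tfrac{1}{2},\dots,\tfrac{1}{2})$. The zeroth-order term is $f(\boldsymbol{p}^{0}) = \Pr\big(\mathrm{Bin}(M,\tfrac{1}{2}) \leq \tfrac{M-1}{2}\big) = \tfrac{1}{2}$, which holds because $M$ odd makes the symmetric binomial split its mass equally between $\{0,\dots,\tfrac{M-1}{2}\}$ and $\{\tfrac{M+1}{2},\dots,M\}$. For the first-order term, conditioning on $X_{j}$ gives $f(\boldsymbol{p}) = p_{j}\,\Pr\big(S_{-j} \leq \tfrac{M-1}{2}-1\big) + (1-p_{j})\,\Pr\big(S_{-j} \leq \tfrac{M-1}{2}\big)$ with $S_{-j} = \sum_{m \neq j} X_{m}$, so $\partial f / \partial p_{j} = -\Pr\big(S_{-j} = \tfrac{M-1}{2}\big)$, which at $\boldsymbol{p}^{0}$ equals $-\binom{M-1}{(M-1)/2} 2^{-(M-1)}$ for every $j$. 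Summing against $p_{j} - \tfrac{1}{2} = \tfrac{u_{j}}{2b}$ yields the first-order contribution $-\dfrac{\binom{M-1}{(M-1)/2}}{2^{M-1}} \cdot \dfrac{\sum_{m} u_{m}}{2b} = -\dfrac{\binom{M-1}{(M-1)/2}}{2^{M} b} \sum_{m} u_{m}$, which is exactly the middle term of (\ref{T3EQ}) once $\sum_{m} u_{m}$ is rewritten as $|\sum_{m} u_{m}|$ under the standing assumption.

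Finally I would control the remainder. Every second- and higher-order term of the expansion is a coefficient depending only on $M$ (a partial derivative of the degree-$\leq M$ polynomial $f$ at $\boldsymbol{p}^{0}$, hence bounded) times a monomial in the displacements $p_{j} - \tfrac{1}{2} = \tfrac{u_{j}}{2b}$ of total degree at least $2$; since the definition of $sto\text{-}sign$ requires $b \geq \max_{m}|u_{m}|$, these displacements are $O(1/b)$, so the entire tail is $O(1/b^{2})$ with a constant depending only on $M$ and the fixed numbers $u_{1},\dots,u_{M}$ (for instance through $\sum_{m} u_{m}^{2}$). Collecting the three pieces gives (\ref{T3EQ}). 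I expect the argument to be short; the only delicate points are bookkeeping ones --- making explicit that the expansion is meant in the regime $b \to \infty$ with the $u_{m}$'s held fixed (so that $b \geq \max_{m}|u_{m}|$ holds and the $O(1/b^{2})$ constant is finite), and cleanly handling the reduction to $\sum_{m} u_{m} \geq 0$, including the degenerate case $\sum_{m} u_{m} = 0$ --- while the combinatorial core, identifying the zeroth- and first-order coefficients by conditioning on a single coordinate, is the step I would carry out most carefully.
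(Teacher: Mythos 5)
Your proposal is correct, and it reaches the same asymptotic expansion as the paper --- zeroth-order term $\tfrac{1}{2}$ from the symmetric binomial with $M$ odd, first-order term $-\binom{M-1}{(M-1)/2}2^{-M}b^{-1}|\sum_m u_m|$, remainder $O(1/b^2)$ --- but by a genuinely different computation of the linear coefficient. The paper writes $P(\hat{Z}=H)$ explicitly as $\frac{1}{(2b)^M}\sum_{A\in F_H}\prod_{i\in A}(b+u_i)\prod_{j\in A^c}(b-u_j)$, extracts the coefficients of $b^{M}$ and $b^{M-1}$ by combinatorial bookkeeping over subsets, and then sums over $H\geq\frac{M+1}{2}$; you instead observe that $f(\boldsymbol{p})=\Pr(S\leq\frac{M-1}{2})$ is multilinear in the Bernoulli parameters and obtain the gradient in closed form via the conditioning identity $\partial f/\partial p_j=-\Pr(S_{-j}=\frac{M-1}{2})$, which at $\boldsymbol{p}^0=(\tfrac12,\dots,\tfrac12)$ is the same for every $j$ and immediately produces the factor $\binom{M-1}{(M-1)/2}2^{-(M-1)}$. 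Your route buys two things: it avoids the subset-sum coefficient extraction entirely, and --- because $f$ is a multilinear polynomial whose Taylor expansion about $\boldsymbol{p}^0$ terminates --- it makes the $O(1/b^2)$ remainder bound rigorous with a constant depending only on $M$ and the fixed $u_m$'s, whereas the paper simply asserts that the top two polynomial terms "dominate" for large $b$. You are also more careful than the paper about the reduction to $\sum_m u_m\geq 0$ (via the distributional symmetry $sto\text{-}sign(-u,b)\overset{d}{=}-sto\text{-}sign(u,b)$) and about the degenerate case $\sum_m u_m=0$, both of which the paper passes over silently when it converts $\sum_m u_m$ into $|\sum_m u_m|$ in the final line. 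The only implicit hypothesis you share with the paper is that the $\hat{u}_m$ are \emph{independent} $sto\text{-}sign(u_m,b)$ outputs (the theorem's phrasing "the same $\{\hat{u}_m\}$ as in Theorem \ref{Lemma1}" leaves this to be inferred), which you state explicitly; this is the intended reading.
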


{\color{black}According to Theorem \ref{bsufficientlylarge}, the probability of wrong aggregation is strictly smaller than $\frac{1}{2}$ when $|\sum_{m=1}^{M}u_{m}|>0$ and $b$ is sufficiently large such that the second term in (\ref{T3EQ}) dominates. That being said, there always exists a positive constant $c$ such that the probability of wrong aggregation is no larger than $\frac{1-c}{2}$, and therefore the last term in (\ref{ConvergenceEquation}) can be eliminated. In particular, if we select $b_{i}=T^{1/4}d^{1/4}$, the following theorem can be proved.

\begin{theorem}\label{convergeratelargeb}
Suppose Assumptions \ref{A1}, \ref{A2} and \ref{A3} are satisfied, $|\nabla F(w^{(t)})_{i}| \leq Q, \forall 1\leq i \leq d, 1\leq t \leq T$, and the learning rate is set as $\eta=\frac{1}{\sqrt{Td}}$. Then by running Algorithm 1 with $q(\boldsymbol{g}_{m}^{(t)}) = sto\text{-}sign(\nabla f_{m}(w^{(t)}),\boldsymbol{b})$ and $b_{i}=T^{1/4}d^{1/4}, \forall i$ for $T$ iterations, we have
\begin{equation}
\begin{split}
&\frac{1}{T}\sum_{t=1}^{T}\sum_{i=1}^{d}|\nabla F(w^{(t)})_{i}|^2 \\
&\leq \frac{2^{M}}{2M{M-1 \choose \frac{M-1}{2}}}\bigg[\frac{(F(w^{(0)})-F^{*})d^{3/4}}{T^{1/4}} + \frac{Ld^{3/4}}{2T^{1/4}} + \frac{2}{T}\sum_{t=1}^{T}\sum_{i=1}^{d}|\nabla F(w^{(t)})_{i}|O\bigg(\frac{1}{T^{1/4}d^{1/4}}\bigg)\bigg]\\
&\leq \frac{\sqrt{2\pi}(M-1)^{\frac{3}{2}}}{2(M^{2}-3M)}\bigg[\frac{(F(w^{(0)})-F^{*})d^{3/4}}{T^{1/4}} + \frac{Ld^{3/4}}{2T^{1/4}} + \frac{2}{T}\sum_{t=1}^{T}\sum_{i=1}^{d}|\nabla F(w^{(t)})_{i}|O\bigg(\frac{1}{T^{1/4}d^{1/4}}\bigg)\bigg],
\end{split}
\end{equation}
which further captures the impact of $M$ (i.e., $\frac{\sqrt{2\pi}(M-1)^{\frac{3}{2}}}{2(M^{2}-3M)} \leq O(\frac{1}{\sqrt{M}})$) compared to \cite{chen2019distributed}.
\end{theorem}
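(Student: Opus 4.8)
The plan is to run the same descent-lemma-and-telescope argument used for Theorem~\ref{convergerate}, but to feed in the sharp first-order expansion of the mis-aggregation probability from Theorem~\ref{bsufficientlylarge} in place of the exponential bound of Theorem~\ref{Lemma1}. This is precisely what upgrades the $L_1$ conclusion of Theorem~\ref{convergerate} to the $L_2^2$ statement with an explicit $M$-dependence, because near a stationary point the true error probability behaves like $\tfrac12 - \Theta(|\nabla F_i|/b)$ rather than like a fixed constant below $\tfrac12$.

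First I would write the one-step bound. Conditioning on $w^{(t)}$, applying Assumption~\ref{A2} with $w_1=w^{(t+1)}=w^{(t)}-\eta\tilde{\boldsymbol{g}}^{(t)}$ and $w_2=w^{(t)}$, and taking expectations over the full history gives
\begin{equation*}
\mathbb{E}[F(w^{(t+1)})] \le \mathbb{E}[F(w^{(t)})] - \eta\,\mathbb{E}\big[\langle\nabla F(w^{(t)}),\tilde{\boldsymbol{g}}^{(t)}\rangle\big] + \tfrac{L\eta^2}{2}\,\mathbb{E}\big[\|\tilde{\boldsymbol{g}}^{(t)}\|_2^2\big].
\end{equation*}
Since every coordinate of $\tilde{\boldsymbol{g}}^{(t)}$ is $\pm1$ (Assumption~\ref{A3} rules out ties in the majority vote), $\|\tilde{\boldsymbol{g}}^{(t)}\|_2^2=d$ deterministically, so the last term is $L\eta^2 d/2$. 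Writing $p_i^{(t)}$ for the probability that coordinate $i$ is mis-aggregated, $\mathbb{E}[\tilde{\boldsymbol{g}}^{(t)}_i]=(1-2p_i^{(t)})\,\mathrm{sign}(\nabla F(w^{(t)})_i)$, so the middle term equals $-\eta\sum_{i=1}^d|\nabla F(w^{(t)})_i|\,(1-2p_i^{(t)})$.

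Next I would substitute Theorem~\ref{bsufficientlylarge}. Because each worker uses its full local gradient, $\boldsymbol{g}_m^{(t)}=\nabla f_m(w^{(t)})$, so with $u_m=\nabla f_m(w^{(t)})_i$ we have $|\sum_m u_m|=M|\nabla F(w^{(t)})_i|$ and, since $b_i=(Td)^{1/4}=:b$,
\begin{equation*}
1-2p_i^{(t)} = \frac{2M{M-1 \choose \frac{M-1}{2}}}{2^{M} b}\,|\nabla F(w^{(t)})_i| + O\!\Big(\tfrac{1}{b^2}\Big),
\end{equation*}
where the bound $|\nabla F(w^{(t)})_i|\le Q$ is what makes the remainder uniform in $t$ and $i$. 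Plugging this in, summing over $i$, telescoping over the $T$ iterations, and using $\mathbb{E}[F(w^{(T+1)})]\ge F^*$ (Assumption~\ref{A1}) to collapse the left side to $F(w^{(0)})-F^*$, I then divide by $T$ and by the coefficient $\tfrac{2\eta M{M-1\choose (M-1)/2}}{2^{M} b}$. With $\eta=1/\sqrt{Td}$ and $b=(Td)^{1/4}$ the three resulting rate factors simplify to $\tfrac{b}{\eta T}=\tfrac{d^{3/4}}{T^{1/4}}$, $b\eta d=\tfrac{d^{3/4}}{T^{1/4}}$, and $\tfrac1b=\tfrac{1}{T^{1/4}d^{1/4}}$, which produces the first displayed inequality. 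The second inequality is bookkeeping: a Stirling lower bound ${M-1\choose (M-1)/2}\ge \tfrac{2^{M-1}}{\sqrt{\pi(M-1)/2}}\bigl(1-O(1/M)\bigr)$ yields $\tfrac{2^{M}}{2M{M-1\choose (M-1)/2}}\le \tfrac{\sqrt{2\pi}(M-1)^{3/2}}{2(M^{2}-3M)}$, and the right-hand side is plainly $O(1/\sqrt{M})$.

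The main obstacle is the uniformity of the $O(1/b^2)$ remainder coming out of Theorem~\ref{bsufficientlylarge}: its hidden constant depends on the size of $\sum_m u_m$ and of the per-worker coordinates $u_m$ relative to $b$, so the boundedness hypothesis $|\nabla F(w^{(t)})_i|\le Q$ — together with keeping $b_i\ge \max_m|(\boldsymbol{g}_m^{(t)})_i|$ so that $sto\text{-}sign$ remains well defined at every iteration (cf.\ Remark~\ref{Remark1}) — must be invoked carefully to make the expansion valid simultaneously across all $t$ and $i$; everything else is the routine descent-and-telescope computation.
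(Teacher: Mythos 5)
Your proposal follows essentially the same route as the paper's proof: the standard descent-lemma one-step bound, substitution of the first-order expansion of the mis-aggregation probability from Theorem~\ref{bsufficientlylarge} (which cancels the $\frac{1}{2}$ against $-\eta\|\nabla F(w^{(t)})\|_1$ and leaves the $\sum_i |\nabla F(w^{(t)})_i|^2/b_i$ term), telescoping, plugging in $\eta=1/\sqrt{Td}$ and $b_i=(Td)^{1/4}$, and finishing with the Stirling-type bound on $\binom{M-1}{(M-1)/2}$. Your remark about needing $|\nabla F(w^{(t)})_i|\le Q$ to keep the $O(1/b^2)$ remainder uniform over $t$ and $i$ is a valid point of care that the paper's own write-up glosses over.
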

}


\subsubsection{Scenario 2: Bounded gradient dissimilarity}
We note that Theorem \ref{convergerate} does not require any assumptions on the data heterogeneity across the clients. As a result, it is possible that $\frac{|\nabla F(w^{(t)})_i|}{b_{i}}\leq \frac{|\nabla F(w^{(t)})_i|}{max_{m}|\nabla f_{m}(w^{(t)})|} \leq \Delta(M)$ as $|\nabla F(w^{(t)})_i|$ decreases in the training process, which may lead to $p_{i}^{(t)} > \frac{1-c}{2}$ given the bound in (\ref{ProbabilityOfError}). With such consideration, we can lower bound $\frac{|\nabla F(w^{(t)})_i|}{max_{m}|\nabla f_{m}(w^{(t)})|}$ with the following coordinate-wise bounded gradient dissimilarity assumption.
\begin{Assumption}\label{assumption:gradient_similarity}
    (Bounded Gradient Dissimilarity)
    \begin{equation}
      |\nabla f_m(\boldsymbol{w})_{i}| \leq B|\nabla F(\boldsymbol{w})_{i}|, \forall\boldsymbol{w} \in \mathbb{R}^d, 1 \leq i \leq d, m \in [1,M].
    \end{equation}
\end{Assumption}
Let $b_{i} = \max_{m}|\nabla f_{m}(\boldsymbol{w})_{i}|$, we have
\begin{equation}
\begin{split}
P\bigg(sign\bigg(\frac{1}{M}\sum_{m=1}^{M}sto\text{-}sign(\boldsymbol{g}_{m}^{(t)},\boldsymbol{b})\bigg)_{i}\neq sign\bigg(\frac{1}{M}\sum_{m=1}^{M}\boldsymbol{g}_{m}^{(t)}\bigg)_{i}\bigg) \leq \bigg(1-\frac{1}{B^2}\bigg)^{\frac{M}{2}}.
\end{split}
\end{equation}
In this case, for any large enough $M$ such that $\big(1-\frac{1}{B^2}\big)^{\frac{M}{2}} < \frac{1}{2}$, we can find the corresponding $c$ such that $\big(1-\frac{1}{B^2}\big)^{\frac{M}{2}} = \frac{1-c}{2}$ and the last term in (\ref{ConvergenceEquation}) is eliminated.
\begin{Remark}
We note that Assumption \ref{assumption:gradient_similarity} can be understood as a stronger coordinate-wise version of the local dissimilarity assumption in \cite{li2018federated} which assumes that $\frac{1}{M}\sum_{m=1}^{M}||\nabla f_{m}(\boldsymbol{w})||^2 \leq B^2||\nabla F(\boldsymbol{w})||^2, \forall \boldsymbol{w}$.
\end{Remark}

}
\begin{figure}
\centering
\begin{subfigure}{\includegraphics[width=0.45\textwidth]{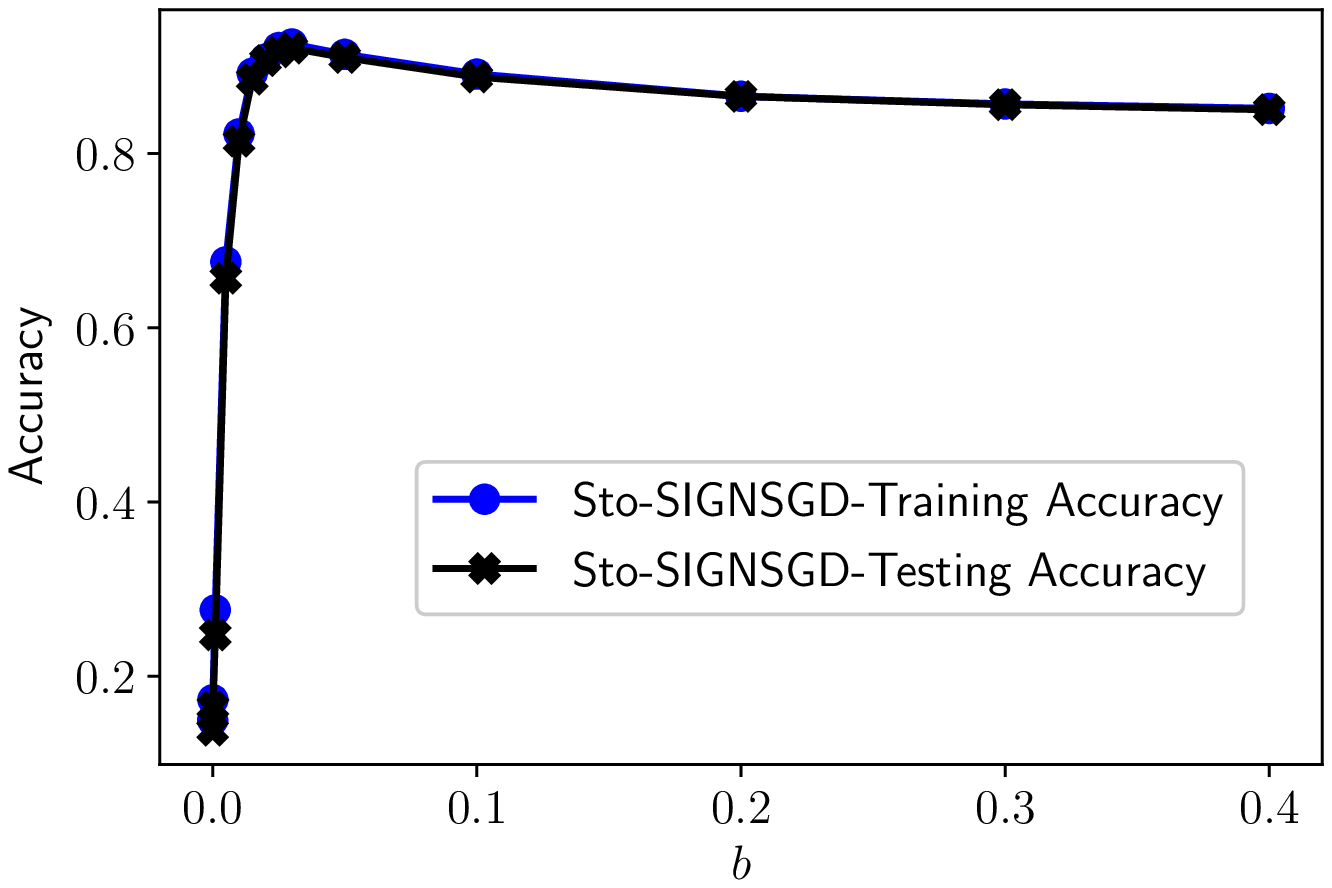}}
\end{subfigure}
\begin{subfigure}{\includegraphics[width=0.45\textwidth]{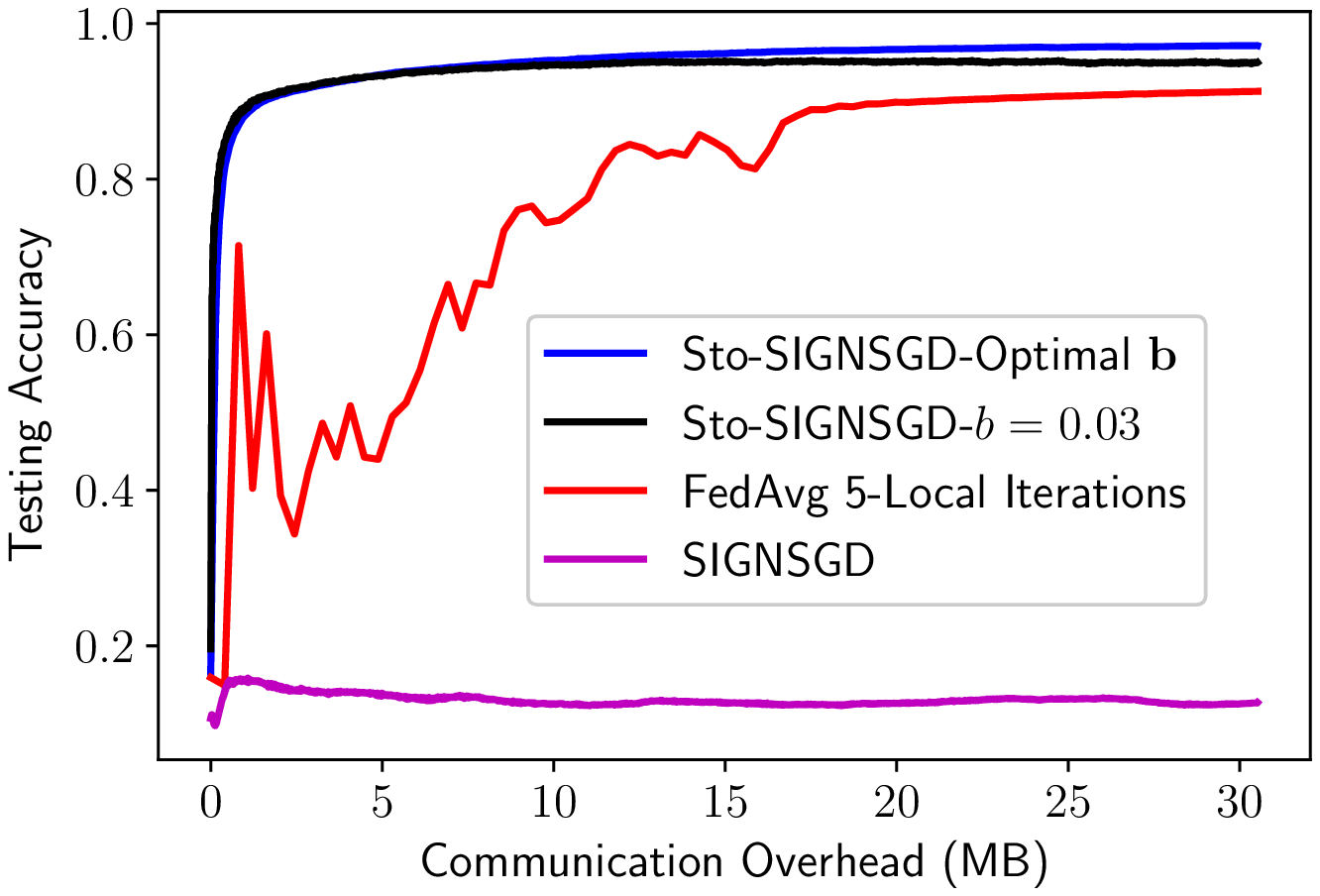}}
\end{subfigure} \\
\begin{subfigure}{\includegraphics[width=0.45\textwidth]{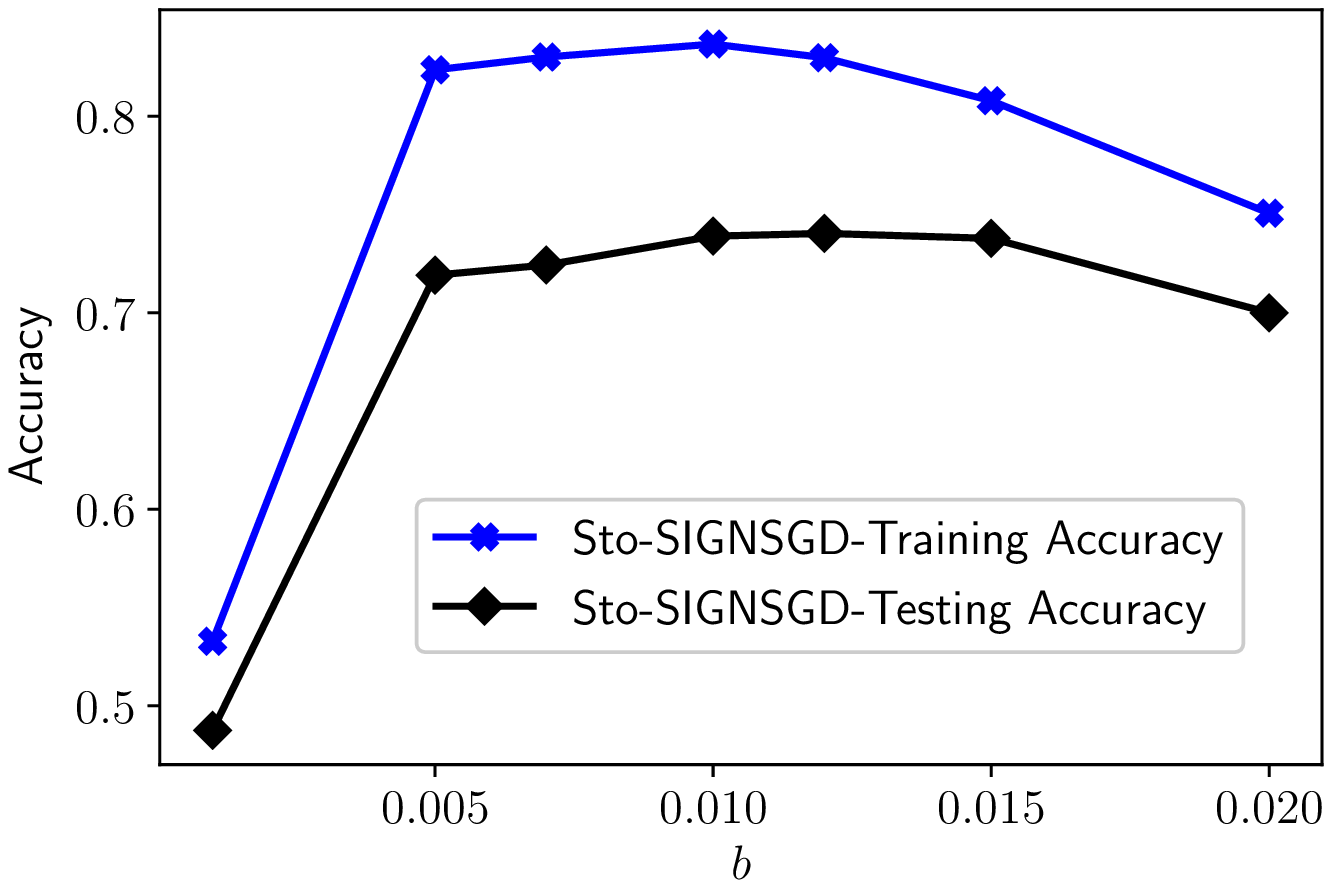}}
\end{subfigure}
\begin{subfigure}{\includegraphics[width=0.45\textwidth]{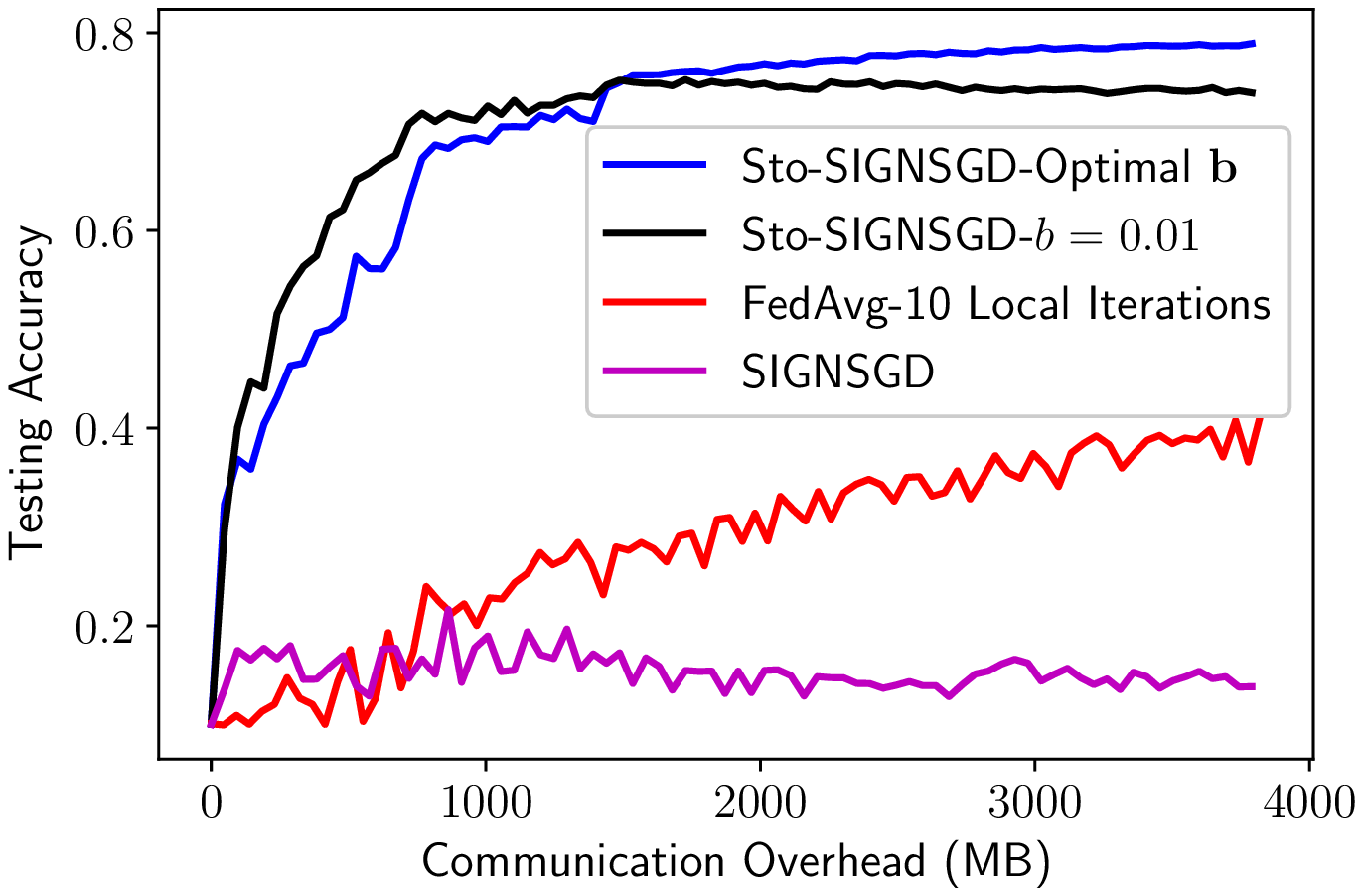}}
\end{subfigure}%
\vspace{-0.2cm}
\caption{\color{black}The two figures in the first and the second rows show the performance of {\scriptsize Sto-SIGN}SGD on MNIST and CIFAR-10, respectively. All the presented results are averaged over 5 repeats. The first column shows the training and the testing accuracy of {\scriptsize Sto-SIGN}SGD for different $\boldsymbol{b}=b\cdot\boldsymbol{1}$. We run 200 and 8,000 communication rounds for MNIST and CIFAR-10, respectively. The second column compares the testing accuracy of {\scriptsize Sto-SIGN}SGD with {\scriptsize SIGN}SGD and FedAvg \cite{mcmahan2017communication} with respect to the total communication overhead. FedAvg uses a learning rate decay of 0.99 and 0.996 per communication round for MNIST and CIFAR-10, respectively. We tune the number of local iterations from the set \{1, 5, 10, 20\} and present the results with the best final testing accuracy.}
\label{sto_impact_b}
\vspace{-0.4cm}
\end{figure}
\textbf{Experimental results.} {\color{black}We perform experiments to examine the learning performance of {\scriptsize Sto-SIGN}SGD for different selection of $\boldsymbol{b}$. Throughout our experiments, in the fixed $\boldsymbol{b}$ scenarios, we set $\boldsymbol{b} = b\cdot \boldsymbol{1}$ for some positive constant $b$. For ``Optimal $\boldsymbol{b}$", we set $b_{i} = max_{m}|(\boldsymbol{g}_{m}^{(t)})_{i}|, \forall i$. The results are shown in Figure \ref{sto_impact_b}. It can be observed that for fixed $\boldsymbol{b}$, $b$ first should be large enough to optimize the performance. Then, as $b$ keeps increasing, both the training accuracy and the testing accuracy decrease, which corroborates our analysis above. Furthermore, for a given total communication overhead, {\scriptsize Sto-SIGN}SGD with a fixed $\boldsymbol{b}$ achieves a higher testing accuracy than FedAvg (especially when the allowed communication overhead is small) and {\scriptsize SIGN}SGD and approaches that with the optimal $\boldsymbol{b}$, which demonstrates its effectiveness.}

\subsection{The Differentially Private Compressor $dp\text{-}sign$}
In this subsection, we present the differentially private version of $sto\text{-}sign$. Formally, the compressor $dp\text{-}sign$ is defined as follows.
\begin{Definition}
For any given gradient $\boldsymbol{g}_{m}^{(t)}$, the compressor $dp\text{-}sign$ outputs $dp\text{-}sign(\boldsymbol{g}_{m}^{(t)},\epsilon,\delta)$. The $i$-th entry of $dp\text{-}sign(\boldsymbol{g}_{m}^{(t)},\epsilon,\delta)$ is given by
\begin{equation}\label{dpsignsgd}
\begin{split}
&dp\text{-}sign(\boldsymbol{g}_{m}^{(t)},\epsilon,\delta)_{i} =
\begin{cases}
1, ~~~~~~~~~ \text{with probability $\Phi\big(\frac{(\boldsymbol{g}_{m}^{(t)})_{i}}{\sigma}\big)$} \\
-1,  ~~~~~~\text{with probability $1-\Phi\big(\frac{(\boldsymbol{g}_{m}^{(t)})_{i}}{\sigma}\big)$}\\
\end{cases}
\end{split}
\end{equation}
where $\Phi(\cdot)$ is the cumulative distribution function of the normalized Gaussian distribution; $\sigma = \frac{\Delta_{2}}{\epsilon}\sqrt{2\ln(\frac{1.25}{\delta})}$, where $\epsilon$ and $\delta$ are the differential privacy parameters and $\Delta_2$ is the sensitivity measure.\footnote{Please refer to Section 1 of the supplementary document for detailed information about the differential privacy parameters ($\epsilon$,$\delta$) and the sensitivity measure $\Delta_2$.}
\end{Definition}

\begin{theorem}
The proposed compressor $dp\text{-}sign(\cdot,\epsilon,\delta)$ is $(\epsilon,\delta)$-differentially private for any $\epsilon, \delta \in (0,1)$.
\end{theorem}

\begin{Remark}
Note that throughout this paper, we assume $\delta > 0$. For the $\delta = 0$ scenario, the Laplace mechanism \cite{dwork2014algorithmic} can be used by replacing the cumulative distribution function of the normalized Gaussian distribution in (\ref{dpsignsgd}) with that of the Laplace distribution. The corresponding discussion is provided in the supplementary document.
\end{Remark}

We term Algorithm \ref{QuantizedSIGNSGD} with $q(\boldsymbol{g}_{m}^{(t)}) = dp\text{-}sign(\boldsymbol{g}_{m}^{(t)},\epsilon,\delta)$ as {\scriptsize DP-SIGN}SGD. Similar to $sto\text{-}sign$, we consider the scalar case and obtain the following result for $dp\text{-}sign(\cdot,\epsilon,\delta)$.

\begin{theorem}\label{Lemma2}
Let $u_{1},u_{2},\cdots,u_{M}$ be $M$ known and fixed real numbers. Further define random variables $\hat{u}_{i}=dp\text{-}sign(u_{i},\epsilon,\delta), \forall 1\leq i \leq M$. Then there always exist a constant $\sigma_{0}$ such that when $\sigma \geq \sigma_{0}$, $P(sign(\frac{1}{M}\sum_{m=1}^{M}\hat{u}_{i})\neq sign(\frac{1}{M}\sum_{m=1}^{M}u_{i})) <\big[\big(1-x^2\big)\big]^{\frac{M}{2}}$,
where $x = \frac{|\sum_{m=1}^{M}u_{m}|}{2\sigma M}$.
\end{theorem}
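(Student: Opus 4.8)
The plan is to mirror the route from Theorem~\ref{Lemma1} to Corollary~\ref{Corollary1}: control the average probability of wrong signs of $dp\text{-}sign$ and then feed it into the sufficient condition of Theorem~\ref{Lemma1}. By the sign-symmetry of the construction in~(\ref{dpsignsgd}) we may assume $\sum_{m=1}^{M}u_{m}\neq 0$ (the case $\sum_{m}u_{m}=0$ makes the right-hand side equal to $1$ and is vacuous), and set $s=sign(\sum_{m=1}^{M}u_{m})$. First I would compute $\bar{p}_{dp}=\frac{1}{M}\sum_{m=1}^{M}\Pr(\hat u_{m}\neq s)$ directly from~(\ref{dpsignsgd}): for each $m$, $\Pr(\hat u_{m}\neq s)$ equals $1-\Phi(u_{m}/\sigma)$ if $s=1$ and $\Phi(u_{m}/\sigma)$ if $s=-1$, so in both cases $1-2\bar{p}_{dp}=\frac{s}{M}\sum_{m=1}^{M}\big(2\Phi(u_{m}/\sigma)-1\big)$.

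The core of the argument is to exhibit a constant $\sigma_{0}$, depending only on $u_{1},\dots,u_{M}$, such that $1-2\bar{p}_{dp}>x=\frac{|\sum_{m=1}^{M}u_{m}|}{2\sigma M}$ whenever $\sigma\geq\sigma_{0}$. I would use the Taylor expansion $2\Phi(t)-1=\sqrt{\tfrac{2}{\pi}}\,t+R(t)$, noting that $\Phi'(0)=\tfrac{1}{\sqrt{2\pi}}$ and $\Phi''(0)=0$, so that $R(0)=R'(0)=R''(0)=0$ and hence $|R(t)|\leq C|t|^{3}$ for an absolute constant $C=\tfrac16\sup_{t}|(2\Phi-1)'''(t)|<\infty$. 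Substituting $t=u_{m}/\sigma$ and summing,
\begin{equation}
1-2\bar{p}_{dp}\;\geq\;\sqrt{\tfrac{2}{\pi}}\,\frac{|\sum_{m=1}^{M}u_{m}|}{\sigma M}\;-\;\frac{C}{\sigma^{3}M}\sum_{m=1}^{M}|u_{m}|^{3}.
\end{equation}
Since $\sqrt{2/\pi}>\tfrac12$, the linear term already dominates $x$ by a fixed multiplicative margin, while the cubic correction is $O(\sigma^{-3})$ against the $O(\sigma^{-1})$ main terms with the $u_{m}$ held fixed; solving the resulting inequality for $\sigma$ gives an explicit $\sigma_{0}$ (also large enough to ensure $x<1$). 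In particular this yields $1-2\bar{p}_{dp}>0$, i.e. $\bar{p}_{dp}<\tfrac12$, so Theorem~\ref{Lemma1} applies with $\bar{p}=\bar{p}_{dp}$.

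Finally I would invoke Theorem~\ref{Lemma1} to get $P\big(sign(\tfrac{1}{M}\sum_{m}\hat u_{m})\neq sign(\tfrac{1}{M}\sum_{m}u_{m})\big)\leq\big[4\bar{p}_{dp}(1-\bar{p}_{dp})\big]^{M/2}=\big[1-(1-2\bar{p}_{dp})^{2}\big]^{M/2}$, and then combine with $0\leq x<1-2\bar{p}_{dp}\leq 1$ from the previous step to conclude $1-(1-2\bar{p}_{dp})^{2}<1-x^{2}$ and hence $P(\cdot)<(1-x^{2})^{M/2}$, as claimed.

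I expect the main obstacle to be the second step. Because $\Phi$ is concave on $\mathbb{R}_{\geq0}$ (so $2\Phi-1$ overshoots its tangent for positive arguments and undershoots it for negative ones), any term-by-term comparison of the form $2\Phi(u_{m}/\sigma)-1\gtrsim u_{m}/\sigma$ fails on the coordinates whose $u_{m}$ has the ``wrong'' sign; one genuinely has to argue at the level of the sum $\sum_{m}u_{m}$ and exploit that the $u_{m}$ are fixed numbers, so that $\sigma$ may be taken large relative to $\max_{m}|u_{m}|$. Keeping the remainder estimate uniform and converting it into a clean closed form for $\sigma_{0}$ is the only delicate bookkeeping; the rest is a direct application of Theorem~\ref{Lemma1}, exactly as in Corollary~\ref{Corollary1}.
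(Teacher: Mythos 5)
Your proposal is correct, and it follows the same overall skeleton as the paper's proof — compute the average wrong-sign probability $\bar p_{dp}=\frac{1}{M}\sum_m\Pr(\hat u_m\neq s)$, show that $1-2\bar p_{dp}>x$ once $\sigma$ is large, and then feed $\bar p_{dp}$ into Theorem~\ref{Lemma1} via $[4\bar p(1-\bar p)]^{M/2}=[1-(1-2\bar p)^2]^{M/2}<(1-x^2)^{M/2}$ — but the key estimate is obtained by a genuinely different and, in my view, cleaner argument. The paper rewrites each $\Phi(u_m/\sigma)$ as $\tfrac12\mp P(\cdot<n<\cdot)$ for a Gaussian $n$, reduces to a claimed ``worst case'' in which only one $u_m$ is negative, bounds the resulting interval probabilities by endpoint values of the density, and then argues by a $\sigma\to\infty$ limit that the bracket eventually exceeds $\tfrac{|\sum_m u_m|}{4M\sigma}$ because $\tfrac{1}{\sqrt{2\pi}}>\tfrac14$; this yields $\bar p_{dp}\le\tfrac12-\tfrac{x}{2}$ but only an existential $\sigma_0$, and the worst-case reduction step is argued somewhat loosely. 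Your third-order Taylor expansion $2\Phi(t)-1=\sqrt{2/\pi}\,t+R(t)$ with $|R(t)|\le C|t|^3$ (valid since $\Phi''(0)=0$ and $(2\Phi-1)'''$ is bounded) gets the same conclusion from the single inequality $\sqrt{2/\pi}\,\tfrac{|\sum_m u_m|}{\sigma M}-\tfrac{C}{\sigma^3M}\sum_m|u_m|^3>\tfrac{|\sum_m u_m|}{2\sigma M}$, exploiting $\sqrt{2/\pi}>\tfrac12$ exactly where the paper exploits $\tfrac{1}{\sqrt{2\pi}}>\tfrac14$; it avoids the worst-case reduction entirely, handles the mixed-sign coordinates automatically by working with the signed sum, and produces an explicit $\sigma_0^2=\tfrac{C\sum_m|u_m|^3}{(\sqrt{2/\pi}-1/2)\,|\sum_m u_m|}$. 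Your diagnosis of the obstacle (term-by-term comparison fails by concavity of $\Phi$ on $\mathbb{R}_{\ge0}$) matches the difficulty the paper flags before its proof, and your treatment of the degenerate case $\sum_m u_m=0$ is consistent with the paper's implicit assumption that the sum is nonzero.
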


Given Theorem \ref{Lemma2}, the convergence of {\scriptsize DP-SIGN}SGD can be obtained by following a similar analysis to that of Theorem \ref{convergerate}.

\section{Byzantine Resilience}\label{ByzantineAnalysis}
\noindent In this section, the Byzantine resilience of the sign-based gradient descent method is investigated. We note that the convergence of Algorithm \ref{QuantizedSIGNSGD} is limited by the probability of wrong aggregation (i.e., more than half of the workers share the wrong signs). In the following analysis, we assume that the Byzantine attackers evaluate their gradients over the whole training dataset, i.e., $byzantine\text{-}sign(\boldsymbol{g}_{j}^{(t)}) = -sign(\nabla F(w^{(t)}))$, $\forall j \in \mathcal{B}$, which is considered the {\textit{worst case scenario}}. The study can be easily extended to other scenarios when the Byzantine attackers are more constrained in their capabilities. Let $Z_{i}$ denote the number of normal workers that share (quantized) gradients with different signs from the true gradient $\nabla F(w^{(t)})$ on the $i$-th coordinate (i.e., $q(\boldsymbol{g}_{m}^{(t)})_{i}\neq sign(\nabla F(w^{(t)})_{i})$). Then, $Z_{i}$ is a Poisson binomial variable. In order to tolerate $k_{i}$ Byzantine workers that always share the wrong signs on the $i$-th coordinate of the gradient, we need to have $P(Z_{i} \geq \frac{M-k_{i}}{2}) \leq \frac{1-c}{2}$ for some positive constant $c$, where $M$ is the number of benign workers. Therefore, we can prove the following theorem.
\begin{theorem}\label{ByzantineResienceDP}
\color{black}
During $t$-th communication round, let $\frac{1}{M}\sum_{m=1}^{M}\Pr\left(sign\left(\nabla F(w^{(t)})\right)_{i} \neq q(\boldsymbol{g}_{m}^{(t)})_{i}\right) = \Bar{p}_{i}^{(t)}$, then Algorithm \ref{QuantizedSIGNSGD} can at least tolerate $k_{i}$ Byzantine attackers on the $i$-th coordinate of the gradient and $k_{i}$ satisfies
\begin{enumerate}[topsep=0pt,parsep=0pt,partopsep=0pt]
    \item $\Bar{p}_{i}^{(t)} \leq \frac{M-k_{i}}{2M}$.
    \item There exists some positive constant $c$ such that \begin{equation}\label{ByzantineInequa}
    \begin{split}
        &\left[\frac{(M-k_{i})(1-\Bar{p}_{i}^{(t)})}{(M+k_{i})\Bar{p}_{i}^{(t)}}\right]^{\frac{k}{2}} \left(\sqrt{\frac{M-k_{i}}{M+k_{i}}}+\sqrt{\frac{M+k_{i}}{M-k_{i}}}\right)^{M}\times\left[\Bar{p}_{i}^{(t)}(1-\Bar{p}_{i}^{(t)})\right]^{\frac{M}{2}} \leq \frac{1-c}{2}.
    \end{split}
    \end{equation}
\end{enumerate}
Overall, the number of Byzantine workers that the algorithms can tolerate is given by $min_{1\leq i \leq d}k_{i}$.
\end{theorem}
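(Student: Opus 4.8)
The plan is to analyze the tail probability $P(Z_i \geq \frac{M-k_i}{2})$ of the Poisson binomial variable $Z_i$, where $Z_i = \sum_{m=1}^{M} \mathbf{1}\{q(\boldsymbol{g}_m^{(t)})_i \neq sign(\nabla F(w^{(t)})_i)\}$, and show that the two stated conditions on $k_i$ are exactly what is needed to force this tail below $\frac{1-c}{2}$. First I would set up the failure event: with $k_i$ Byzantine workers all sending the wrong sign, the aggregate on coordinate $i$ is wrong precisely when at least $\frac{M-k_i}{2}$ of the $M$ benign workers also disagree with $sign(\nabla F(w^{(t)})_i)$ (using Assumption \ref{A3} so there is a strict majority and no ties). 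Thus it suffices to bound $P(Z_i \geq \frac{M-k_i}{2})$. Condition 1, $\bar p_i^{(t)} \leq \frac{M-k_i}{2M}$, ensures that the threshold $\frac{M-k_i}{2}$ lies at or above the mean $\mathbb{E}[Z_i] = M\bar p_i^{(t)}$, so we are genuinely in the upper-tail regime and a Chernoff-type bound is meaningful rather than vacuous.

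Next I would apply a Chernoff/Hoeffding-style bound for the Poisson binomial distribution. Writing $Z_i = \sum_m X_m$ with $X_m \sim \text{Bernoulli}(p_m)$ independent and $\frac{1}{M}\sum_m p_m = \bar p_i^{(t)}$, Markov's inequality on $e^{sZ_i}$ gives, for $s>0$, $P(Z_i \geq \tau) \leq e^{-s\tau}\prod_{m=1}^M (1 - p_m + p_m e^s)$. By concavity of $\log(1-p+pe^s)$ in $p$, the product is maximized (over all choices of $\{p_m\}$ with fixed average $\bar p_i^{(t)}$) when all $p_m = \bar p_i^{(t)}$, yielding $P(Z_i \geq \tau) \leq e^{-s\tau}(1-\bar p_i^{(t)} + \bar p_i^{(t)} e^s)^M$. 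I would then choose $\tau = \frac{M-k_i}{2}$ and optimize over $s$: the optimal $e^s$ turns out to be $\frac{(1-\bar p_i^{(t)})(M-k_i)}{\bar p_i^{(t)}(M+k_i)}$ (the usual Chernoff optimizer, since the complementary count $M-Z_i$ should have mean matching $\frac{M+k_i}{2}$). Substituting this value back in and simplifying the algebra — separating the factor $(M-k_i)/(M+k_i)$ raised to the $k_i/2$ from the symmetric factor $\big(\sqrt{\tfrac{M-k_i}{M+k_i}} + \sqrt{\tfrac{M+k_i}{M-k_i}}\big)^M$ times $[\bar p_i^{(t)}(1-\bar p_i^{(t)})]^{M/2}$ — reproduces exactly the left-hand side of (\ref{ByzantineInequa}). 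Hence requiring that quantity to be $\leq \frac{1-c}{2}$ (Condition 2) guarantees $P(Z_i \geq \frac{M-k_i}{2}) \leq \frac{1-c}{2}$, which is precisely the condition under which the convergence argument of Theorem \ref{convergerate} goes through coordinate-wise with the bad-event indicator controlled. Finally, since the aggregation must be reliable on every coordinate simultaneously for the descent argument, the overall number of tolerable Byzantine workers is $\min_{1\leq i\leq d} k_i$.

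The main obstacle I anticipate is the bookkeeping in the Chernoff optimization: carrying out the substitution of the optimal $s$ and grouping the resulting expression into the precise asymmetric-plus-symmetric product form in (\ref{ByzantineInequa}) requires careful algebra, and one must verify that the concavity reduction to the equal-$p_m$ case is legitimate (it is, because $\log(1-p+pe^s)$ is concave in $p$ for every fixed $s>0$, so Jensen gives the worst case at the average). A secondary subtlety is handling parity: $\frac{M-k_i}{2}$ need not be an integer, so one should replace it by $\lceil \frac{M-k_i}{2}\rceil$ (or note that $M-k_i$ being forced odd by the setup makes the strict-majority threshold well-defined); since a larger threshold only decreases the tail probability, the stated bound remains valid, and I would remark on this rather than belabor it.
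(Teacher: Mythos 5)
Your proposal is correct and follows essentially the same route as the paper: the paper also reduces the failure event to a tail bound on the count of wrong-sign benign workers shifted by $k_i$, applies Markov's inequality to the exponential moment, uses Jensen's inequality (concavity of $p\mapsto\ln(e^{a}p+e^{-a}(1-p))$) to reduce to the equal-probability case with average $\Bar{p}_{i}^{(t)}$, and optimizes the exponent at $a=\ln\sqrt{\tfrac{(M-k_i)(1-\Bar{p}_{i}^{(t)})}{(M+k_i)\Bar{p}_{i}^{(t)}}}$, with Condition 1 arising exactly as you say from requiring this optimizer to be positive. Your remarks on the parity of the threshold are a minor addition the paper omits; otherwise the two arguments coincide up to the $\{0,1\}$ versus $\{-1,+1\}$ parametrization of the Bernoulli variables.
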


\begin{Remark}
\color{black}
When $q(\boldsymbol{g}_{m}^{(t)}) = sto\text{-}sign(\nabla f_{m}(w^{(t)}),\boldsymbol{b})$, we have $\Bar{p}_{i}^{(t)} = \frac{b_{i}M-|\sum_{m=1}^{M}\nabla f_{m}(w^{(t)})_{i}|}{2b_{i}M}$ and the first condition in Theorem \ref{ByzantineResienceDP} is reduced to $k_{i} \leq \frac{|\sum_{m=1}^{M}\nabla f_{m}(w^{(t)})_{i}|}{b_{i}}$. In this sense, if we set $b_{i}=max_{m}|\nabla f_{m}(w^{(t)})_i|$ as in Section \ref{Algorithms}, the first condition in Theorem \ref{ByzantineResienceDP} gives $k_i < \frac{|\sum_{m=1}^{M}\nabla f_{m}(w^{(t)})_i|}{max_{m}|\nabla f_{m}(w^{(t)})_i|}$, which means that the Byzantine resilience depends on the heterogeneity of the local datasets. In an ideal scenario where the workers have the same local datasets,\footnote{We note that this can be relaxed for weaker attackers that do not have access to the whole training dataset.} i.e., $\nabla f_{m}(w^{(t)})_i = \nabla f_{n}(w^{(t)})_i, \forall m,n$, Theorem \ref{ByzantineResienceDP} gives $\Bar{p}_{i}^{(t)}=0$ and $k_{i} < M$. Therefore, it can tolerate $M-1$ Byzantine workers.
\end{Remark}

\begin{Remark}
Our analysis of the convergence and the Byzantine resilience is based on each individual coordinate of the gradients, which corresponds to the dimensional Byzantine resilience \cite{xie2018generalized}. 
{\color{black}Furthermore, for a fixed $\boldsymbol{b}$, the gradients (and therefore the Byzantine resilience) tend to decrease during the training process. With such consideration, we propose a reputation based weighted vote mechanism to improve the Byzantine resilience, which can be found in Section \ref{ImprovedByzantine}. More specifically, the server can identify the normal workers in the beginning of the training process with higher probabilities and assign higher weights to them in majority vote.}
\end{Remark}

\begin{figure}
\centering
\begin{subfigure}
  \centering
  \includegraphics[width=0.45\textwidth]{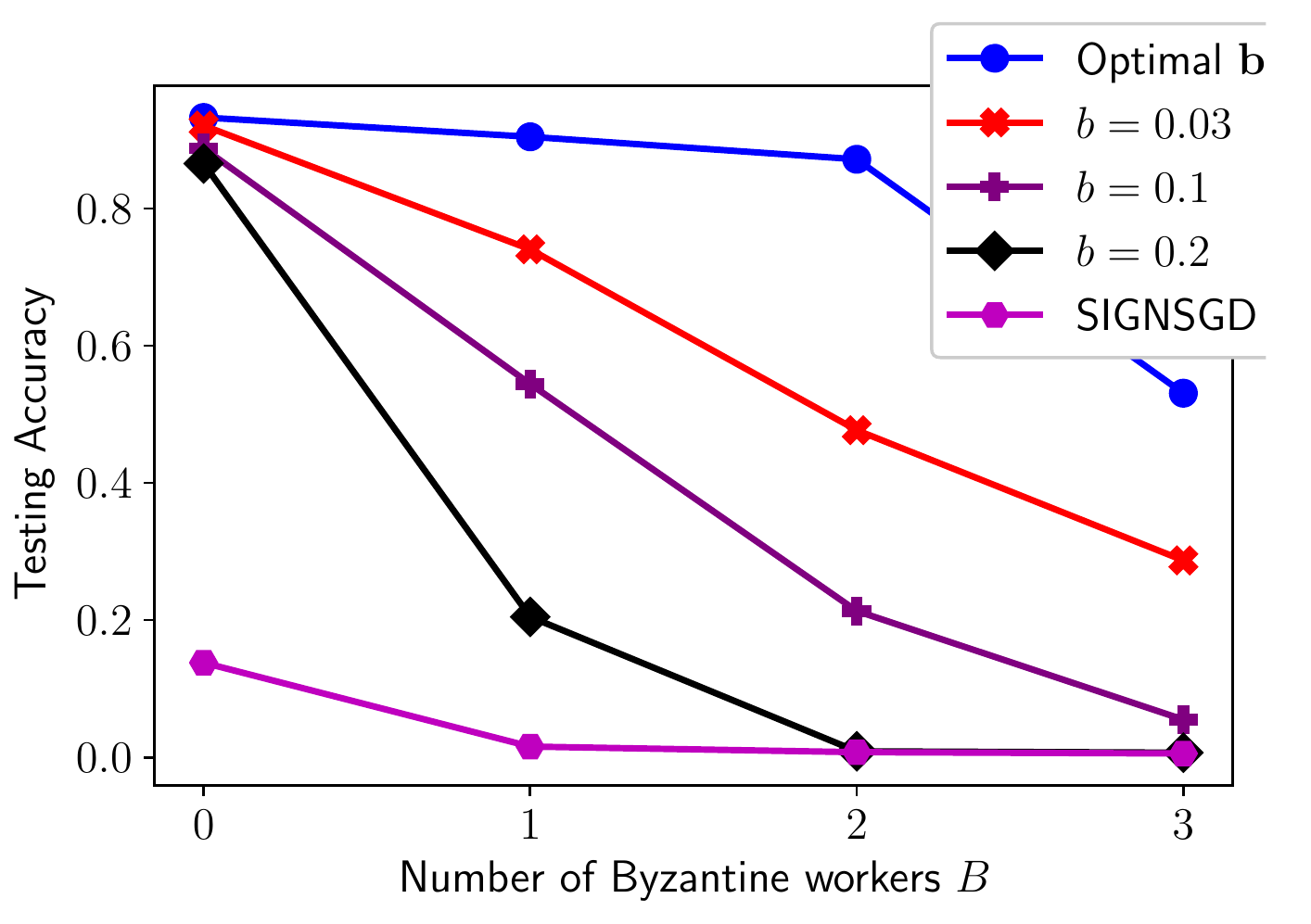}
\end{subfigure}%
\begin{subfigure}
  \centering
  \includegraphics[width=0.45\textwidth]{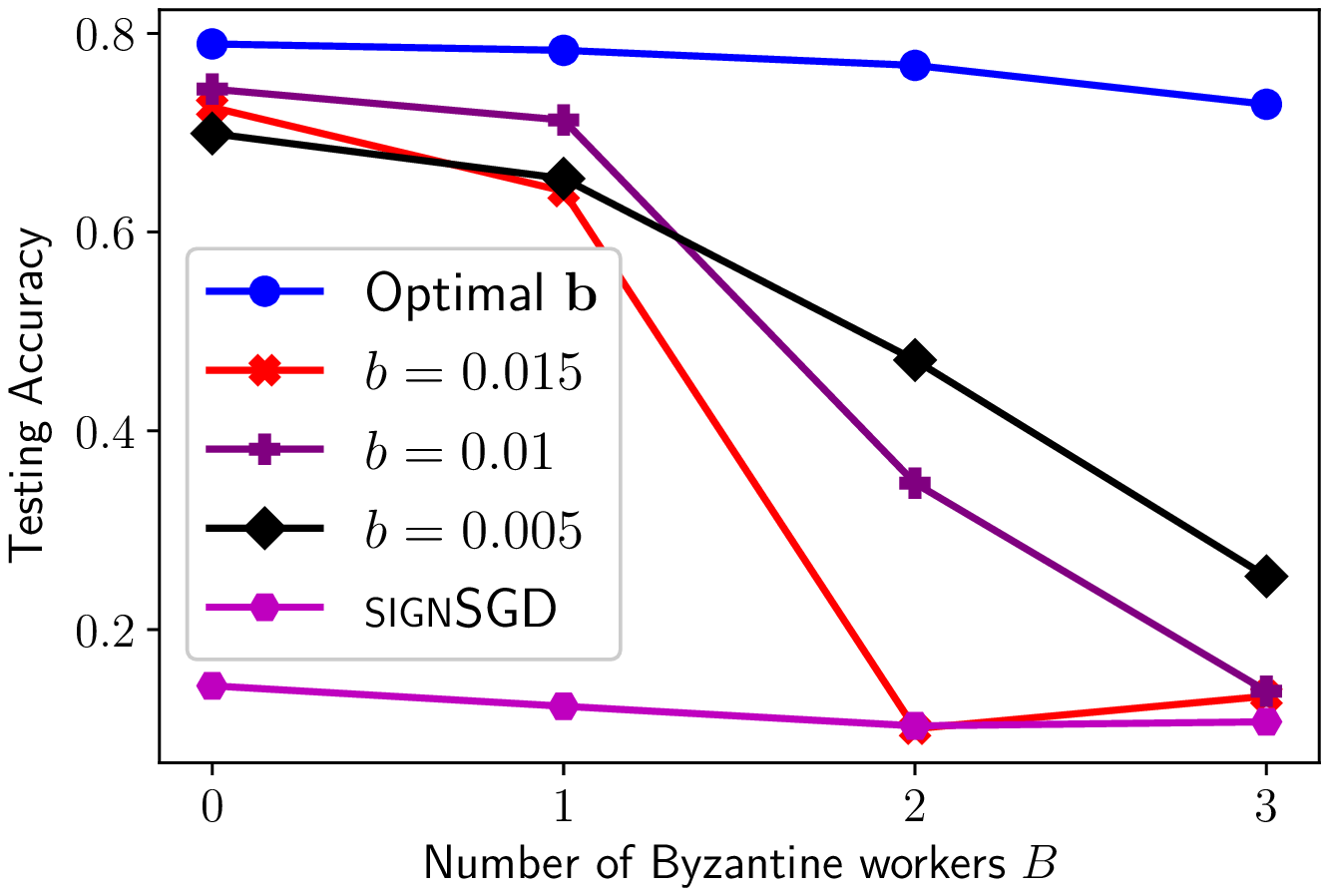}
\end{subfigure}
\caption{{\color{black}The left and right figures show the testing accuracy of {\scriptsize Sto-SIGN}SGD for different number of Byzantine workers and different $\boldsymbol{b}$ on MNIST and CIFAR-10, respectively. For MNIST, the Byzantine workers evaluate their gradients over the whole training dataset. For CIFAR-10, each Byzantine worker has 2,000 training examples that are sampled from the training dataset uniformly at random. The mini-batch sizes of all the workers and the Byzantine attackers are set to 32.}}
\label{sto_impact_Byzantine}
\end{figure}
\textbf{Experimental results.} {\color{black}Fig. \ref{sto_impact_Byzantine} shows the performance of {\scriptsize Sto-SIGN}SGD for different selection of $\boldsymbol{b}=b\cdot \boldsymbol{1}$ and different number of Byzantine workers $B$. It can be seen that as the number of Byzantine workers increases, both the training and the testing accuracy of {\scriptsize Sto-SIGN}SGD with a larger $b$ drop much faster than that with a smaller $b$, which conforms to our analysis above that a lager $b$ results in worse Byzantine resilience. It is also observed that {\scriptsize SIGN}SGD essentially fails in this extremely heterogeneous data distribution setting (where each worker holds exclusive data), even without attackers.

Furthermore, to examine the impact of data heterogeneity, we vary the number of labels of each worker's local training dataset in Table \ref{sto_sign_table}. It can be observed that the testing accuracy of {\scriptsize SIGN}SGD improves when the training data become more homogeneously distributed across workers. Furthermore, both {\scriptsize SIGN}SGD and {\scriptsize Sto-SIGN}SGD obtain better Byzantine resilience as the number of labels increases. Finally, {\scriptsize Sto-SIGN}SGD with optimal $\boldsymbol{b}$ still outperforms {\scriptsize SIGN}SGD, which indicates that introducing the stochasticity is still beneficial in the more homogeneous data distribution scenarios.}


\begin{table}[t]
\caption{Testing Accuracy of {\scriptsize Sto\text{-}SIGN}SGD on MNIST}
\label{sto_sign_table}
\vskip 0.15in
\begin{center}
\begin{small}
\begin{sc}
\begin{tabular}{lcccc}
\toprule
$B$ & \makecell{{\scriptsize SIGN}SGD 2 Labels} & \makecell{optimal $\boldsymbol{b}$ 2 Labels} & \makecell{{\scriptsize SIGN}SGD 4 Labels} &\makecell{optimal $\boldsymbol{b}$ 4 Labels}\\
\midrule
0 & 70.03\%& 92.34\% &90.53\%&93.12\%\\
1 & 66.31\%& 93.14\% &88.21\%&93.38\%\\
2 & 60.19\%& 92.71\% &87.34\%&93.39\%\\
3 & 56.23\%& 91.13\% &82.49\%&92.19\%\\
4 & 47.44\%& 84.49\% &81.51\%&92.31\%\\
\bottomrule
\end{tabular}
\end{sc}
\end{small}
\end{center}
\vskip -0.2in
\end{table}

{\color{black}
\section{Improved Resilience with Weighted Vote}\label{ImprovedByzantine}
According to Theorem \ref{ByzantineResienceDP}, it is shown that the number of Byzantine attackers that {\scriptsize Sto-SIGN}SGD can tolerate decreases as the gradient $|\sum_{m=1}^{M}\nabla f_{m}(w^{(t)})|$ decreases. As a result, {\scriptsize Sto-SIGN}SGD is more robust against Byzantine attackers in the beginning of the training process. With such consideration, a reputation based weighted vote mechanism is proposed and the corresponding algorithm is presented in Algorithm \ref{WeightedSIGNSGD}. In the proposed mechanism, the server stores a credit $r_{m}^{(t)}$ for each worker. During each iteration, given the shared signs of the gradients from the workers, the server first performs a weighted vote given by (\ref{weightedvote}). Then, the aggregated result $\tilde{\boldsymbol{g}}^{(t)}$ is used as the ``ground truth", based on which a credit is assigned to each worker. More specifically, in (\ref{creditupdate}), $\sum_{i=1}^{d}\mathds{1}_{\tilde{\boldsymbol{g}}^{(t)}_{i} = q(\boldsymbol{g}_{m}^{(t)})_{i}}/d$ and $\sum_{i=1}^{d}\mathds{1}_{\tilde{\boldsymbol{g}}^{(t)}_{i} \neq q(\boldsymbol{g}_{m}^{(t)})_{i}}/d$ measures the number of coordinates that worker $m$ shares the same and different signs compared to the aggregated result, respectively. Considering that in the beginning of the training process, the probability of correct aggregation is high, and the attackers that deliberately share wrong signs are expected to receive lower credits and therefore play a smaller role in the future iterations. As a result, the impact of the attackers is reduced.

\begin{algorithm}[tb]
\caption{Stochastic-Sign SGD with weighted vote}
\label{WeightedSIGNSGD}
\begin{algorithmic}
\STATE \textbf{Input}: learning rate $\eta$, current hypothesis vector $w^{(t)}$, $M$ workers each with an independent gradient $\boldsymbol{g}_{m}^{(t)}$, the 1-bit compressor $q(\cdot)$, initialized credit $r_{m}^{(0)} = 1, \forall m$.
\STATE \textbf{on server:}
\STATE ~~~~\textbf{pull} $q(\boldsymbol{g}_{m}^{(t)})$ \textbf{from} worker $m$ and compute the weighted vote given by
\begin{equation}\label{weightedvote}
\tilde{\boldsymbol{g}}^{(t)}= sign\bigg(\frac{1}{M}\sum_{m=1}^{M}\max\{r_{m}^{(t)},0\}q(\boldsymbol{g}_{m}^{(t)})\bigg)
\end{equation}
\STATE ~~~~\textbf{push} $\tilde{\boldsymbol{g}}^{(t)}$ \textbf{to} all the workers and update the credits
\begin{equation}\label{creditupdate}
    r_{m}^{(t+1)} = r_{m}^{(t)}+\frac{\sum_{i=1}^{d}[\mathds{1}_{\tilde{\boldsymbol{g}}^{(t)}_{i} = q(\boldsymbol{g}_{m}^{(t)})_{i}}-\mathds{1}_{\tilde{\boldsymbol{g}}^{(t)}_{i} \neq q(\boldsymbol{g}_{m}^{(t)})_{i}}]}{d},
\end{equation}
where $\tilde{\boldsymbol{g}}^{(t)}_{i}$ and $q(\boldsymbol{g}_{m}^{(t)})_{i}$ are the $i$-th entry of the aggregated result $\tilde{\boldsymbol{g}}^{(t)}$ and the vector $q(\boldsymbol{g}_{m}^{(t)})$ shared by worker $m$, respectively.
\STATE \textbf{on each worker:}
\STATE ~~~~\textbf{update} $w^{(t+1)} = w^{(t)} - \eta\tilde{\boldsymbol{g}}^{(t)}$.
\end{algorithmic}
\end{algorithm}

\begin{figure}
\centering
\begin{subfigure}
  \centering
  \includegraphics[width=0.45\textwidth]{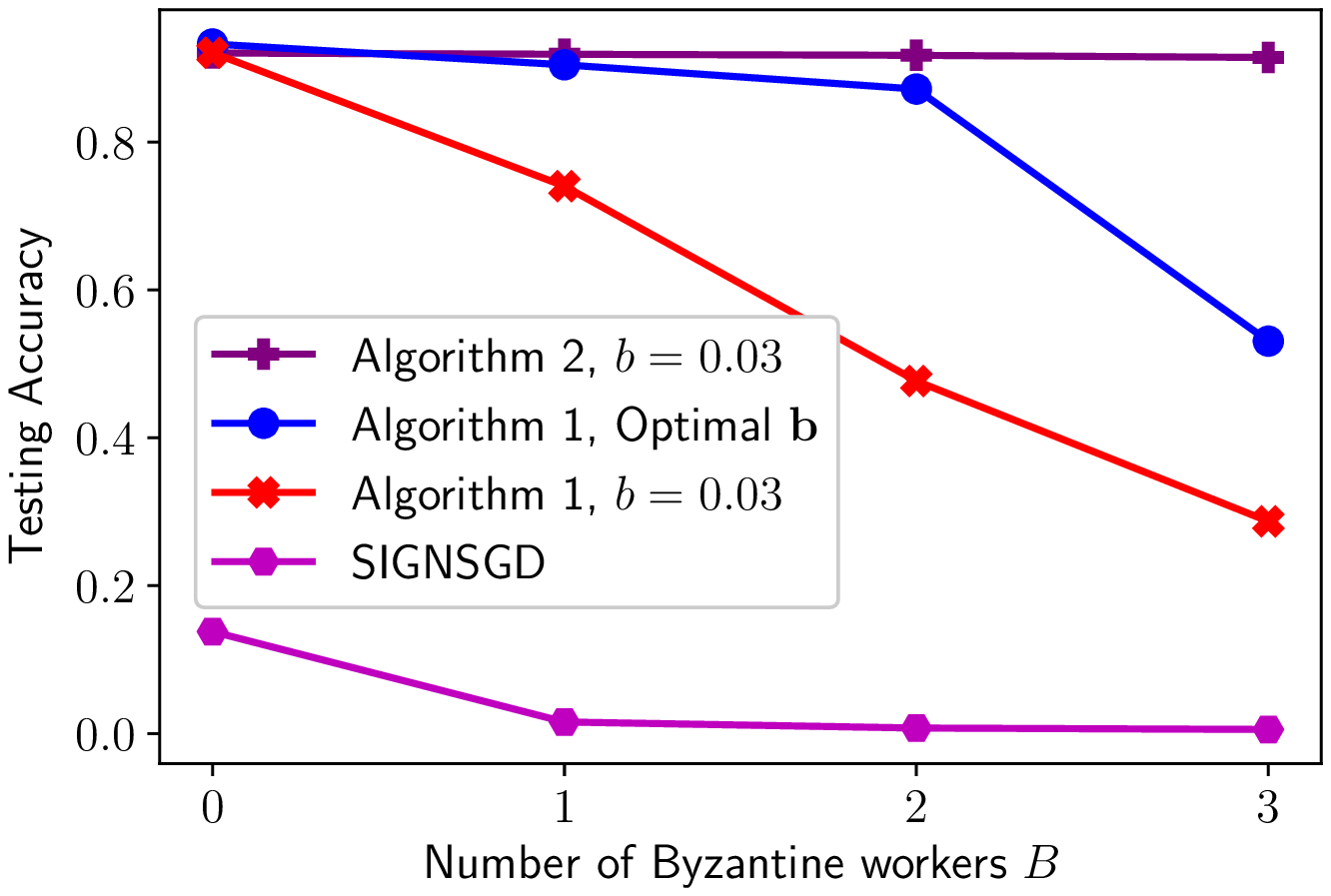}
\end{subfigure}%
\begin{subfigure}
  \centering
  \includegraphics[width=0.45\textwidth]{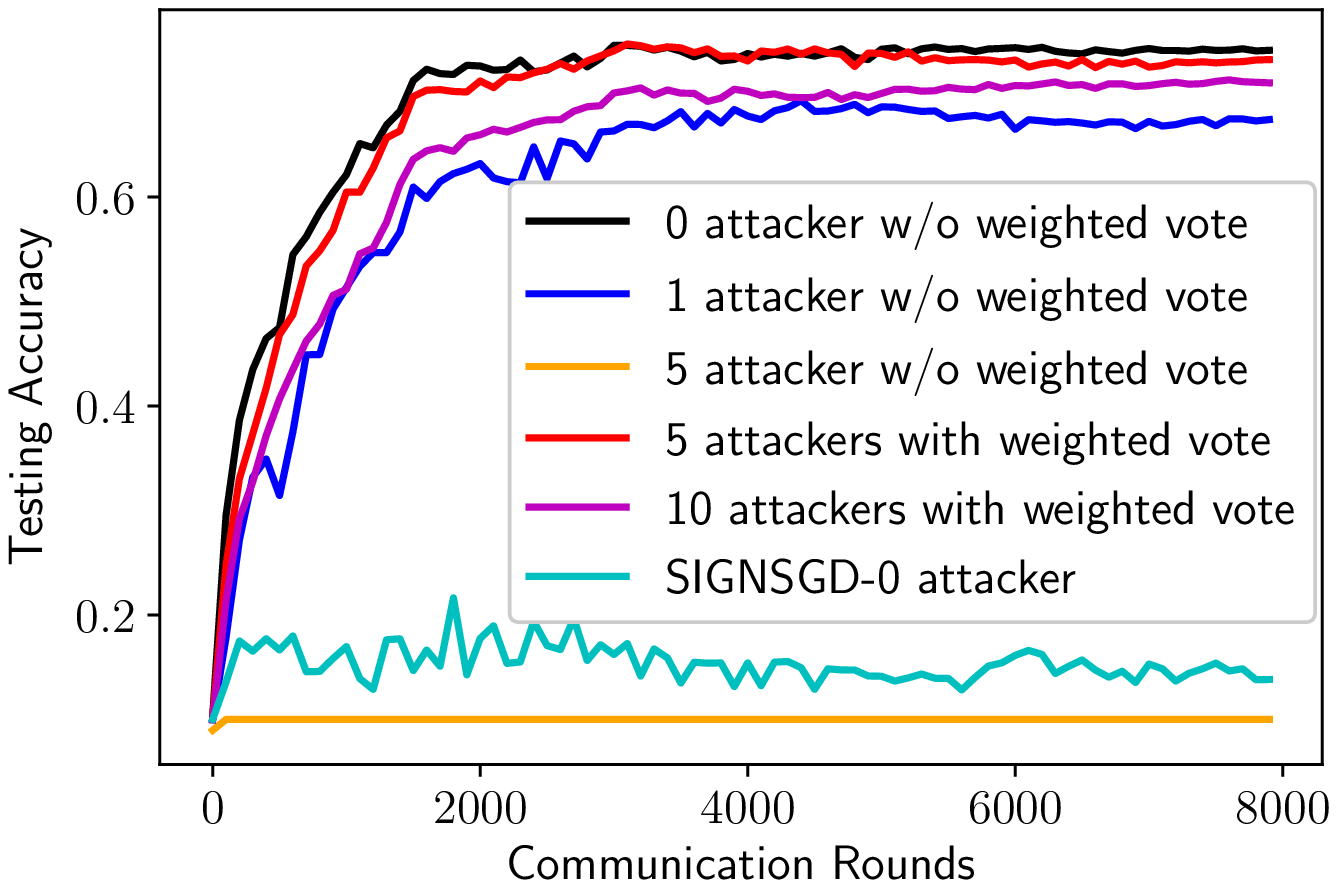}
\end{subfigure}
\caption{The left figure shows the testing accuracy of {\scriptsize Sto-SIGN}SGD with weighted vote on MNIST with $\boldsymbol{b}=0.03\cdot\boldsymbol{1}$ and different number of Byzantine workers that evaluate their gradients over the whole training dataset. The right figure shows the testing accuracy of {\scriptsize Sto-SIGN}SGD with weighted vote on CIFAR-10, with $\boldsymbol{b}=0.012\cdot\boldsymbol{1}$. Each Byzantine worker has 2,000 training examples that are sampled from the training dataset uniformly at random. The mini-batch sizes of all the workers and the Byzantine attackers are set to 32.}
\label{cifar_Byzantine}
\end{figure}

\textbf{Experimental results.} Fig. \ref{cifar_Byzantine} shows the performance of {\scriptsize Sto-SIGN}SGD with weighted vote. It can be observed that for MNIST, {\scriptsize Sto-SIGN}SGD with weighted vote obtains a comparable testing accuracy in the presence of up to 3 Byzantine workers compared to {\scriptsize Sto-SIGN}SGD without Byzantine workers. For CIFAR-10, it can be observed that {\scriptsize Sto-SIGN}SGD diverges in the presence of 5 Byzantine attackers. In the meantime, {\scriptsize Sto-SIGN}SGD with weighted vote achieves almost the same testing accuracy against 5 Byzantine workers as {\scriptsize Sto-SIGN}SGD without Byzantine workers. Furthermore, {\scriptsize Sto-SIGN}SGD with weighted vote achieves a higher testing accuracy against 10 Byzantine workers compared to {\scriptsize Sto-SIGN}SGD against 1 Byzantine worker, which validates its effectiveness.
}

{\color{black}
\section{Improved Differential Privacy with Sparsification}
Intuitively, according to Theorem \ref{Lemma2}, as the absolute value of the true gradient $|\nabla F(w^{(t)})_{i}|$ decreases, the corresponding probability of wrong aggregation increases. In this sense, discarding the coordinates with higher probabilities of wrong aggregation may help improve the learning performance. Therefore, we propose to improve the performance of {\scriptsize DP-SIGN}SGD by incorporating the Top-K sparsification scheme \cite{lin2018deep,aji2017sparse}. The corresponding algorithm is termed as {\scriptsize DP-TopSIGN}SGD and presented in Algorithm \ref{TopKDPSIGNSGD}.

\begin{algorithm}
\caption{{\scriptsize DP-SIGN}SGD with sparsification}
\label{TopKDPSIGNSGD}
\begin{algorithmic}
\STATE \textbf{Input}: learning rate $\eta$, current hypothesis vector $w^{(t)}$, $M$ workers each with an independent gradient $\boldsymbol{g}_{m}^{(t)}$, the 1-bit compressor $q(\cdot)$, the Top-$k$ sparsifier $top_{k}(\cdot)$.
\STATE \textbf{on server:}
\STATE ~~\textbf{pull} $dp\text{-}sign(top_{k}(\boldsymbol{g}_{m}^{(t)}))$ \textbf{from} worker $m$.
\STATE ~~\textbf{push} $\tilde{\boldsymbol{g}}^{(t)}= sign\big(\frac{1}{M}\sum_{m=1}^{M}dp\text{-}sign(top_{k}(\boldsymbol{g}_{m}^{(t)}))\big)$ \textbf{to} all the workers.
\STATE \textbf{on each worker:}
\STATE ~~\textbf{update} $w^{(t+1)} = w^{(t)} - \eta\tilde{\boldsymbol{g}}^{(t)}$.
\end{algorithmic}
\end{algorithm}

\textbf{Experimental results.} We compare our proposed differentially private scheme with {\scriptsize DP-Fed}SGD \cite{brendan2018learning}, a direct extension of {\scriptsize DP-}SGD \cite{abadi2016deep} to the distributed setting, where differential privacy is provided through additive Gaussian noise.
Moreover, we further incorporate the Top-K sparsification scheme \cite{lin2018deep,aji2017sparse} into {\scriptsize DP-SIGN}SGD. In this case, each client only sends 10\% of the coordinates of the gradients with the largest magnitudes through the $dp\text{-}sign$ compressor, and the corresponding algorithm is termed as {\scriptsize DP-TopSIGN}SGD. The corresponding results are presented in Fig. \ref{ResilienceToNoise}, where we run the algorithms for 200 communication rounds and utilize the notion of Gaussian differential privacy for privacy composition \cite{dong2019gaussian}. We set $\sigma \in \{10,20,30,50,80\}$, which provide $\mu$-Gaussian differential privacy guarantees of $\mu \in \{5.66,2.83,1.89,1.13,0.71\}$; in terms of common notion of differential privacy, this corresponds to $\epsilon \in\{4.05,1.48,0.83,0.38,0.18\}$ for the commonly considered $\delta = 10^{-5}$.\footnote{Essentially, a mechanism is $\mu$-Gaussian differential privacy if and only if it is $(\epsilon,\delta(\epsilon))$-differential privacy with $\delta(\epsilon) = \Phi(-\frac{\epsilon}{\mu}+\frac{\mu}{2}) - e^{\epsilon}\Phi(-\frac{\epsilon}{\mu}-\frac{\mu}{2})$.} It can be seen that {\scriptsize DP-SIGN}SGD performs similarly with {\scriptsize DP-Fed}SGD for the same level of privacy protection, while enjoying an improvement of $32\times$ in communication efficiency. {\scriptsize DP-TopSIGN}SGD outperforms {\scriptsize DP-Fed}SGD and the improvement increases as $\sigma$ increases (which indicates more stringent requirement for privacy), while further reduces the communication overhead (compared to {\scriptsize DP-SIGN}SGD).

\begin{figure}
\centering
\includegraphics[width=0.4\textwidth]{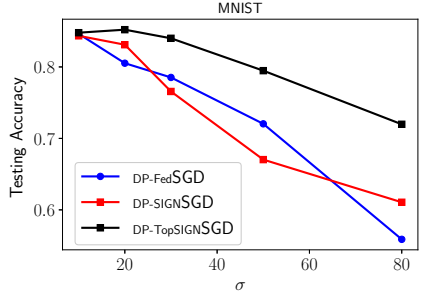}
\caption{Performance of {\scriptsize DP-TopSIGN}SGD, {\scriptsize DP-SIGN}SGD and {\scriptsize DP-Fed}SGD on MNIST. We follow the idea of gradient clipping in \cite{abadi2016deep} to bound the sensitivity $\Delta_{2}$. After computing the gradient for each individual training sample in the local dataset, each worker clips it in its $L_{2}$ norm for a clipping threshold $C$ to ensure that $\Delta_{2}\leq C$. We set $C=4$ in the experiments.}
\label{ResilienceToNoise}
\end{figure}
}

\section{Extending to SGD}\label{SGD}
\noindent Up until this point in the paper, the discussions are based on the assumption that each worker can evaluate its local true gradient $\nabla f_{m}(w^{(t)})$ for the ease of presentation. In the SGD scenario, we have to further account for the sampling noise. Particularly, the following theorem for {\scriptsize Sto-SIGN}SGD can be proved. The corresponding result for {\scriptsize DP-SIGN}SGD can be obtained following a similar strategy.

\begin{theorem}\label{Theorem7}
Suppose Assumptions \ref{A1}-\ref{A3} are satisfied, and set the learning rate $\eta=\frac{1}{\sqrt{Td}}$. Then, when $\boldsymbol{b}=b\cdot\boldsymbol{1}$ and $b$ is sufficiently large, {\scriptsize Sto-SIGN}SGD converges to the (local) optimum if either of the following two conditions is satisfied.
\begin{itemize}
  \item $P\big(sign(\frac{1}{M}\sum_{m=1}^{M}(\boldsymbol{g}_{m}^{t})_i)\neq sign(\nabla F(w^{t})_{i}\big)<0.5, \forall 1\leq i\leq d$.
  \item The mini-batch size of stochastic gradient at each iteration is at least $T$.
\end{itemize}
\end{theorem}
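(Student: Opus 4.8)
The plan is to reuse the argument behind Theorem~\ref{convergerate}, but to absorb the sampling noise using the fact that $sto\text{-}sign$ is linear in expectation, and then invoke the large-$b$ expansion of Theorem~\ref{bsufficientlylarge} to remove the residual gap term. First I would establish the one-step descent bound: applying Assumption~\ref{A2} to $w^{(t+1)}=w^{(t)}-\eta\tilde{\boldsymbol g}^{(t)}$, using $\|\tilde{\boldsymbol g}^{(t)}\|_2^2=d$, and taking the conditional expectation given $w^{(t)}$ over the stochastic-gradient oracle (Assumption~\ref{A4}) and the compressor, one gets
\begin{equation}
\mathbb{E}\big[F(w^{(t+1)})\mid w^{(t)}\big]\le F(w^{(t)})-\eta\sum_{i=1}^{d}|\nabla F(w^{(t)})_i|\,(1-2p_i^{(t)})+\frac{L\eta^2 d}{2},
\end{equation}
where $p_i^{(t)}$ is the probability of wrong aggregation on coordinate $i$ taken over \emph{both} sources of randomness. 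Telescoping over $t=1,\dots,T$, using Assumption~\ref{A1}, and substituting $\eta=1/\sqrt{Td}$ yields
\begin{equation}\label{t7master}
\frac{1}{T}\sum_{t=1}^{T}\sum_{i=1}^{d}|\nabla F(w^{(t)})_i|\,(1-2p_i^{(t)})\le \frac{(F(w^{(0)})-F^{*})\sqrt d}{\sqrt T}+\frac{L\sqrt d}{2\sqrt T}.
\end{equation}

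The next step reduces the stochastic-gradient case to the deterministic one. Whenever the mapping probabilities in \eqref{stoprob} stay in $[0,1]$, i.e.\ no clipping occurs, $\mathbb{E}[sto\text{-}sign(\boldsymbol g_m^{(t)},\boldsymbol b)_i\mid\boldsymbol g_m^{(t)}]=(\boldsymbol g_m^{(t)})_i/b_i$, so marginally $\mathbb{E}[sto\text{-}sign(\boldsymbol g_m^{(t)},\boldsymbol b)_i]=\nabla f_m(w^{(t)})_i/b_i$; being a $\pm1$ variable, it is then marginally identical in distribution to $sto\text{-}sign(\nabla f_m(w^{(t)}),\boldsymbol b)_i$, and independence across $m$ is preserved. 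Hence $p_i^{(t)}$ equals the wrong-aggregation probability of the deterministic run: Corollary~\ref{Corollary1} gives $p_i^{(t)}\le(1-x_i^2)^{M/2}$ with $x_i=|\nabla F(w^{(t)})_i|/b_i$, and for large $b$ Theorem~\ref{bsufficientlylarge} gives $1-2p_i^{(t)}=\frac{M\binom{M-1}{(M-1)/2}}{2^{M-1}b}|\nabla F(w^{(t)})_i|+O(1/b^2)$. Substituting this into \eqref{t7master} with $\boldsymbol b=b\cdot\boldsymbol 1$, $b=T^{1/4}d^{1/4}$ as in Theorem~\ref{convergeratelargeb}, and absorbing the $O(1/b^2)$ correction gives $\frac1T\sum_{t}\|\nabla F(w^{(t)})\|_2^2\le O(d^{3/4}/T^{1/4})\to0$, i.e.\ convergence to a stationary (local) optimum.

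The two conditions enter precisely to handle the clipping of Remark~\ref{Remark1}: when some $|(\boldsymbol g_m^{(t)})_i|>b$, the rounding replaces $(\boldsymbol g_m^{(t)})_i/b$ by $\mathrm{sign}((\boldsymbol g_m^{(t)})_i)$, which biases the compressor's mean and, because of data heterogeneity, could flip the aggregate's sign. Under the second (mini-batch) condition, Assumption~\ref{A4} gives $\mathrm{Var}((\boldsymbol g_m^{(t)})_i)\le\sigma_i^2/T$, so by Chebyshev the probability of clipping and the bias it induces are $O(1/\sqrt T)$ per coordinate --- negligible against the $O(d^{3/4}/T^{1/4})$ bound --- and the computation above applies verbatim up to vanishing terms. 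Under the first condition, $\Pr(\mathrm{sign}(\tfrac1M\sum_m(\boldsymbol g_m^{(t)})_i)\neq\mathrm{sign}(\nabla F(w^{(t)})_i))<\tfrac12$ guarantees that even the clipped aggregate points the right way with probability bounded away from $\tfrac12$; conditioning on the stochastic gradients, applying Theorem~\ref{bsufficientlylarge} to the (possibly clipped) values, and then taking expectations over the sampling noise keeps $1-2p_i^{(t)}$ positive and of order $|\nabla F(w^{(t)})_i|/b$ up to a vanishing correction, so \eqref{t7master} again forces $\frac1T\sum_t\|\nabla F(w^{(t)})\|_2^2\to0$.

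\textbf{Main obstacle.} The descent/telescoping step and the reduction to the deterministic case are routine once one notices the linearity-in-expectation of $sto\text{-}sign$, which collapses the stochastic-gradient run onto the deterministic analysis of Theorems~\ref{convergerate}--\ref{convergeratelargeb}. The genuine difficulty is the last step: controlling the bias caused by clipping heavy stochastic-gradient coordinates at $\pm b$, and showing that either variance reduction (large mini-batch) or the sign-preservation hypothesis keeps it from swamping the $\Theta(|\nabla F(w^{(t)})_i|/b)$ gap in $1-2p_i^{(t)}$. One also needs, implicitly, a bounded-gradient assumption (as in Theorem~\ref{convergeratelargeb}) so that the choice $b=T^{1/4}d^{1/4}$ eventually dominates $\max_{m,i}|\nabla f_m(w^{(t)})_i|$.
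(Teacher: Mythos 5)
Your descent bound and telescoping step match the paper's, but your key reduction is genuinely different from theirs. The paper conditions on the realized stochastic gradients, applies the large-$b$ expansion of Theorem~\ref{bsufficientlylarge} to $\frac1M\sum_m(\boldsymbol g_m^{(t)})_i$, and then decomposes the wrong-aggregation probability as $p_i=p_{i,1}(1-p_{i,2})+p_{i,2}(1-p_{i,1})$, where $p_{i,2}$ is the probability that the stochastic-gradient average itself has the wrong sign. The two hypotheses of the theorem are there precisely to control $p_{i,2}$: the first assumes $p_{i,2}<1/2$ outright so that the factor $(1-2p_{i,2})$ multiplying the $\Theta(|\nabla F(w^{(t)})_i|/b)$ gap stays positive, and the second uses Assumption~\ref{A4} with batch size $T$ to get $p_{i,2}\le\sigma_i/(\sqrt{MT}\,|\nabla F(w^{(t)})_i|)$, contributing only a vanishing $\sum_i\sigma_i/\sqrt{MT}$. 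Your route instead notes that the mapping probability in (\ref{stoprob}) is affine in the gradient, so by the tower property the marginal law of $sto\text{-}sign(\boldsymbol g_m^{(t)},\boldsymbol b)_i$ coincides with that of $sto\text{-}sign(\nabla f_m(w^{(t)}),\boldsymbol b)_i$, with independence across workers preserved; this collapses the stochastic run onto the deterministic analysis of Theorems~\ref{convergerate}--\ref{convergeratelargeb} in one line. That observation is correct (conditionally on $w^{(t)}$, with fresh sampling and compressor randomness each round) and is cleaner than the paper's decomposition; it even buys more, since in the regime where the compressor's probabilities are well defined (i.e.\ $b\ge\max_m|(\boldsymbol g_m^{(t)})_i|$ always, which is exactly the regime in which the paper's own proof operates) it shows neither condition is actually needed.

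The weak point is where you locate the role of the two conditions. You attribute them to clipping of the probabilities in (\ref{stoprob}); the paper never considers clipping in this proof (it is only an implementation detail from Remark~\ref{Remark1}) and uses the conditions for the distinct purpose described above, namely controlling the sign error of the intermediary $\frac1M\sum_m(\boldsymbol g_m^{(t)})_i$. Your clipping analysis is also the least rigorous part of the proposal: Chebyshev controls the probability that a coordinate overshoots $b$, but not the magnitude of the bias once it does (you would need a moment bound on the overshoot), and your treatment of the first condition in the clipped case --- ``keeps $1-2p_i^{(t)}$ positive and of order $|\nabla F(w^{(t)})_i|/b$'' --- is asserted rather than derived; making it precise essentially forces you back into the paper's conditional decomposition. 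If you restrict to the no-clipping regime, as the paper implicitly does, your marginal-law argument plus the computation of Theorem~\ref{convergeratelargeb} completes the proof, and you should then say explicitly that the stated conditions are not used.
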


\begin{Remark}
Note that the first condition is not hard to satisfy. One sufficient condition is that the sampling noise of each worker is symmetric with zero mean. This assumption is also used in \cite{bernstein2018signsgd2}, which shows that the sampling noise is approximately not only symmetric, but also unimodal.
\end{Remark}


\begin{Remark}
We note that the above discussion assumes that $b$ is sufficiently large, which guarantees that the probability of wrong aggregation is less than 0.5. For an arbitrary $b$ that satisfies the condition in the definition of $sto\text{-}sign$, we believe that it is possible to prove that the algorithm converges to the neighborhood of the (local) optimum. In particular, similar to the proof of Theorem \ref{convergerate}, there will be an additional term $\sum_{i=1}^{d}|\nabla F(w^{t})_{i}|\mathds{1}_{|\frac{1}{M}\sum_{m=1}^{M}(\boldsymbol{g}_{m}^{t})_{i}| \leq b\Delta(M)}$. It is possible to upper bound this additional term given the fact that $\mathbb{E}[\frac{1}{M}\sum_{m=1}^{M}(\boldsymbol{g}_{m}^{t})_{i}] = \nabla F(w^{t})_{i}$, despite that more efforts are required to make the analysis rigorous.
\end{Remark}

{\color{black}
\section{Extending to Error-feedback Variant}\label{EFVariants}
To further improved the performance of Algorithm \ref{QuantizedSIGNSGD}, we incorporate the error-feedback technique and propose its error-feedback variant (i.e., Algorithm \ref{Error-feedback noisy SignSGD DP}), where the server utilizes an $\alpha$-approximate compressor $\mathcal{C}(\cdot)$ (i.e., $||\mathcal{C}(\boldsymbol{x})-\boldsymbol{x}||_{2}^{2} \leq (1-\alpha)||\boldsymbol{x}||_{2}^{2}, \forall \boldsymbol{x}$ \cite{karimireddy2019error}) and keeps track of the corresponding compression error.

\begin{algorithm}
\caption{Error-Feedback Stochastic-Sign SGD with majority vote}
\label{Error-feedback noisy SignSGD DP}
\begin{algorithmic}
\STATE \textbf{Input}: learning rate $\eta$, current hypothesis vector $w^{(t)}$, current residual error vector $\tilde{\boldsymbol{e}}^{(t)}$, $M$ workers each with an independent gradient $\boldsymbol{g}_{m}^{(t)}=\nabla f_{m}(w^{(t)})$, the 1-bit compressor $q(\cdot)$.
\STATE \textbf{on server:}
\STATE ~~~~\textbf{pull} $q(\boldsymbol{g}_{m}^{(t)})$ \textbf{from} worker $m$.
\STATE ~~~~\textbf{push} $\tilde{\boldsymbol{g}}^{(t)} = \mathcal{C}\big(\frac{1}{M}\sum_{m=1}^{M}q(\boldsymbol{g}_{m}^{(t)})+\tilde{\boldsymbol{e}}^{(t)}\big)$ \textbf{to} all the workers,
\STATE ~~~~\textbf{update residual error:}
\begin{equation}\label{residualupdate}
\tilde{\boldsymbol{e}}^{(t+1)} = \frac{1}{M}\sum_{m=1}^{M}q(\boldsymbol{g}_{m}^{(t)}) + \tilde{\boldsymbol{e}}^{(t)} - \tilde{\boldsymbol{g}}^{(t)}.
\end{equation}
\STATE \textbf{on each worker:}
\STATE ~~~~\textbf{update} $w^{(t+1)} = w^{(t)} - \eta\tilde{\boldsymbol{g}}^{(t)}$.
\end{algorithmic}
\end{algorithm}

\begin{Remark}
Note that in Algorithm \ref{Error-feedback noisy SignSGD DP}, only the server adopts the error-feedback method. When $dp\text{-}sign$ is used, implementing error-feedback on the worker's side may increase the privacy leakage. Accounting for the additional privacy leakage caused by error-feedback is left as future work.
\end{Remark}


Both $sto\text{-}sign$ and $dp\text{-}sign$ can be used in Algorithm \ref{Error-feedback noisy SignSGD DP} and the corresponding algorithms are termed as {\scriptsize EF\text{-}Sto\text{-}SIGN}SGD and {\scriptsize EF\text{-}DP\text{-}SIGN}SGD, respectively. In the following, we show the convergence and Byzantine resilience of Algorithm \ref{Error-feedback noisy SignSGD DP} when $sto\text{-}sign$ is used. The results can be easily adapted for $dp\text{-}sign$. Particularly, the following theorems can be proved.

\begin{theorem}\label{EFDPSIGNConvergence2}
When Assumptions \ref{A1}, \ref{A2} and \ref{A3} are satisfied, by running Algorithm \ref{Error-feedback noisy SignSGD DP} with $\eta = \frac{1}{\sqrt{Td}}$, $q(\boldsymbol{g}_{m}^{(t)}) = sto\text{-}sign(\nabla f_{m}(w^{(t)}),\boldsymbol{b})$ and $\boldsymbol{b} = b\cdot\boldsymbol{1}$, we have
\begin{equation}
\begin{split}
\frac{1}{T}\sum_{t=0}^{T-1}\frac{||\nabla F(w^{(t)})||^2_{2}}{b} &\leq \frac{(F(w_{0})-F^{*})\sqrt{d}}{\sqrt{T}} + \frac{(1+L+L^2\beta)\sqrt{d}}{\sqrt{T}},
\end{split}
\end{equation}
where $\beta$ is some positive constant.
\end{theorem}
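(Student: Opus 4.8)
The plan is to run the standard virtual-iterate (error-feedback) argument, adapted to the fact that $sto\text{-}sign$ is unbiased only up to a scaling by $1/b$. Write $\boldsymbol{p}^{(t)} \overset{\mathrm{def}}{=} \frac{1}{M}\sum_{m=1}^{M} q(\boldsymbol{g}_m^{(t)})$ and introduce the auxiliary sequence $\tilde{w}^{(t)} \overset{\mathrm{def}}{=} w^{(t)} - \eta\tilde{\boldsymbol{e}}^{(t)}$. Combining the worker update $w^{(t+1)} = w^{(t)} - \eta\tilde{\boldsymbol{g}}^{(t)}$ with the residual rule (\ref{residualupdate}), a one-line cancellation gives $\tilde{w}^{(t+1)} = \tilde{w}^{(t)} - \eta\boldsymbol{p}^{(t)}$, so the virtual iterate performs plain (noisy) gradient descent with step $\boldsymbol{p}^{(t)}$ and the server-side compressor $\mathcal{C}$ drops out of the dynamics entirely. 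Since $\boldsymbol{b} = b\cdot\boldsymbol{1}$, the definition of $sto\text{-}sign$ in (\ref{stoprob}) gives $\mathbb{E}[q(\boldsymbol{g}_m^{(t)}) \mid w^{(t)}] = \frac{1}{b}\nabla f_m(w^{(t)})$, hence $\mathbb{E}[\boldsymbol{p}^{(t)} \mid w^{(t)}] = \frac{1}{b}\nabla F(w^{(t)})$; moreover $\|\boldsymbol{p}^{(t)}\|_2^2 \le d$ deterministically, because each coordinate of $q(\boldsymbol{g}_m^{(t)})$ is $\pm 1$ and averaging over $m$ keeps each coordinate in $[-1,1]$.

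Next I would apply $L$-smoothness (Assumption \ref{A2}) to $F$ along $\tilde{w}^{(t)}$, take conditional expectation, and use $\|\boldsymbol{p}^{(t)}\|_2^2 \le d$ to get $\mathbb{E}[F(\tilde{w}^{(t+1)})] \le \mathbb{E}[F(\tilde{w}^{(t)})] - \frac{\eta}{b}\mathbb{E}\langle\nabla F(\tilde{w}^{(t)}), \nabla F(w^{(t)})\rangle + \frac{L\eta^2 d}{2}$. To turn the cross term into a multiple of $\|\nabla F(w^{(t)})\|_2^2$, I would write $\nabla F(\tilde{w}^{(t)}) = \nabla F(w^{(t)}) + (\nabla F(\tilde{w}^{(t)}) - \nabla F(w^{(t)}))$, bound $\|\nabla F(\tilde{w}^{(t)}) - \nabla F(w^{(t)})\|_2 \le L\|\tilde{w}^{(t)}-w^{(t)}\|_2 = L\eta\|\tilde{\boldsymbol{e}}^{(t)}\|_2$ by smoothness, and apply Young's inequality; this leaves $-\frac{\eta}{2b}\|\nabla F(w^{(t)})\|_2^2$ on the favorable side plus an error term of order $\frac{\eta^3 L^2}{b}\|\tilde{\boldsymbol{e}}^{(t)}\|_2^2$.

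The crux is then bounding the accumulated residual $\|\tilde{\boldsymbol{e}}^{(t)}\|_2^2$. From (\ref{residualupdate}), $\tilde{\boldsymbol{e}}^{(t+1)} = (\boldsymbol{p}^{(t)} + \tilde{\boldsymbol{e}}^{(t)}) - \mathcal{C}(\boldsymbol{p}^{(t)} + \tilde{\boldsymbol{e}}^{(t)})$, so the $\alpha$-approximate property of $\mathcal{C}$ gives $\|\tilde{\boldsymbol{e}}^{(t+1)}\|_2^2 \le (1-\alpha)\|\boldsymbol{p}^{(t)} + \tilde{\boldsymbol{e}}^{(t)}\|_2^2$; splitting with Young's inequality at a parameter $\gamma \asymp \alpha$ produces a genuine contraction $\|\tilde{\boldsymbol{e}}^{(t+1)}\|_2^2 \le (1-\tfrac{\alpha}{2})\|\tilde{\boldsymbol{e}}^{(t)}\|_2^2 + \tfrac{2(1-\alpha)}{\alpha}\|\boldsymbol{p}^{(t)}\|_2^2$. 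Unrolling from $\tilde{\boldsymbol{e}}^{(0)} = \boldsymbol{0}$ with $\|\boldsymbol{p}^{(t)}\|_2^2 \le d$ gives a uniform bound $\|\tilde{\boldsymbol{e}}^{(t)}\|_2^2 \le \beta d$ for an $\alpha$-dependent constant $\beta$ — this is precisely where the $L^2\beta$ term in the statement comes from. Then I would telescope the smoothness inequality over $t = 0,\dots,T-1$, use $\tilde{w}^{(0)} = w^{(0)}$ (since $\tilde{\boldsymbol{e}}^{(0)}=\boldsymbol{0}$) and $F(\tilde{w}^{(T)}) \ge F^*$ (Assumption \ref{A1}), substitute $\eta = \frac{1}{\sqrt{Td}}$ so that $\eta d = \sqrt{d/T}$ and $\eta^2 d = 1/T$, divide through by the appropriate multiple of $\eta T/b$, and absorb the $\alpha$-dependent factors into $\beta$ to obtain the claimed bound.

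The main obstacle is the coupling inherent to error feedback: the descent happens on $\tilde{w}^{(t)}$ but we need a guarantee on $\nabla F(w^{(t)})$, so controlling the drift $\|\tilde{w}^{(t)} - w^{(t)}\|_2 = \eta\|\tilde{\boldsymbol{e}}^{(t)}\|_2$ and checking that its contribution to the telescoped bound stays at the same $O(\sqrt{d/T})$ order — rather than dominating — is the delicate point. The one-step cancellation giving the clean virtual-iterate recursion, the $1/b$-unbiasedness of $sto\text{-}sign$, and the coordinatewise bound $\|\boldsymbol{p}^{(t)}\|_2^2 \le d$ are all routine, as is the final bookkeeping of constants.
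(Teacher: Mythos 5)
Your proposal is correct and follows essentially the same route as the paper: the virtual iterate $y^{(t)}=w^{(t)}-\eta\tilde{\boldsymbol{e}}^{(t)}$ with the one-step cancellation, the $1/b$-unbiasedness of $sto\text{-}sign$ to produce the $-\|\nabla F(w^{(t)})\|_2^2/b$ term, the uniform residual bound $\|\tilde{\boldsymbol{e}}^{(t)}\|_2^2\leq\beta d$ via the $\alpha$-approximate property and a recurrent Young's inequality, and the final telescoping with $\eta=1/\sqrt{Td}$. The only (immaterial) difference is that you apply Young's inequality to pair $\nabla F(y^{(t)})-\nabla F(w^{(t)})$ with $\nabla F(w^{(t)})$ after taking expectations, keeping half the negative term, whereas the paper pairs it with the update direction $\frac{1}{M}\sum_m sto\text{-}sign(\boldsymbol{g}_m^{(t)},\boldsymbol{b})$ before taking expectations; both yield the stated bound up to the absorbed constants.
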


\begin{Remark}
Theorem \ref{EFDPSIGNConvergence2} shows that if $b_{i}$'s are upper bounded (i.e., when $|\nabla f_{m}(w^{(t)})_{i}| \leq Q, \forall m,t,i$ as in \cite{chen2019distributed}), {\scriptsize EF\text{-}Sto\text{-}SIGN}SGD can converge to the (local) optimum while {\scriptsize Sto-SIGN}SGD only converges to the neighborhood of optimum (c.f. Theorem \ref{convergerate}).

In our experiments, the server adopts the compressor $\mathcal{C}(\boldsymbol{x}) = \frac{1}{M}sign(\boldsymbol{x})$. In this case, the communication overhead of {\scriptsize EF\text{-}Sto\text{-}SIGN}SGD is essentially the same as {\scriptsize Sto-SIGN}SGD. Utilizing the fact that the output of the compressor $q(\cdot) \in \{-1,1\}$, it can be readily shown that $||\frac{1}{M}sign(\boldsymbol{x})-\boldsymbol{x}||_{2}^{2} < ||\boldsymbol{x}||_{2}^{2}$ with $\boldsymbol{x}=\frac{1}{M}\sum_{m=1}^{M}q(\boldsymbol{g}_{m}^{(t)})+\tilde{\boldsymbol{e}}^{(t)}$, which suggests that there exists some $\alpha$ such that $\mathcal{C}(\cdot)$ is $\alpha$-approximate compressor. More details can be found in Section \ref{dpsignextend} of the supplementary document.
\end{Remark}

Besides the fact that error-feedback is only used on the server's side, another difference between Algorithm \ref{Error-feedback noisy SignSGD DP} and those in \cite{karimireddy2019error,zheng2019communication} is that it does not require the workers to share the magnitude information about the gradients. On the one hand, it saves communication overhead. On the other hand, it keeps the resilience against the re-scaling attacks. By following a similar strategy to the proofs of Theorem \ref{EFDPSIGNConvergence2} and considering the impact of Byzantine attackers, we obtain the Byzantine resilience of Algorithm \ref{Error-feedback noisy SignSGD DP} as follows.

\begin{theorem}\label{T9}
At each iteration $t$, Algorithm \ref{Error-feedback noisy SignSGD DP} can at least tolerate $k_{i} = |\sum_{m=1}^{M}\nabla f_{m}(w^{(t)})_i|/b$ Byzantine attackers on the $i$-th coordinate of the gradient. Overall, the number of Byzantine workers that Algorithm \ref{Error-feedback noisy SignSGD DP} can tolerate is given by $min_{1\leq i \leq d}k_{i}$.
\end{theorem}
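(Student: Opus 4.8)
The plan is to obtain Theorem~\ref{T9} as a Byzantine-robust refinement of the convergence proof of Theorem~\ref{EFDPSIGNConvergence2}, treating the $B$ attackers as a deterministic perturbation of the coordinate-wise vote. Under the worst-case model of Section~\ref{ByzantineAnalysis} each Byzantine worker sends $-sign(\nabla F(w^{(t)}))$, so on coordinate $i$ the server, before applying $\mathcal{C}$, aggregates the $M$ benign outputs $sto\text{-}sign(\nabla f_m(w^{(t)}),\boldsymbol{b})_i$ together with $B$ copies of $-sign(\nabla F(w^{(t)})_i)$. Exactly as in the proof of Theorem~\ref{EFDPSIGNConvergence2}, I would first introduce the virtual iterate $\tilde{w}^{(t)} = w^{(t)} - \eta\tilde{\boldsymbol{e}}^{(t)}$ and verify, using the residual update \eqref{residualupdate} of Algorithm~\ref{Error-feedback noisy SignSGD DP}, that it obeys the clean recursion $\tilde{w}^{(t+1)} = \tilde{w}^{(t)} - \tfrac{\eta}{M}\sum_{\text{all workers}} q(\boldsymbol{g}_m^{(t)})$, i.e. the compressor $\mathcal{C}$ drops out of the virtual trajectory. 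Applying $L$-smoothness (Assumption~\ref{A2}) to $F$ along $\{\tilde{w}^{(t)}\}$ and comparing $\nabla F(\tilde{w}^{(t)})$ with $\nabla F(w^{(t)})$ via $\|\tilde{w}^{(t)}-w^{(t)}\|_2 = \eta\|\tilde{\boldsymbol{e}}^{(t)}\|_2$, the whole argument reduces to controlling the conditional drift of the aggregate.

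The core computation is the conditional expectation over the benign workers' stochastic signs. With $\boldsymbol{b}=b\cdot\boldsymbol{1}$ we have $\mathbb{E}[sto\text{-}sign(\nabla f_m(w^{(t)}),\boldsymbol{b})_i] = \nabla f_m(w^{(t)})_i/b$, so the $i$-th coordinate of the expected aggregate equals $\tfrac{1}{M}\big(\sum_{m=1}^M \nabla f_m(w^{(t)})_i/b - B\,sign(\nabla F(w^{(t)})_i)\big) = sign(\nabla F(w^{(t)})_i)\big(\tfrac{|\nabla F(w^{(t)})_i|}{b} - \tfrac{B}{M}\big)$, where I used $\sum_{m=1}^M \nabla f_m(w^{(t)})_i = M\nabla F(w^{(t)})_i$, hence $k_i = |\sum_{m=1}^M \nabla f_m(w^{(t)})_i|/b = M|\nabla F(w^{(t)})_i|/b$. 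Therefore, provided $B\le k_i$ on every coordinate — more precisely, provided there is a positive constant $c$ with $B/M \le (1-c)|\nabla F(w^{(t)})_i|/b$, mirroring the role of $c$ in Theorem~\ref{ByzantineResienceDP} — the expected drift points along $-sign(\nabla F(w^{(t)})_i)$ with magnitude at least $c|\nabla F(w^{(t)})_i|/b$, so that $\langle \nabla F(w^{(t)}), \mathbb{E}[\tfrac{1}{M}\sum q(\boldsymbol{g}_m^{(t)})]\rangle \ge \tfrac{c}{b}\|\nabla F(w^{(t)})\|_2^2$. This is precisely the inequality that powers Theorem~\ref{EFDPSIGNConvergence2}, now with an extra factor $c$; summing the smoothness bound over $t$, telescoping $F(\tilde{w}^{(0)})-F(\tilde{w}^{(T)})$, bounding $\sum_t\|\tilde{\boldsymbol{e}}^{(t)}\|_2^2$ by a constant via the $\alpha$-approximate property of $\mathcal{C}$, bounding $\|\tilde{\boldsymbol{g}}^{(t)}\|_2^2$ by $O(d)$, and dividing by $T$ reproduces the bound of Theorem~\ref{EFDPSIGNConvergence2} with constants that depend on $c=c(B)$. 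The ``overall'' claim is then immediate: the resulting guarantee controls $\tfrac{1}{T}\sum_t\|\nabla F(w^{(t)})\|_2^2/b$, an aggregate of per-coordinate progress, so all $d$ coordinate conditions must hold at once, forcing $B\le\min_{1\le i\le d}k_i$.

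The main obstacle is the coupling between the error-feedback residual and the Byzantine perturbation: since $\tilde{\boldsymbol{e}}^{(t+1)}$ is built from the corrupted aggregate, one must show the residual stays uniformly bounded and that the deterministic attacker term does not accumulate destructively across the telescoping. The key fact here is that every $q(\cdot)_i\in\{-1,1\}$ (benign and Byzantine alike), so each coordinate of $\tfrac{1}{M}\sum_{\text{all workers}}q(\boldsymbol{g}_m^{(t)})$ is bounded uniformly in the attack, which caps the per-step compression error and makes the geometric accumulation of $\|\tilde{\boldsymbol{e}}^{(t)}\|_2^2$ go through exactly as in the attack-free case. A secondary point to make precise is the meaning of ``tolerating $k_i$ attackers'' for this stateful algorithm: as in Theorem~\ref{ByzantineResienceDP}, it should be read as the existence of a positive constant $c$ for which the guarantee of Theorem~\ref{EFDPSIGNConvergence2} persists (with constants degrading as $B\uparrow k_i$), rather than a hard threshold at $B=k_i$.
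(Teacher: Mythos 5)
Your proposal is correct and follows essentially the same route as the paper's proof: the virtual iterate $y^{(t)}=w^{(t)}-\eta\tilde{\boldsymbol{e}}^{(t)}$ absorbs the server-side compressor, the residual is bounded via the $\alpha$-approximate property, and the key step is the expected-drift computation using $\mathbb{E}[sto\text{-}sign(\nabla f_m(w^{(t)}),\boldsymbol{b})_i]=\nabla f_m(w^{(t)})_i/b$ against the worst-case Byzantine contribution of $-B\,sign(\nabla F(w^{(t)})_i)$, giving descent precisely when $B<|\sum_{m}\nabla f_m(w^{(t)})_i|/b$ on every coordinate. The only cosmetic differences are the normalization ($1/M$ versus the paper's $1/(M+B)$) and that the paper states the resulting descent term as $-c\|\nabla F(w^{(t)})\|_1$ rather than your $-(c/b)\|\nabla F(w^{(t)})\|_2^2$, neither of which changes the argument.
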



\begin{figure}
\centering
\begin{subfigure}
  \centering
  \includegraphics[width=0.32\textwidth]{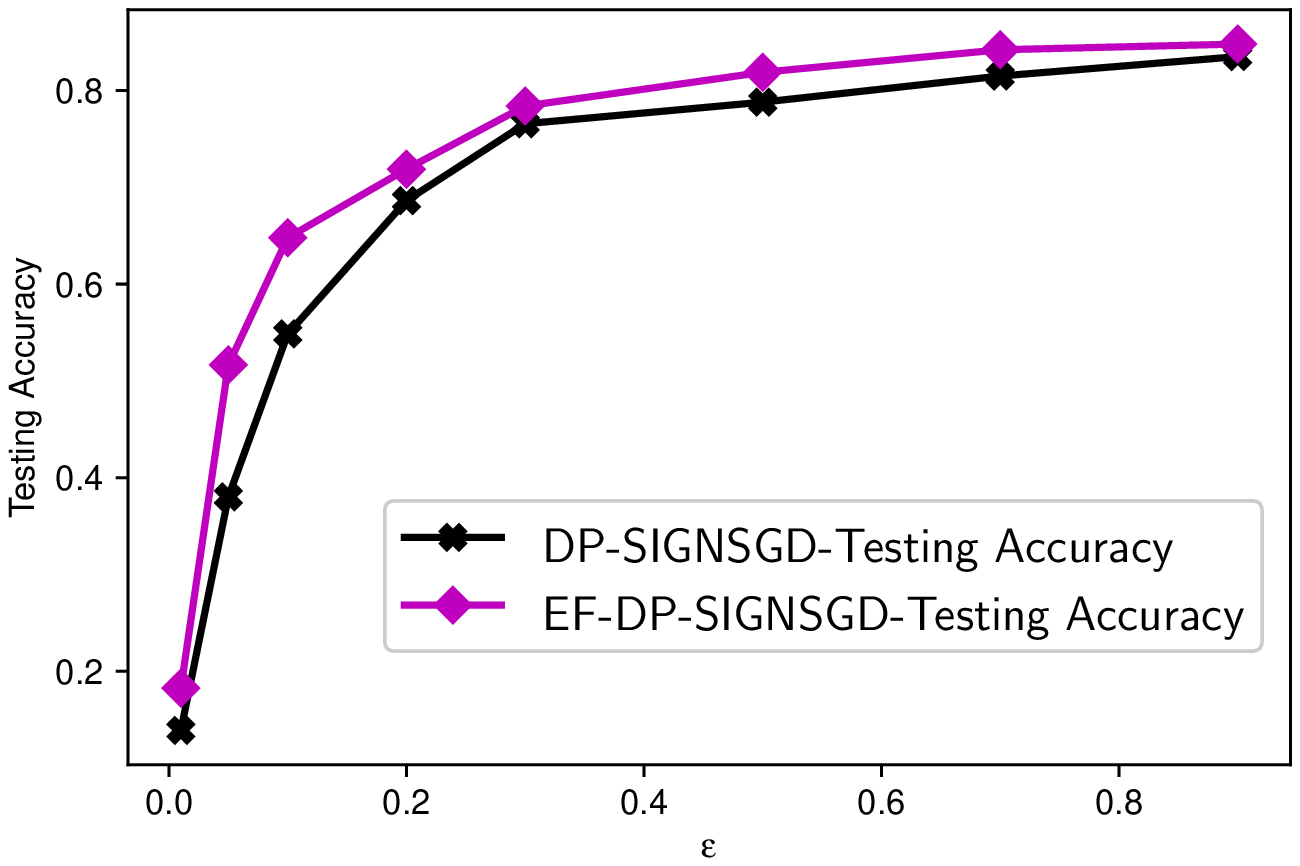}
\end{subfigure}%
\begin{subfigure}
  \centering
  \includegraphics[width=0.32\textwidth]{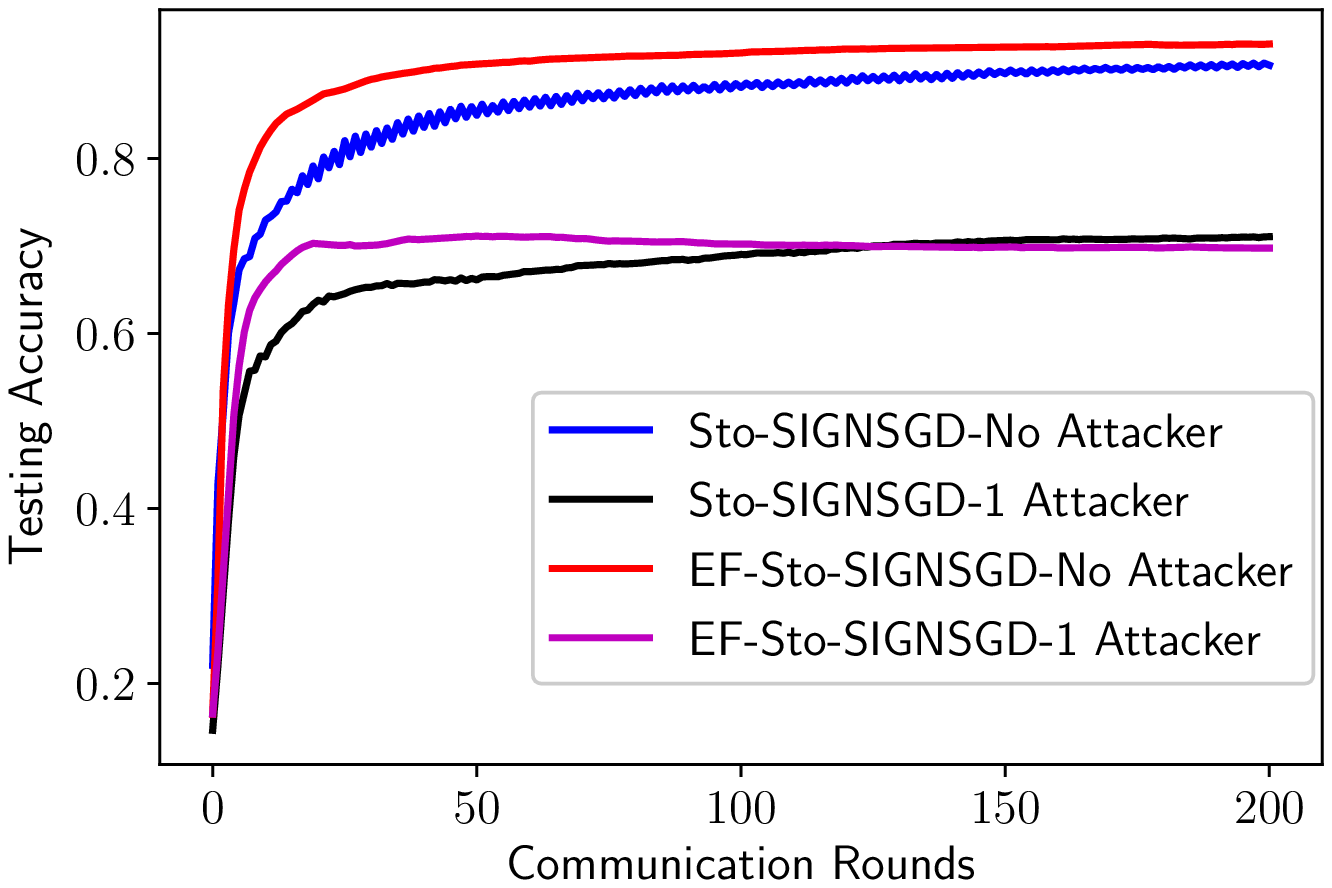}
\end{subfigure}%
\begin{subfigure}
  \centering
  \includegraphics[width=0.32\textwidth]{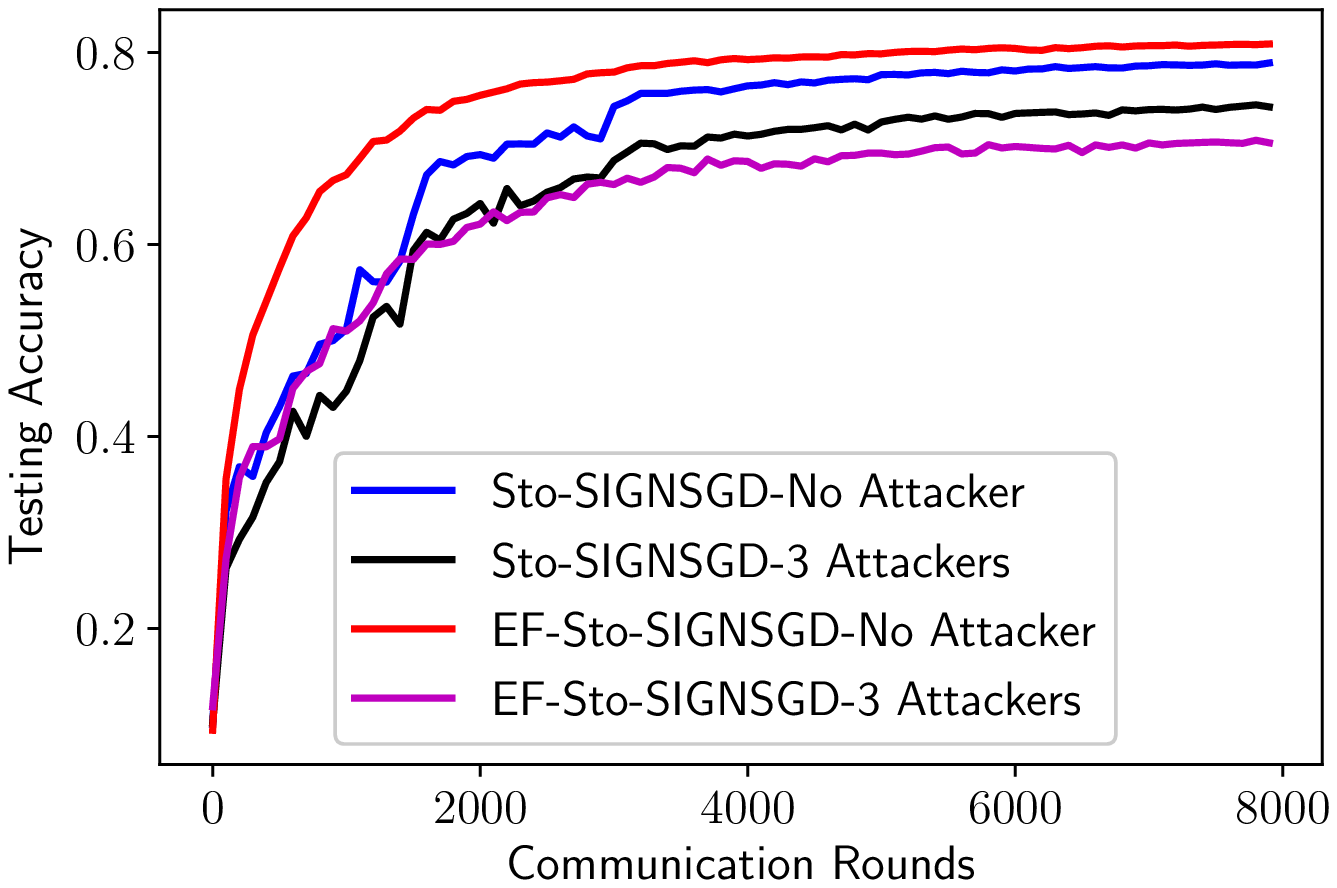}
\end{subfigure}
\caption{\color{black}The first figure shows the performance of {\scriptsize DP-SIGN}SGD and {\scriptsize EF-DP-SIGN}SGD on MNIST for different $\epsilon$ when $\delta = 10^{-5}$, without Byzantine attackers. The $\epsilon$'s measure the per epoch privacy guarantee of the algorithms. The second figure compares {\scriptsize Sto\text{-}SIGN}SGD with {\scriptsize EF\text{-}Sto\text{-}SIGN}SGD on MNIST with $\boldsymbol{b}=0.02\cdot\boldsymbol{1}$. The last figure compares {\scriptsize Sto\text{-}SIGN}SGD with {\scriptsize EF\text{-}Sto\text{-}SIGN}SGD on CIFAR-10 with optimal $\boldsymbol{b}$.}
\label{dp_impact_epsilon2}
\vspace{-0.3cm}
\end{figure}

\textbf{Experimental results.} For {\scriptsize DP\text{-}SIGN}SGD and {\scriptsize EF\text{-}DP\text{-}SIGN}SGD, we follow the idea of gradient clipping in \cite{abadi2016deep} to bound the sensitivity $\Delta_{2}$. After computing the gradient for each individual training sample in the local dataset, each worker clips it in its $L_{2}$ norm for a clipping threshold $C$ to ensure that $\Delta_{2}\leq C$. We set $C=4$ in the experiments and the results are shown in the first figure of Fig. \ref{dp_impact_epsilon2}. It can be observed that when there is no Byzantine attackers, {\scriptsize EF-DP-SIGN}SGD outperforms {\scriptsize DP\text{-}SIGN}SGD for all the examined $\epsilon$'s, which demonstrates its effectiveness. 

Another observation is that the error-feedback variants do not necessarily perform better. For instance, in the last figure of Fig. \ref{dp_impact_epsilon2}, when there are 3 Byzantine attackers, the testing accuracy of {\scriptsize EF\text{-}Sto\text{-}SIGN}SGD is worse than that of {\scriptsize Sto\text{-}SIGN}SGD. In the beginning of the training process, $k_{i}$'s in Theorem \ref{T9} are large enough such that the algorithm can tolerate the Byzantine attacker. As the gradients decrease, the probability of wrong aggregation increases. In this case, the error-feedback mechanism may carry the wrong aggregations to the future iterations and have a negative impact on the learning process.
}

\section{Conclusion}
We propose a Stochastic-Sign SGD framework which utilizes two novel gradient compressors and can deal with heterogeneous data distribution. The proposed algorithms are proved to converge in the heterogeneous data distribution scenario with the same rate as {\scriptsize SIGN}SGD in the homogeneous data distribution case. In particular, the proposed differentially private compressor $dp\text{-}sign$ improves the privacy and the accuracy simultaneously without sacrificing any communication efficiency. Then, we further improve the learning performance of the proposed method by incorporating the error-feedback scheme. In addition, the Byzantine resilience of the proposed algorithms is shown analytically. It is expected that the proposed algorithms can find wide applications in the design of communication efficient, differentially private and Byzantine resilient FL algorithms.
\newpage

\begin{center}
{\Large \textbf{Supplementary Material}}
\end{center}

\setlength{\abovedisplayskip}{3pt}
\setlength{\belowdisplayskip}{3pt}
The supplementary material is organized as follows. In Section \ref{DPDefinition}, we formally provide the definition of local differential privacy \cite{dwork2014algorithmic}. In Section \ref{Proofs}, we provide the proofs of the theoretical results presented in the main document. Discussions about the extended differentially private compressor $dp\text{-}sign$ when $\delta=0$ are provided in Section \ref{DPLAPLACE}. Discussions about the server's compressor $\mathcal{C}(\cdot)$ in Algorithm \ref{Error-feedback noisy SignSGD DP} are provided in Section \ref{dpsignextend}. The details about the implementation of our experiments and some additional experimental results are presented in Section \ref{DetailsImplementation}.
\setcounter{section}{0}
\setcounter{theorem}{0}
\setcounter{Lemma}{0}
\setcounter{Corollary}{0}
\section{Definition of Local Differential Privacy}\label{DPDefinition}
In this work, we study the privacy guarantee of the proposed algorithms from the lens of local differential privacy \cite{dwork2014algorithmic}, which provides a strong notion of individual privacy in data analysis. The definition of local differential privacy is formally given as follows.
\begin{Definition}
Given a set of local datasets $\mathcal{D}$ provided with a notion of neighboring local datasets $\mathcal{N}_{\mathcal{D}}\subset \mathcal{D}\times \mathcal{D}$ that differ in only one data point. For a query function $f: \mathcal{D}\rightarrow \mathcal{X}$, a mechanism $\mathcal{M}:\mathcal{X}\rightarrow \mathcal{O}$ to release the answer of the query is defined to be $(\epsilon,\delta)$-locally differentially private if for any measurable subset $\mathcal{S}\subseteq\mathcal{O}$ and two neighboring local datasets $(D_1,D_2)\in \mathcal{N}_{\mathcal{D}}$,
\begin{equation}
P(\mathcal{M}(f(D_1))\in \mathcal{S}) \leq e^{\epsilon}P(\mathcal{M}(f(D_2))\in \mathcal{S}) + \delta.
\end{equation}
\end{Definition}

A key quantity in characterizing local differential privacy for many mechanisms is the sensitivity of the query $f$ in a given norm $l_{r}$, which is defined as
\begin{equation}\label{sensitivity}
\Delta_{r} = \max_{(D_1,D_2)\in\mathcal{N}_{\mathcal{D}}}||f(D_1)-f(D_2)||_r.
\end{equation}

For more details about the concept of differential privacy, the reader is referred to \cite{dwork2014algorithmic} for a survey.

\section{Proofs}\label{Proofs}
\subsection{Proof of Theorem 1}
\begin{theorem}\label{SPLemma1}
\color{black}
Let $u_{1},u_{2},\cdots,u_{M}$ be $M$ known and fixed real numbers and consider binary random variables $\hat{u}_{m}$, $1\leq m \leq M$. Suppose that $\Bar{p} = \frac{1}{M}\sum_{m=1}^{M}\Pr\left(sign\left(\frac{1}{M}\sum_{m=1}^{M}u_{m}\right) \neq \hat{u}_{m}\right) < \frac{1}{2}$, we have
\begin{equation}\label{SPProbabilityOfError}
\begin{split}
P\bigg(sign\bigg(\frac{1}{M}\sum_{m=1}^{M}\hat{u}_{m}\bigg)&\neq sign\bigg(\frac{1}{M}\sum_{m=1}^{M}u_{m}\bigg)\bigg) \leq \big[4\Bar{p}(1-\Bar{p})\big]^{\frac{M}{2}},
\end{split}
\end{equation}
\end{theorem}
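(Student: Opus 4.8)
The plan is to recognize the left-hand side as a Chernoff-type tail bound for a sum of independent but not necessarily identically distributed Bernoulli variables. Write $s := \mathrm{sign}\big(\frac1M\sum_{m=1}^M u_m\big)\in\{-1,+1\}$, which is a fixed (deterministic) sign since the $u_m$ are fixed, and set $X_m := \mathds{1}[\hat u_m \neq s]$. Then the $X_m$ are independent, $\Pr(X_m = 1) = p_m$ with $\frac1M\sum_m p_m = \bar p$, and because each $\hat u_m$ takes values in $\{-1,+1\}$ we have $\sum_m \hat u_m = s\big(M - 2\sum_m X_m\big)$. Hence the error event $\{\mathrm{sign}(\frac1M\sum_m\hat u_m)\neq s\}$ is contained in $\{\sum_m X_m \geq M/2\}$ — it suffices to bound this slightly larger event, which also absorbs the ambiguous tie/zero case of $\mathrm{sign}$. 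So the task reduces to showing $\Pr\big(\sum_m X_m \geq M/2\big) \leq [4\bar p(1-\bar p)]^{M/2}$.

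Next I would apply the exponential Markov inequality: for every $\lambda > 0$,
\[
\Pr\Big(\sum_{m=1}^M X_m \geq \tfrac M2\Big) \leq e^{-\lambda M/2}\,\mathbb{E}\Big[e^{\lambda\sum_m X_m}\Big] = e^{-\lambda M/2}\prod_{m=1}^M\big(1 - p_m + p_m e^{\lambda}\big),
\]
where independence of the $\hat u_m$ (hence of the $X_m$) is used in the last equality. Since $p \mapsto 1 - p + p e^{\lambda}$ is affine (in particular concave) in $p$, the AM–GM / Jensen inequality gives $\prod_m(1-p_m+p_m e^{\lambda}) \leq \big(1 - \bar p + \bar p e^{\lambda}\big)^M$, so the bound becomes $\big[e^{-\lambda/2}(1-\bar p+\bar p e^{\lambda})\big]^M$ for all $\lambda>0$. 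This Jensen step — rather than a textbook i.i.d.\ Chernoff bound — is exactly what lets the per-term probabilities $p_m$ differ, which is one of the points the remark emphasizes.

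Finally I would optimize the scalar function $g(\lambda) = e^{-\lambda/2}(1-\bar p+\bar p e^{\lambda})$ over $\lambda>0$: solving $g'(\lambda)=0$ gives $\bar p e^{\lambda} = 1-\bar p$, i.e.\ $\lambda^\star = \ln\frac{1-\bar p}{\bar p}$, which is strictly positive precisely because $\bar p < 1/2$ — this is where the hypothesis enters. Substituting $e^{\lambda^\star/2} = \sqrt{(1-\bar p)/\bar p}$ and $1-\bar p+\bar p e^{\lambda^\star} = 2(1-\bar p)$ yields $g(\lambda^\star) = 2\sqrt{\bar p(1-\bar p)}$, hence $\Pr(\sum_m X_m \geq M/2) \leq \big(2\sqrt{\bar p(1-\bar p)}\big)^M = [4\bar p(1-\bar p)]^{M/2}$, as claimed. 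There is no real obstacle here; the proof is short and the only items worth stating carefully are the reduction to $\{\sum_m X_m\geq M/2\}$ (covering ties), the use of independence and of $\hat u_m\in\{-1,+1\}$, the concavity step, and the verification that the optimal $\lambda^\star$ is positive. I expect the concavity/optimization step to be the part most worth writing out in detail.
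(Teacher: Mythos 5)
Your proposal is correct and follows essentially the same route as the paper's proof: an exponential Markov (Chernoff) bound on the count of disagreeing signs, a Jensen/AM--GM step to replace the individual $p_m$ by their average $\bar p$, and optimization of the exponent at $\lambda^\star=\ln\frac{1-\bar p}{\bar p}$, whose positivity is exactly where $\bar p<\tfrac12$ enters. The only differences are cosmetic (you use $\{0,1\}$-valued indicators and the event $\{\sum_m X_m\geq M/2\}$, while the paper uses $\pm1$-valued variables and $\{\sum_m X_m\geq 0\}$, which are the same event under the substitution $\lambda=2a$).
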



\begin{proof}
\color{black}
Define a series of random variables $\{X_m\}_{m=1}^{M}$ given by
\begin{equation}
X_{m} =
\begin{cases}
\hfill 1, \hfill &\text{if $\hat{u}_{m} \neq sign\bigg(\frac{1}{M}\sum_{m=1}^{M}u_{m}\bigg)$},\\
\hfill -1, \hfill &\text{if $\hat{u}_{m} = sign\bigg(\frac{1}{M}\sum_{m=1}^{M}u_{m}\bigg)$.}
\end{cases}
\end{equation}

In particular, $X_{m}$ can be considered as the outcome of one Bernoulli trial with successful probability $P(X_{m} = 1)$, and we have
\begin{equation}\label{eq1}
P\bigg(sign\bigg(\frac{1}{M}\sum_{m=1}^{M}\hat{u}_{m}\bigg)\neq sign\bigg(\frac{1}{M}\sum_{m=1}^{M}u_{m}\bigg)\bigg) = P\left(\sum_{m=1}^{M}X_{m} \geq 0\right).
\end{equation}

For any variable $a>0$, we have
\begin{equation}
\begin{split}
P\left(\sum_{m=1}^{M}X_m \geq 0\right) = P\left(e^{a\sum_{m=1}^{M}X_m} \geq e^{0}\right) &\leq \frac{\mathbb{E}[e^{a\sum_{m=1}^{M}X_m}]}{e^{0}} = \mathbb{E}[e^{a\sum_{m=1}^{M}X_m}],
\end{split}
\end{equation}
which is due to Markov's inequality, given the fact that $e^{a\sum_{m=1}^{M}X_m}$ is non-negative. For the ease of presentation, let $P(X_{m} = 1) = p_{m}$, we have,
\begin{equation}
\begin{split}
 \mathbb{E}[e^{a\sum_{m=1}^{M}X_m}] &= e^{\ln(\mathbb{E}[e^{a\sum_{m=1}^{M}X_m}])}= e^{\ln(\prod_{m=1}^{M}\mathbb{E}[e^{aX_m}])} = e^{\sum_{m=1}^{M}\ln(\mathbb{E}[e^{aX_m}])} \\
 &=e^{\sum_{m=1}^{M}\ln(e^{a}p_{m}+e^{-a}(1-p_{m}))} \\
 &=e^{M\left(\frac{1}{M}\sum_{m=1}^{M}\ln(e^{a}p_{m}+e^{-a}(1-p_{m}))\right)}\\
 &\leq e^{M\ln(e^{a}\Bar{p}+e^{-a}(1-\Bar{p}))},
\end{split}
\end{equation}
where $\Bar{p} = \frac{1}{M}\sum_{m=1}^{M}p_{m}$ and the inequality is due to Jensen's inequality.
Optimizing $a$ yields $a=\ln\left(\sqrt{\frac{1-\Bar{p}}{\Bar{p}}}\right) > 0$ and
\begin{equation}
e^{M\ln(e^{a}\Bar{p}+e^{-a}(1-\Bar{p}))} = [4\Bar{p}(1-\Bar{p})]^{\frac{M}{2}},
\end{equation}
which completes the proof.
\end{proof}

\subsection{Proof of Corollary 1}
\begin{Corollary}\label{SPCorollary1}
\color{black}
Let $u_{1},u_{2},\cdots,u_{M}$ be $M$ known and fixed real numbers and consider binary random variables $\hat{u}_{m} = sto\text{-}sign(u_{m},b)$, $1\leq m \leq M$. We have $\Bar{p} = \frac{1}{M}\sum_{m=1}^{M}\Pr\left(sign\left(\frac{1}{M}\sum_{m=1}^{M}u_{m}\right) \neq \hat{u}_{m}\right) = \frac{bM-|\sum_{m=1}^{M}u_{m}|}{2bM}$, and
\begin{equation}\label{SPProbabilityOfError2}
\begin{split}
P\bigg(sign\bigg(\frac{1}{M}\sum_{m=1}^{M}\hat{u}_{m}\bigg)&\neq sign\bigg(\frac{1}{M}\sum_{m=1}^{M}u_{m}\bigg)\bigg) \leq \left(1-x^2\right)^{\frac{M}{2}},
\end{split}
\end{equation}
where $x = \frac{|\sum_{m=1}^{M}u_{m}|}{bM}$.
\end{Corollary}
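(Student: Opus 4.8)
The plan is to derive Corollary 1 directly from Theorem 1, so the only real work is an explicit evaluation of the average error probability $\bar{p}$ for the $sto\text{-}sign$ compressor, followed by a one-line algebraic simplification.

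First I would compute $\bar{p} = \frac{1}{M}\sum_{m=1}^{M}\Pr\big(sign(\frac{1}{M}\sum_{m=1}^{M}u_m) \neq \hat{u}_m\big)$. Write $S = \sum_{m=1}^{M}u_m$. Since $sto\text{-}sign(-u,b)$ is distributed as $-sto\text{-}sign(u,b)$, it suffices to treat $S \geq 0$; the case $S < 0$ is symmetric. When $S > 0$ we have $sign(S/M) = 1$, so the relevant event is $\{\hat{u}_m = -1\}$, which by the definition of $sto\text{-}sign$ occurs with probability $\frac{b-u_m}{2b}$. Averaging over $m$ gives $\bar{p} = \frac{1}{M}\sum_{m=1}^{M}\frac{b-u_m}{2b} = \frac{bM - S}{2bM} = \frac{bM - |S|}{2bM}$, as claimed (for $S < 0$ one instead looks at $\Pr(\hat{u}_m = 1) = \frac{b+u_m}{2b}$, whose average is $\frac{bM+S}{2bM} = \frac{bM-|S|}{2bM}$ again). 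The constraint $b \geq \max_m |u_m|$ ensures $|S| \leq \sum_m |u_m| \leq bM$, so $\bar{p} \in [0, \tfrac12]$ and, writing $x = |S|/(bM)$, we have $x \in [0,1]$, $\bar{p} = \frac{1-x}{2}$, and $1-\bar{p} = \frac{1+x}{2}$.

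Next I would split on whether $|S| > 0$. If $|S| = 0$, then $x = 0$ and the claimed bound reads $(1-0)^{M/2} = 1$, which is vacuous, so there is nothing to prove. If $|S| > 0$, then $\bar{p} = \frac{bM-|S|}{2bM} < \frac12$, so Theorem 1 applies and yields $P(\text{wrong aggregation}) \leq [4\bar{p}(1-\bar{p})]^{M/2}$. Substituting $\bar{p} = \frac{1-x}{2}$ gives $4\bar{p}(1-\bar{p}) = (1-x)(1+x) = 1-x^2$, hence the bound $(1-x^2)^{M/2}$, completing the proof.

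This argument has essentially no hard step; the only points requiring care are (i) the sign-convention bookkeeping — checking that the formula for $\bar{p}$ is symmetric in the sign of $S$ and reduces to $\frac{bM-|S|}{2bM}$ in every case — and (ii) handling the degenerate case $|S| = 0$ separately, since Theorem 1 presumes the strict inequality $\bar{p} < \frac12$. Both are routine, and the observation $x \in [0,1]$ (so that $1-x^2 \geq 0$ and the bound is meaningful) is exactly what the hypothesis $b \geq \max_m |u_m|$ provides.
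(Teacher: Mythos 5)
Your proof is correct and follows exactly the route the paper takes: compute $\Bar{p}$ explicitly from the definition of $sto\text{-}sign$, observe $\Bar{p} = \frac{1-x}{2}$, and plug into the bound $[4\Bar{p}(1-\Bar{p})]^{M/2} = (1-x^2)^{M/2}$ from Theorem~\ref{Lemma1}. You are in fact more careful than the paper's one-line argument, since you explicitly check the sign-symmetry of the formula for $\Bar{p}$ and separately handle the degenerate case $\sum_{m}u_{m}=0$, where the stated bound is vacuous and Theorem~\ref{Lemma1}'s hypothesis $\Bar{p}<\tfrac12$ fails.
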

\begin{proof}
\color{black}
It can be easily shown that $\Bar{p} = \frac{1}{M}\sum_{m=1}^{M}\Pr\left(sign\left(\frac{1}{M}\sum_{m=1}^{M}u_{m}\right) \neq \hat{u}_{m}\right) = \frac{bM-|\sum_{m=1}^{M}u_{m}|}{2bM}$ when $\hat{u}_{m} = sto\text{-}sign(u_{m},b)$. Plugging it into (\ref{SPProbabilityOfError}) completes the proof.
\end{proof}

\subsection{Proof of Theorem 2}
\begin{theorem}\label{SPconvergerate}
Suppose Assumptions \ref{A1}, \ref{A2} and \ref{A3} are satisfied, and the learning rate is set as $\eta=\frac{1}{\sqrt{Td}}$. Then by running {\scriptsize Sto-SIGN}SGD for $T$ iterations, we have
\begin{equation}
\color{black}
\begin{split}
&\frac{1}{T}\sum_{t=1}^{T}||\nabla F(w^{(t)})||_{1} \\
&\leq
\frac{1}{c}\bigg[\frac{\mathbb{E}[F(w^{(0)}) - F(w^{(T+1)})]\sqrt{d}}{\sqrt{T}} + \frac{L\sqrt{d}}{2\sqrt{T}} + \frac{2\eta}{T}\sum_{t=1}^{T}\sum_{i=1}^{d}|\nabla F(w^{(t)})_i|\mathds{1}_{p_{i}^{(t)} > \frac{1-c}{2}}\bigg]\\
&\leq \frac{1}{c}\bigg[\frac{(F(w^{(0)}) - F^{*})\sqrt{d}}{\sqrt{T}} + \frac{L\sqrt{d}}{2\sqrt{T}} + 2\sum_{i=1}^{d}b_{i}\Delta(M)\bigg],
\end{split}
\end{equation}
where $0<c<1$ is some positive constant, $p_{i}^{(t)}$ is the probability that the aggregation on the $i$-coordinate of the gradient is wrong during the $t$-th communication round, and $\Delta(M)$ is the solution to $\big(1-x^2\big)^{\frac{M}{2}} = \frac{1-c}{2}$. The second inequality is due to the fact that $p_{i}^{(t)} > \frac{1-c}{2}$ only if $\frac{|\nabla F(w^{(t)})_i|}{b_{i}}\leq \Delta(M)$.
\end{theorem}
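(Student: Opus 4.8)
The plan is to run the standard descent-lemma argument for sign-based SGD (as in \cite{bernstein2018signsgd1}) and feed into it the per-coordinate bound on the probability of wrong aggregation supplied by Corollary \ref{Corollary1} (which is itself a consequence of Theorem \ref{Lemma1}). Note that since each worker here transmits $sto\text{-}sign(\nabla f_m(w^{(t)}),\boldsymbol{b})$ rather than a mini-batch gradient, the only randomness is that of the compressor, so Assumption \ref{A4} is not invoked; all expectations are over the compressor coins.

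First I would fix a communication round $t$, condition on $w^{(t)}$, and apply smoothness (Assumption \ref{A2}) to the update $w^{(t+1)} = w^{(t)} - \eta\tilde{\boldsymbol{g}}^{(t)}$, obtaining
\[
F(w^{(t+1)}) \le F(w^{(t)}) - \eta\langle\nabla F(w^{(t)}),\tilde{\boldsymbol{g}}^{(t)}\rangle + \tfrac{L\eta^2}{2}\|\tilde{\boldsymbol{g}}^{(t)}\|_2^2 .
\]
Because $M$ is odd (Assumption \ref{A3}) every coordinate of the majority vote $\tilde{\boldsymbol{g}}^{(t)}$ lies in $\{-1,1\}$, so $\|\tilde{\boldsymbol{g}}^{(t)}\|_2^2 = d$. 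For the cross term I would use that $sto\text{-}sign$ acts coordinatewise and that the workers are independent, so coordinate $i$ of the aggregate equals $sign(\nabla F(w^{(t)})_i)$ with probability $1-p_i^{(t)}$ (here $\frac1M\sum_m\nabla f_m = \nabla F$ so $sign(\frac1M\sum_m\nabla f_m(w^{(t)})_i) = sign(\nabla F(w^{(t)})_i)$); hence $\mathbb{E}[\nabla F(w^{(t)})_i\,\tilde{\boldsymbol{g}}^{(t)}_i \mid w^{(t)}] = |\nabla F(w^{(t)})_i|(1-2p_i^{(t)})$, and therefore $\mathbb{E}[\langle\nabla F(w^{(t)}),\tilde{\boldsymbol{g}}^{(t)}\rangle \mid w^{(t)}] = \|\nabla F(w^{(t)})\|_1 - 2\sum_{i=1}^d|\nabla F(w^{(t)})_i|p_i^{(t)}$.

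Next I would insert the elementary inequality $2|\nabla F(w^{(t)})_i|p_i^{(t)} \le (1-c)|\nabla F(w^{(t)})_i| + 2|\nabla F(w^{(t)})_i|\mathds{1}_{p_i^{(t)} > \frac{1-c}{2}}$, valid for any $0<c<1$ since $p_i^{(t)}\le 1$; this converts the loss incurred on the $\|\cdot\|_1$ term into a shrink factor $c$ plus a ``bad coordinate'' indicator term. Assembling everything gives the one-step bound $\mathbb{E}[F(w^{(t+1)})\mid w^{(t)}] \le F(w^{(t)}) - c\eta\|\nabla F(w^{(t)})\|_1 + 2\eta\sum_i|\nabla F(w^{(t)})_i|\mathds{1}_{p_i^{(t)} > \frac{1-c}{2}} + \frac{L\eta^2 d}{2}$. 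Taking total expectation, summing over $t$ (the $F$ terms telescope), dividing by $c\eta T$, and substituting $\eta = 1/\sqrt{Td}$ produces the first displayed inequality of the theorem.

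For the second inequality I would specialize Corollary \ref{Corollary1} with $u_m = \nabla f_m(w^{(t)})_i$: since $b_i\ge\max_m|\nabla f_m(w^{(t)})_i|$ we get $p_i^{(t)} \le (1-x^2)^{M/2}$ with $x = |\nabla F(w^{(t)})_i|/b_i \in [0,1]$. As $(1-x^2)^{M/2}$ is strictly decreasing on $[0,1]$ and $\Delta(M)$ solves $(1-\Delta(M)^2)^{M/2} = \frac{1-c}{2}$, the event $\{p_i^{(t)} > \frac{1-c}{2}\}$ forces $x \le \Delta(M)$, i.e.\ $|\nabla F(w^{(t)})_i| \le b_i\Delta(M)$; hence $|\nabla F(w^{(t)})_i|\mathds{1}_{p_i^{(t)} > \frac{1-c}{2}} \le b_i\Delta(M)$ uniformly in $t$, so the indicator term is at most $2\sum_i b_i\Delta(M)$. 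Combining with $\mathbb{E}[F(w^{(0)}) - F(w^{(T+1)})] \le F(w^{(0)}) - F^*$ (Assumption \ref{A1}) completes the chain. The only step requiring real care — everything else being the routine signSGD computation plus the case split — is the measurability bookkeeping: $p_i^{(t)}$ and the indicators are themselves functions of the random iterate $w^{(t)}$, so the one-step inequality must be stated conditionally on $w^{(t)}$ and only then telescoped via the tower property, after which the uniform-in-$t$ bound on the indicator term can be pulled out.
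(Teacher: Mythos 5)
Your proposal is correct and follows essentially the same route as the paper's proof: smoothness plus $\|\tilde{\boldsymbol{g}}^{(t)}\|_2^2=d$, the identity $\mathbb{E}[\nabla F(w^{(t)})_i\,\tilde{\boldsymbol{g}}^{(t)}_i\mid w^{(t)}]=|\nabla F(w^{(t)})_i|(1-2p_i^{(t)})$, the case split on $p_i^{(t)}\lessgtr\frac{1-c}{2}$, telescoping, and the monotonicity of $(1-x^2)^{M/2}$ to convert the bad-coordinate indicator into $b_i\Delta(M)$. Your explicit conditioning on $w^{(t)}$ and use of the tower property is a slightly more careful bookkeeping of the same argument the paper writes unconditionally.
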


The proof of Theorem \ref{convergerate} follows the well known strategy of relating the norm of the gradient to the expected improvement of the global objective in a single iteration. Then accumulating the improvement over the iterations yields the convergence rate of the algorithm.

\begin{proof}
According to Assumption 2, we have
\begin{equation}
\begin{split}
&F(w^{(t+1)}) - F(w^{(t)}) \\
&\leq <\nabla F(w^{(t)}), w^{(t+1)}-w^{(t)}> + \frac{L}{2}||w^{(t+1)}-w^{(t)}||^2 \\
& =-\eta <\nabla F(w^{(t)}), sign\bigg(\frac{1}{M}\sum_{m=1}^{M}sto\text{-}sign(\boldsymbol{g}_{m}^{(t)})\bigg)> + \frac{L}{2}\bigg|\bigg|\eta sign\bigg(\frac{1}{M}\sum_{m=1}^{M}sto\text{-}sign(\boldsymbol{g}_{m}^{(t)})\bigg)\bigg|\bigg|^2 \\
& = -\eta <\nabla F(w^{(t)}), sign\bigg(\frac{1}{M}\sum_{m=1}^{M}sto\text{-}sign(\boldsymbol{g}_{m}^{(t)})\bigg)> + \frac{L\eta^2d}{2} \\
& = -\eta ||\nabla F(w^{(t)})||_{1} + \frac{L\eta^2d}{2} + 2\eta\sum_{i=1}^{d}|\nabla F(w^{(t)})_{i}|\mathds{1}_{sign(\frac{1}{M}\sum_{m=1}^{M}sto\text{-}sign(\boldsymbol{g}_{m}^{(t)})_{i})\neq sign(\nabla F(w^{(t)})_{i})},
\end{split}
\end{equation}
where $\nabla F(w^{(t)})_{i}$ is the $i$-th entry of the vector $\nabla F(w^{(t)})$ and $\eta$ is the learning rate. Taking expectation on both sides yeilds

\begin{equation}\label{convergencee1}
\color{black}
\begin{split}
&\mathbb{E}[F(w^{(t+1)}) - F(w^{(t)})] \leq -\eta ||\nabla F(w^{(t)})||_{1} + \frac{L\eta^2d}{2} \\
&+2\eta\sum_{i=1}^{d}|\nabla F(w^{(t)})_{i}|P\bigg(sign\bigg(\frac{1}{M}\sum_{m=1}^{M}sto\text{-}sign(\boldsymbol{g}_{m}^{(t)})_{i}\bigg)\neq sign(\nabla F(w^{(t)})_{i})\bigg)\\
\end{split}
\end{equation}

Let  $\Delta(M)$ denote the solution to $\big[\big(1-x\big)e^{x}\big]^{\frac{M}{2}} = \frac{1-c}{2}$. Since $\big(1-x\big)e^{x}$ is a decreasing function of $x$ for $0 < x < 1$, it can be verified that $\big[\big(1-x\big)e^{x}\big]^{\frac{M}{2}} < \frac{1-c}{2}$ when $x > \Delta(M)$ and $\big[\big(1-x\big)e^{x}\big]^{\frac{M}{2}} \geq \frac{1-c}{2}$ otherwise. According to Theorem \ref{Lemma1}, we have two possible scenarios as follows.
\begin{equation}\label{Constant}
P\bigg(sign\bigg(\frac{1}{M}\sum_{m=1}^{M}sto\text{-}sign(\boldsymbol{g}_{m}^{(t)})_{i}\bigg)\neq sign(\nabla F(w^{(t)})_{i})\bigg)
\begin{cases}
\leq \frac{1-c}{2}, \text{if}~ \frac{|\nabla F(w^{(t)})_i|}{b_{i}} > \Delta(M), \\
\leq 1, \text{if}~ \frac{|\nabla F(w^{(t)})_i|}{b_{i}} \leq \Delta(M).
\end{cases}
\end{equation}

Plugging (\ref{Constant}) into (\ref{convergencee1}), we can obtain
\begin{equation}
\color{black}
\begin{split}
&\mathbb{E}[F(w^{(t+1)}) - F(w^{(t)})] \\
&\leq -\eta ||\nabla F(w^{(t)})||_{1} + \frac{L\eta^2d}{2} \\
& + \eta \bigg[(1-c)\sum_{i=1}^{d}|\nabla F(w^{(t)})_i|\mathds{1}_{p_{i}^{(t)} \leq \frac{1-c}{2}} + (1-c)\sum_{i=1}^{d}|\nabla F(w^{(t)})_i|\mathds{1}_{p_{i}^{(t)} > \frac{1-c}{2}}\bigg]\\
&+ 2\eta\sum_{i=1}^{d}|\nabla F(w^{(t)})_i|\mathds{1}_{p_{i}^{(t)} > \frac{1-c}{2}}\\
&\leq -\eta ||\nabla F(w^{(t)})||_{1} + \frac{L\eta^2d}{2} + \eta (1-c)||\nabla F(w^{(t)})||_{1}+2\eta\sum_{i=1}^{d}|\nabla F(w^{(t)})_i|\mathds{1}_{p_{i}^{(t)} > \frac{1-c}{2}} \\
&=-\eta c||\nabla F(w^{(t)})||_{1} + \frac{L\eta^2d}{2} + 2\eta\sum_{i=1}^{d}|\nabla F(w^{(t)})_i|\mathds{1}_{p_{i}^{(t)} > \frac{1-c}{2}},
\end{split}
\end{equation}
{\color{black}where $p_{i}^{(t)}=P\big(sign\big(\frac{1}{M}\sum_{m=1}^{M}sto\text{-}sign(\boldsymbol{g}_{m}^{(t)})_{i}\big)\neq sign(\nabla F(w^{(t)})_{i})\big)$ is the probability of wrong aggregation.} Adjusting the above inequality and averaging both sides over $t=1,2,\cdots,T$, we can obtain
\begin{equation}
\color{black}
\begin{split}
\frac{1}{T}\sum_{t=1}^{T}\eta c||\nabla F(w^{(t)})||_{1} &\leq \frac{\mathbb{E}[F(w^{(0)}) - F(w^{(T+1)})]}{T} + \frac{L\eta^2d}{2} \\
&+ \frac{2\eta}{T}\sum_{t=1}^{T}\sum_{i=1}^{d}|\nabla F(w^{(t)})_i|\mathds{1}_{p_{i}^{(t)} > \frac{1-c}{2}}\\
&\leq \frac{\mathbb{E}[F(w^{(0)}) - F(w^{(T+1)})]}{T} + \frac{L\eta^2d}{2} + 2\eta \sum_{i=1}^{d}b_{i}\Delta(M),
\end{split}
\end{equation}
{\color{black}where the last inequality is due to the fact that $p_{i}^{(t)} > \frac{1-c}{2}$ only when $\frac{|\nabla F(w^{(t)})_i|}{b_{i}} \leq \Delta(M)$.}
Letting $\eta=\frac{1}{\sqrt{dT}}$ and dividing both sides by $c\eta$ gives
\begin{equation}
\color{black}
\begin{split}
&\frac{1}{T}\sum_{t=1}^{T}||\nabla F(w^{(t)})||_{1} \\
&\leq
\frac{1}{c}\bigg[\frac{\mathbb{E}[F(w^{(0)}) - F(w^{(T+1)})]\sqrt{d}}{\sqrt{T}} + \frac{L\sqrt{d}}{2\sqrt{T}} + \frac{2\eta}{T}\sum_{t=1}^{T}\sum_{i=1}^{d}|\nabla F(w^{(t)})_i|\mathds{1}_{p_{i}^{(t)} > \frac{1-c}{2}}\bigg]\\
&\leq \frac{1}{c}\bigg[\frac{(F(w^{(0)}) - F^{*})\sqrt{d}}{\sqrt{T}} + \frac{L\sqrt{d}}{2\sqrt{T}} + 2\sum_{i=1}^{d}b_{i}\Delta(M)\bigg],
\end{split}
\end{equation}
which completes the proof.
\end{proof}

\subsection{Proof of Theorem 3}
\begin{theorem}\label{SPbsufficientlylarge}
Given Assumption 3 and the same $\{u_{m}\}_{m=1}^{M}$ and $\{\hat{u}_{m}\}_{m=1}^{M}$ as those in Theorem \ref{Lemma1}, we have  
\begin{equation}\label{SPT3EQ}
\begin{split}
P\bigg(sign\bigg(\frac{1}{M}\sum_{m=1}^{M}\hat{u}_{m}\bigg)\neq sign\bigg(\frac{1}{M}\sum_{m=1}^{M}u_{m}\bigg)\bigg) = \frac{1}{2} - \frac{{M-1 \choose \frac{M-1}{2}}}{2^{M}b}\bigg|\sum_{m=1}^{M}u_{m}\bigg| + O\bigg(\frac{1}{b^2}\bigg)
\end{split}
\end{equation}
\end{theorem}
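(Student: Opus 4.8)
The plan is to treat the error probability as a polynomial in $1/b$ and extract its expansion around $b=\infty$. First I would reduce to the case $\sum_{m=1}^{M}u_{m}>0$: if $\sum_{m}u_{m}=0$ both sides equal $\tfrac12$ (Assumption~\ref{A3} rules out a tie in the majority vote, so the event has probability exactly $\tfrac12$ when each $\hat u_m$ is a fair coin), and if $\sum_{m}u_{m}<0$ one replaces every $u_m$ by $-u_m$, which flips each $\hat u_m=sto\text{-}sign(u_m,b)$ in distribution and leaves both sides of (\ref{SPT3EQ}) unchanged. Assuming then $\sum_m u_m>0$, we have $sign\big(\tfrac1M\sum_m u_m\big)=1$, and since $M$ is odd the aggregate is never zero, so the error event is exactly $\{N\le \tfrac{M-1}{2}\}$ where $N:=|\{m:\hat u_m=1\}|$. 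Writing $p_m:=\Pr(\hat u_m=1)=\tfrac12+\tfrac{u_m}{2b}$ (these lie in $[0,1]$ once $b\ge\max_m|u_m|$), $N$ is a Poisson--binomial variable with success probabilities $p_1,\dots,p_M$.

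Next I would view $g(p_1,\dots,p_M):=\Pr\big(N\le\tfrac{M-1}{2}\big)$ as a multilinear polynomial on $[0,1]^M$; since $M$ is fixed, this is a polynomial of degree $M$ in the quantities $\tfrac{u_m}{2b}$, so its Taylor expansion about the symmetric point $p_1=\cdots=p_M=\tfrac12$ terminates. At that point the $\hat u_m$ are i.i.d.\ fair coins, so $N\sim\mathrm{Bin}(M,\tfrac12)$ and $g(\tfrac12,\dots,\tfrac12)=\Pr(\mathrm{Bin}(M,\tfrac12)\le\tfrac{M-1}{2})=\tfrac12$, the leading term. For the first-order coefficient I would condition on the $M-1$ coordinates other than $m$: writing $N=\mathds{1}_{\hat u_m=1}+N^{(m)}$ with $N^{(m)}$ independent of $\hat u_m$, $g=p_m\Pr\big(N^{(m)}\le\tfrac{M-1}{2}-1\big)+(1-p_m)\Pr\big(N^{(m)}\le\tfrac{M-1}{2}\big)$, hence $\partial g/\partial p_m=-\Pr\big(N^{(m)}=\tfrac{M-1}{2}\big)$, which at the symmetric point (where $N^{(m)}\sim\mathrm{Bin}(M-1,\tfrac12)$) equals $-\binom{M-1}{(M-1)/2}/2^{M-1}$, the same for every $m$.

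Assembling: the first-order term is $\sum_{m=1}^{M}\big(\partial g/\partial p_m\big)\big|_{1/2}\big(p_m-\tfrac12\big)=-\frac{\binom{M-1}{(M-1)/2}}{2^{M-1}}\sum_{m=1}^{M}\frac{u_m}{2b}=-\frac{\binom{M-1}{(M-1)/2}}{2^{M}b}\big|\sum_{m=1}^{M}u_m\big|$, using $\sum_m u_m>0$. All remaining terms of the finite Taylor expansion have total degree $\ge 2$ in the $\big(p_m-\tfrac12\big)=u_m/(2b)$, and since $M$ and the $u_m$ are fixed their sum is $O(1/b^2)$; adding the three pieces yields (\ref{SPT3EQ}). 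The main obstacle is getting the first-order coefficient exactly right --- the telescoping identity $\partial g/\partial p_m=-\Pr\big(N^{(m)}=\tfrac{M-1}{2}\big)$ and its combinatorial value $\binom{M-1}{(M-1)/2}/2^{M-1}$ at the symmetric point --- together with the careful bookkeeping of the $\tfrac12$, $\tfrac1b$ and sign factors coming from the reduction; the remainder estimate is then routine.
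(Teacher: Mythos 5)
Your proposal is correct, and it reaches the same first-order expansion in $1/b$ as the paper, but extracts the key coefficient by a genuinely different route. The paper writes $P(\hat Z = H)$ as the polynomial $\sum_{A\in F_H}\prod_{i\in A}(b+u_i)\prod_{j\in A^c}(b-u_j)$ over $(2b)^M$, reads off the $b^M$ and $b^{M-1}$ coefficients $a_{M,H}$ and $a_{M-1,H}$ explicitly, and then sums over $H=\frac{M+1}{2},\dots,M$, where the differences $\binom{M-1}{H-1}-\binom{M-1}{H}$ telescope to the single binomial coefficient $\binom{M-1}{(M-1)/2}$. You instead treat the tail probability $g(p_1,\dots,p_M)=\Pr(N\le\frac{M-1}{2})$ as a multilinear polynomial and Taylor-expand about the symmetric point $p_m\equiv\frac12$, using the identity $\partial g/\partial p_m=-\Pr\big(N^{(m)}=\frac{M-1}{2}\big)$ (the influence of coordinate $m$ on the majority function) to get the coefficient $-\binom{M-1}{(M-1)/2}/2^{M-1}$ in one line, with the exactness and finiteness of the multilinear Taylor expansion giving the $O(1/b^2)$ remainder for free. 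Your derivation is cleaner: it avoids the coefficient bookkeeping and the telescoping sum entirely, and the symmetry reduction to $\sum_m u_m>0$ is a tidier way to handle the absolute value than the paper's WLOG ordering. Two very minor points: your parenthetical justification for the $\sum_m u_m=0$ case is slightly off (the $\hat u_m$ are not fair coins unless every $u_m=0$), but the case is moot since your own first-order term vanishes there and the $O(1/b^2)$ absorbs the rest; and, like the paper, you implicitly specialize the generic $\hat u_m$ of Theorem 1 to $\hat u_m=sto\text{-}sign(u_m,b)$, which is what the theorem intends.
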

\begin{proof}
Without loss of generality, assume $u_{1} \leq u_{2} \leq \cdots \leq u_{K} < 0 \leq u_{K+1} \leq \cdots \leq u_{M}$. According to the definition of $sto\text{-}sign$, we have

\begin{equation}
\hat{u}_{m} = sto\text{-}sign(u_m,b) =
\begin{cases}
\hfill 1, \hfill \text{with probability $\frac{b+u_m}{2b}$},\\
\hfill -1, \hfill \text{with probability $\frac{b-u_m}{2b}$},\\
\end{cases}
\end{equation}

Further define a series of random variables $\{\hat{X}_m\}_{m=1}^{M}$ given by
\begin{equation}
\hat{X}_{m} =
\begin{cases}
\hfill 1, \hfill &\text{if $\hat{u}_{m} =1$},\\
\hfill 0, \hfill &\text{if $\hat{u}_{m} = -1$.}
\end{cases}
\end{equation}

In particular, $\hat{X}_{m}$ can be considered as the outcome of one Bernoulli trial with successful probability $P(\hat{X}_{m} = 1)$. Let $\hat{Z} = \sum_{m=1}^{M}\hat{X}_m$, then
\begin{equation}
P\bigg(sign\bigg(\frac{1}{M}\sum_{m=1}^{M}\hat{u}_m\bigg)=1\bigg) = P\bigg(\hat{Z} \geq \frac{M}{2}\bigg) = \sum_{H = \frac{M+1}{2}}^{M}P(\hat{Z} = H).
\end{equation}

In addition,
\begin{equation}\label{Chapter4-SPhatZ}
P(\hat{Z}=H) = \frac{\sum_{A \in F_H}\prod_{i \in A}(b+u_{i})\prod_{j \in A^{c}}(b-u_{j})}{(2b)^{M}} = \frac{a_{M,H}b^{M} + a_{M-1,H}b^{M-1} + \cdots + a_{0,H}b^{0}}{(2b)^{M}},
\end{equation}
in which $F_H$ is the set of all subsets of $H$ integers that can be selected from $\{1,2,3,...,M\}$; $a_{m,H}, \forall 0\leq m \leq M$ is some constant. It can be easily verified that $a_{M,H} = {M \choose H}$.

When $b$ is sufficiently large, $P(\hat{Z}=H)$ is dominated by the first two terms in (\ref{Chapter4-SPhatZ}). As a result, we have
\begin{equation}
    P(\hat{Z}=H) = \frac{a_{M,H}b^{M} + a_{M-1,H}b^{M-1}}{(2b)^{M}} + O\bigg(\frac{1}{b^2}\bigg).
\end{equation}

In particular, $\forall m$, we have
\begin{equation}
\begin{split}
\sum_{A \in F_H}\prod_{i \in A}(b+u_{i})\prod_{j \in A^{c}}(b-u_{j}) &= (b+u_{m})\sum_{A \in F_H, m\in A}\prod_{i \in A/\{m\}}(b+u_{i})\prod_{j \in A^{c}}(b-u_{j}) \\&+ (b-u_{m})\sum_{A \in F_H, m\notin A}\prod_{i \in A}(b+u_{i})\prod_{j \in A^{c}/\{m\}}(b-u_{j}).
\end{split}
\end{equation}
As a result, when $\frac{M+1}{2} \leq H \leq M-1$, the $u_{m}$ related term in $a_{M-1,H}$ is given by
\begin{equation}
\bigg[{M-1 \choose H-1} - {M-1 \choose H}\bigg]u_{m}.
\end{equation}
When $H = M$, the $u_{m}$ related term in $a_{M-1,H}$ is given by
\begin{equation}
\bigg[{M-1 \choose H-1}\bigg]u_{m}.
\end{equation}
By summing over $m$, we have
\begin{equation}
a_{M-1,H} = \bigg[{M-1 \choose H-1} - {M-1 \choose H}\bigg]\sum_{m=1}^{M}u_{m}, ~~~~~\text{if}~~~ \frac{M+1}{2} \leq H \leq M-1,
\end{equation}
and
\begin{equation}
a_{M-1,H} = \bigg[{M-1 \choose H-1}\bigg]\sum_{m=1}^{M}u_{m}, ~~~~~\text{if}~~~H = M.
\end{equation}

By summing over $H$, we have
\begin{equation}
\sum_{H = \frac{M+1}{2}}^{M}a_{M,H} = \sum_{H = \frac{M+1}{2}}^{M}{M \choose H} = 2^{M-1},
\end{equation}
\begin{equation}
\sum_{H = \frac{M+1}{2}}^{M}a_{M-1,H} = {M-1 \choose \frac{M-1}{2}}\sum_{m=1}^{M}u_{m}.
\end{equation}
As a result,
\begin{equation}
\begin{split}
P\bigg(sign\bigg(\frac{1}{M}\sum_{m=1}^{M}\hat{u}_m\bigg)=1\bigg) &= P\bigg(\hat{Z} \geq \frac{M}{2}\bigg) \\
&= \sum_{H = \frac{M+1}{2}}^{M}P(\hat{Z} = H) \\
&= \frac{2^{M-1}b^{M} + {M-1 \choose \frac{M-1}{2}}\sum_{m=1}^{M}u_{m}b^{M-1}}{(2b)^{M}} + O\bigg(\frac{1}{b^2}\bigg)\\
& = \frac{1}{2} + \frac{{M-1 \choose \frac{M-1}{2}}}{2^{M}b}\sum_{m=1}^{M}u_{m} + O\bigg(\frac{1}{b^2}\bigg).
\end{split}
\end{equation}

Therefore,
\begin{equation}
\begin{split}
P\bigg(sign\bigg(\frac{1}{M}\sum_{m=1}^{M}\hat{u}_{m}\bigg)\neq sign\bigg(\frac{1}{M}\sum_{m=1}^{M}u_{m}\bigg)\bigg) = \frac{1}{2} - \frac{{M-1 \choose \frac{M-1}{2}}}{2^{M}b}\bigg|\sum_{m=1}^{M}u_{m}\bigg| + O\bigg(\frac{1}{b^2}\bigg).
\end{split}
\end{equation}
\end{proof}


\subsection{Proof of Theorem 4}
\begin{theorem}\label{SPconvergeratelargeb}
Suppose Assumptions \ref{A1}, \ref{A2} and \ref{A3} are satisfied, $|\nabla F(w^{(t)})_{i}| \leq Q, \forall 1\leq i \leq d, 1\leq t \leq T$, and the learning rate is set as $\eta=\frac{1}{\sqrt{Td}}$. Then by running Algorithm 1 with $q(\boldsymbol{g}_{m}^{(t)}) = sto\text{-}sign(\nabla f_{m}(w^{(t)}),\boldsymbol{b})$ and $b_{i}=T^{1/4}d^{1/4}, \forall i$ for $T$ iterations, we have
\begin{equation}
\begin{split}
&\frac{1}{T}\sum_{t=1}^{T}\sum_{i=1}^{d}|\nabla F(w^{(t)})_{i}|^2 \\
&\leq \frac{2^{M}}{2M{M-1 \choose \frac{M-1}{2}}}\bigg[\frac{(F(w^{(0)})-F^{*})d^{3/4}}{T^{1/4}} + \frac{Ld^{3/4}}{2T^{1/4}} + \frac{2}{T}\sum_{t=1}^{T}\sum_{i=1}^{d}|\nabla F(w^{(t)})_{i}|O\bigg(\frac{1}{T^{1/4}d^{1/4}}\bigg)\bigg]\\
&\leq \frac{\sqrt{2\pi}(M-1)^{\frac{3}{2}}}{2(M^{2}-3M)}\bigg[\frac{(F(w^{(0)})-F^{*})d^{3/4}}{T^{1/4}} + \frac{Ld^{3/4}}{2T^{1/4}} + \frac{2}{T}\sum_{t=1}^{T}\sum_{i=1}^{d}|\nabla F(w^{(t)})_{i}|O\bigg(\frac{1}{T^{1/4}d^{1/4}}\bigg)\bigg],
\end{split}
\end{equation}
which further captures the impact of $M$ (i.e., $\frac{\sqrt{2\pi}(M-1)^{\frac{3}{2}}}{2(M^{2}-3M)} \leq O(\frac{1}{\sqrt{M}})$) compared to \cite{chen2019distributed}.
\end{theorem}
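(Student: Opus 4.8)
The plan is to combine the sharp expansion of the wrong-aggregation probability from Theorem~\ref{bsufficientlylarge} with the descent argument already used for Theorem~\ref{convergerate}. As in that proof, Assumption~\ref{A2} together with the update $w^{(t+1)} = w^{(t)} - \eta\,\tilde{\boldsymbol{g}}^{(t)}$ gives, after taking expectations,
\begin{equation*}
\mathbb{E}[F(w^{(t+1)}) - F(w^{(t)})] \leq -\eta\|\nabla F(w^{(t)})\|_1 + \frac{L\eta^2 d}{2} + 2\eta\sum_{i=1}^{d}|\nabla F(w^{(t)})_i|\,p_i^{(t)},
\end{equation*}
where $p_i^{(t)}$ is the probability of wrong aggregation on coordinate $i$. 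The new ingredient is to apply Theorem~\ref{bsufficientlylarge} coordinatewise with $u_m = \nabla f_m(w^{(t)})_i$ and $b = b_i = T^{1/4}d^{1/4}$, and to use $\sum_{m=1}^{M}u_m = M\nabla F(w^{(t)})_i$, which yields
\begin{equation*}
p_i^{(t)} = \frac{1}{2} - \frac{M\binom{M-1}{(M-1)/2}}{2^{M}b_i}\,|\nabla F(w^{(t)})_i| + O\!\left(\frac{1}{b_i^{2}}\right).
\end{equation*}

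Substituting this expansion, the term $2\eta\sum_i|\nabla F(w^{(t)})_i|\cdot\tfrac12 = \eta\|\nabla F(w^{(t)})\|_1$ cancels $-\eta\|\nabla F(w^{(t)})\|_1$ exactly, and what remains is a negative multiple of $\|\nabla F(w^{(t)})\|_2^2$ plus the curvature term and the $O(1/b_i^2)$ remainder. Rearranging to isolate $\frac{2\eta M\binom{M-1}{(M-1)/2}}{2^{M}b_i}\|\nabla F(w^{(t)})\|_2^2$, summing over $t = 1,\dots,T$, telescoping the differences with Assumption~\ref{A1}, dividing by $T$, and finally dividing through by the coefficient $\frac{2\eta M\binom{M-1}{(M-1)/2}}{2^{M}b_i}$ leaves $\frac1T\sum_t\|\nabla F(w^{(t)})\|_2^2$ bounded by $\frac{2^{M}}{2M\binom{M-1}{(M-1)/2}}$ times a bracket. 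Plugging in $\eta = 1/\sqrt{Td}$ and $b_i = T^{1/4}d^{1/4}$ then converts $\frac{b_i}{\eta T}$ into $d^{3/4}/T^{1/4}$, converts $\frac{b_i L\eta d}{2}$ into $\frac{Ld^{3/4}}{2T^{1/4}}$, and converts $b_i\cdot O(1/b_i^2)=O(1/b_i)$ into $O(1/(T^{1/4}d^{1/4}))$, producing the first displayed inequality. The second inequality follows from a Stirling-type lower bound on the central binomial coefficient, $\binom{M-1}{(M-1)/2} \geq \frac{2^{M}(M-3)}{\sqrt{2\pi}(M-1)^{3/2}}$, so that $\frac{2^{M}}{2M\binom{M-1}{(M-1)/2}} \leq \frac{\sqrt{2\pi}(M-1)^{3/2}}{2(M^2-3M)} = O(1/\sqrt{M})$.

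The step I expect to be the main obstacle is making the $O(1/b_i^2)$ remainder uniform. Theorem~\ref{bsufficientlylarge} is a ``$b$ sufficiently large'' expansion, so I must verify that, under the hypothesis $|\nabla F(w^{(t)})_i| \leq Q$ (together with the attendant control on $\max_m|\nabla f_m(w^{(t)})_i|$ needed for the $sto\text{-}sign$ probabilities to stay in $[0,1]$), the choice $b_i = T^{1/4}d^{1/4}$ indeed falls in the regime where the expansion is valid once $T$ is large, and that the implied constant in $O(1/b_i^2)$ can be taken independent of $t$ and of the particular gradient realization, so that it survives the averaging over $t$. Everything after that is the routine telescoping and rescaling bookkeeping already carried out in the proof of Theorem~\ref{convergerate}.
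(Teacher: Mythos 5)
Your proposal matches the paper's proof essentially step for step: the same descent inequality from the proof of Theorem~\ref{convergerate}, the same coordinatewise application of Theorem~\ref{bsufficientlylarge} with $\sum_m u_m = M\nabla F(w^{(t)})_i$ so that the $\tfrac12$ term cancels $-\eta\|\nabla F(w^{(t)})\|_1$, the same telescoping and substitution $\eta = 1/\sqrt{Td}$, $b_i = T^{1/4}d^{1/4}$, and the same Stirling-type bound on the central binomial coefficient for the final inequality. Your closing caveat about the uniformity of the $O(1/b_i^2)$ remainder is a fair observation (the paper does not address it explicitly), but it does not change the route of the argument.
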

\begin{proof}
According to (\ref{convergencee1}), we have

\begin{equation}\label{convergencee1_1}
\begin{split}
&\mathbb{E}[F(w^{(t+1)}) - F(w^{(t)})] \leq -\eta ||\nabla F(w^{(t)})||_{1} + \frac{L\eta^2d}{2} \\
&+2\eta\sum_{i=1}^{d}|\nabla F(w^{(t)})_{i}|P\bigg(sign\bigg(\frac{1}{M}\sum_{m=1}^{M}sto\text{-}sign(\boldsymbol{g}_{m}^{(t)})_{i}\bigg)\neq sign(\nabla F(w^{(t)})_{i})\bigg)\\
\end{split}
\end{equation}

According to Lemma \ref{bsufficientlylarge}

\begin{equation}
\begin{split}
P\bigg(sign\bigg(\frac{1}{M}\sum_{m=1}^{M}sto\text{-}sign(\boldsymbol{g}_{m}^{(t)})_{i}\bigg)\neq sign(\nabla F(w^{(t)})_{i})\bigg) = \frac{1}{2} - \frac{M{M-1 \choose \frac{M-1}{2}}}{2^{M}b_{i}}\big|\nabla F(w^{(t)})_{i}\big| + O\bigg(\frac{1}{b_{i}^2}\bigg)
\end{split}
\end{equation}

Plugging (\ref{Constant}) into (\ref{convergencee1_1}), we can obtain
\begin{equation}\label{T6E1}
\begin{split}
&\mathbb{E}[F(w^{(t+1)}) - F(w^{(t)})] \\
&\leq -\eta ||\nabla F(w^{(t)})||_{1} + \frac{L\eta^2d}{2}+2\eta\sum_{i=1}^{d}|\nabla F(w^{(t)})_{i}|\bigg[\frac{1}{2} - \frac{M{M-1 \choose \frac{M-1}{2}}}{2^{M}b_{i}}\big|\nabla F(w^{(t)})_{i}\big| + O\bigg(\frac{1}{b_{i}^2}\bigg)\bigg]\\
&=\frac{L\eta^2d}{2}-\eta\frac{2M{M-1 \choose \frac{M-1}{2}}}{2^{M}}\sum_{i=1}^{d}\frac{|\nabla F(w^{(t)})_{i}|^2}{b_i} + 2\eta\sum_{i=1}^{d}|\nabla F(w^{(t)})_{i}|O\bigg(\frac{1}{b_{i}^2}\bigg)
\end{split}
\end{equation}

Rearranging (\ref{T6E1}) gives
\begin{equation}
\begin{split}
\eta\frac{2M{M-1 \choose \frac{M-1}{2}}}{2^{M}}\sum_{i=1}^{d}\frac{|\nabla F(w^{(t)})_{i}|^2}{b_i} \leq \mathbb{E}[F(w^{(t)}) - F(w^{(t+1)})] + \frac{L\eta^2d}{2} + 2\eta\sum_{i=1}^{d}|\nabla F(w^{(t)})_{i}|O\bigg(\frac{1}{b_{i}^2}\bigg)
\end{split}
\end{equation}

Adjusting the above inequality and averaging both sides over $t=1,2,\cdots,T$ yields
\begin{equation}
\begin{split}
&\frac{1}{T}\sum_{t=1}^{T}\sum_{i=1}^{d}\frac{|\nabla F(w^{(t)})_{i}|^2}{b_i} \leq \frac{2^{M}}{2M{M-1 \choose \frac{M-1}{2}}}\bigg[\frac{(F(w^{(0)})-F^{*})}{\eta} + \frac{Ld\eta}{2} + \frac{2}{T}\sum_{t=1}^{T}\sum_{i=1}^{d}|\nabla F(w^{(t)})_{i}|O\bigg(\frac{1}{b_{i}^2}\bigg)\bigg].
\end{split}
\end{equation}

Setting $\eta=\frac{1}{\sqrt{Td}}$ and $b_{i}=T^{1/4}d^{1/4}$ gives

\begin{equation}
\begin{split}
\frac{1}{T}\sum_{t=1}^{T}\sum_{i=1}^{d}|\nabla F(w^{(t)})_{i}|^2 &\leq \frac{2^{M}}{2M{M-1 \choose \frac{M-1}{2}}}\bigg[\frac{(F(w^{(0)})-F^{*})d^{3/4}}{T^{1/4}} + \frac{Ld^{3/4}}{2T^{1/4}} \\
&+ \frac{2}{T}\sum_{t=1}^{T}\sum_{i=1}^{d}|\nabla F(w^{(t)})_{i}|O\bigg(\frac{1}{T^{1/4}d^{1/4}}\bigg)\bigg]
\end{split}
\end{equation}

It is known that $\frac{n^{n+1}e^{-n}\sqrt{2\pi}}{\sqrt{n}} \leq n! < \frac{n^{n+1}e^{-n}\sqrt{2\pi}}{\sqrt{n-1}}$ \cite{batir2008sharp}. With some algebra, we can show that $\frac{2^{M}}{2M{M-1 \choose \frac{M-1}{2}}} < \frac{\sqrt{2\pi}(M-1)^{\frac{3}{2}}}{2(M^{2}-3M)}\leq O(\frac{1}{\sqrt{M}})$. Therefore, we have

\begin{equation}
\begin{split}
\frac{1}{T}\sum_{t=1}^{T}\sum_{i=1}^{d}|\nabla F(w^{(t)})_{i}|^2 &\leq \frac{\sqrt{2\pi}(M-1)^{\frac{3}{2}}}{2(M^{2}-3M)}\bigg[\frac{(F(w^{(0)})-F^{*})d^{3/4}}{T^{1/4}} + \frac{Ld^{3/4}}{2T^{1/4}} +\\
&\frac{2}{T}\sum_{t=1}^{T}\sum_{i=1}^{d}|\nabla F(w^{(t)})_{i}|O\bigg(\frac{1}{T^{1/4}d^{1/4}}\bigg)\bigg]
\end{split}
\end{equation}
which completes the proof.
\end{proof}

\subsection{Proof of Theorem 5}
\begin{theorem}
The proposed compressor $dp\text{-}sign(\cdot,\epsilon,\delta)$ is $(\epsilon,\delta)$-differentially private for any $\epsilon, \delta \in (0,1)$.
\end{theorem}
\begin{proof}
We start from the one-dimension scenario and consider any $a,b$ that satisfy $||a - b||_{2} \leq \Delta_2$. Without loss of generality, assume that $dp\text{-}sign(a,\epsilon,\delta)=dp\text{-}sign(b,\epsilon,\delta)=1$. Then we have
\begin{equation}
\begin{split}
P(dp\text{-}sign(a,\epsilon,\delta)=1) = \Phi\bigg(\frac{a}{\sigma}\bigg) = \int_{-\infty}^{a}\frac{1}{\sqrt{2\pi}\sigma}e^{-\frac{x^2}{2\sigma^2}}dx,\\
P(dp\text{-}sign(b,\epsilon,\delta)=1) = \Phi\bigg(\frac{b}{\sigma}\bigg) = \int_{-\infty}^{b}\frac{1}{\sqrt{2\pi}\sigma}e^{-\frac{x^2}{2\sigma^2}}dx.
\end{split}
\end{equation}

Furthermore,
\begin{equation}
\begin{split}
\frac{P(dp\text{-}sign(a,\epsilon,\delta)=1)}{P(dp\text{-}sign(b,\epsilon,\delta)=1)} = \frac{\int_{-\infty}^{a}e^{-\frac{x^2}{2\sigma^2}}dx}{\int_{-\infty}^{b}e^{-\frac{x^2}{2\sigma^2}}dx} = \frac{\int_{0}^{\infty}e^{-\frac{(x-a)^2}{2\sigma^2}}dx}{\int_{0}^{\infty}e^{-\frac{(x-b)^2}{2\sigma^2}}dx}.
\end{split}
\end{equation}
According to Theorem A.1 in \cite{dwork2014algorithmic}, given the parameters $\epsilon, \delta$ and $\sigma$, it can be verified that $e^{-\epsilon} \leq \big|\frac{P(dp\text{-}sign(a,\epsilon,\delta)=1)}{P(dp\text{-}sign(b,\epsilon,\delta)=1)}\big| \leq e^{\epsilon}$ with probability at least $1-\delta$.

For the multi-dimension scenario, consider any vector $\boldsymbol{a}$ and $\boldsymbol{b}$ such that $||\boldsymbol{a} - \boldsymbol{b}||_{2} \leq \Delta_2$ and $\boldsymbol{v} \in \{-1,1\}^{d}$, we have
\begin{equation}
\begin{split}
\frac{P(dp\text{-}sign(\boldsymbol{a},\epsilon,\delta)=\boldsymbol{v})}{P(dp\text{-}sign(\boldsymbol{b},\epsilon,\delta)=\boldsymbol{v})} = \frac{\int_{D}e^{-\frac{||\boldsymbol{x}-\boldsymbol{a}||_{2}^2}{2\sigma^2}}d\boldsymbol{x}}{\int_{D}e^{-\frac{||\boldsymbol{x}-\boldsymbol{b}||_{2}^2}{2\sigma^2}}d\boldsymbol{x}},
\end{split}
\end{equation}
where $D$ is some integral area depending on $\boldsymbol{v}$. Similarly, it can be shown that $e^{-\epsilon} \leq \big|\frac{P(dp\text{-}sign(\boldsymbol{a},\epsilon,\delta)=\boldsymbol{v})}{P(dp\text{-}sign(\boldsymbol{b},\epsilon,\delta)=\boldsymbol{v})}\big| \leq e^{\epsilon}$ with probability at least $1-\delta$.
\end{proof}

\subsection{Proof of Theorem 6}
\begin{theorem}\label{SPLemma2}
Let $u_{1},u_{2},\cdots,u_{M}$ be $M$ known and fixed real numbers. Further define random variables $\hat{u}_{i}=dp\text{-}sign(u_{i},\epsilon,\delta), \forall 1\leq i \leq M$. Then there always exist a constant $\sigma_{0}$ such that when $\sigma \geq \sigma_{0}$, $P(sign(\frac{1}{M}\sum_{m=1}^{M}\hat{u}_{i})\neq sign(\frac{1}{M}\sum_{m=1}^{M}u_{i})) <\big(1-x^2\big)^{\frac{M}{2}}$,
where $x = \frac{|\sum_{m=1}^{M}u_{m}|}{2\sigma M}$.
\end{theorem}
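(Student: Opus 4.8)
The plan is to follow the same route as the proof of Corollary~\ref{Corollary1}: reduce the statement to the general bound $\big[4\bar p(1-\bar p)\big]^{M/2}$ of Theorem~\ref{Lemma1}, the only new work being a sufficiently sharp estimate of the average probability of a wrong sign for the Gaussian compressor when $\sigma$ is large. Write $S=\sum_{m=1}^{M}u_{m}$, and note that if $S=0$ the right-hand side equals $1$ and there is nothing to prove (with $M$ odd, $\sum_{m}\hat u_{m}\neq 0$ almost surely, so the error probability is strictly below $1$ in any case); hence assume $S\neq 0$. With $\hat u_{m}=dp\text{-}sign(u_{m},\epsilon,\delta)$ and using $\Phi(t)+\Phi(-t)=1$, a direct computation gives $\bar p_{dp}:=\frac1M\sum_{m=1}^{M}\Pr\!\big(\hat u_{m}\neq sign(S)\big)=\frac12-\frac{sign(S)}{2M}\sum_{m=1}^{M}\psi(u_{m}/\sigma)$, where $\psi(t):=2\Phi(t)-1$ is odd and smooth with $\psi'(0)=\sqrt{2/\pi}$ and $\psi''(0)=0$.

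Next I would Taylor-expand $\psi$ at the origin. Since $|\psi'''|$ is bounded on $[-1,1]$, there is a universal constant $C$ with $|\psi(t)-\sqrt{2/\pi}\,t|\le C|t|^{3}$ for $|t|\le 1$. Taking $\sigma\ge R:=\max_{m}|u_{m}|$ so that every $|u_{m}/\sigma|\le1$, summing over $m$, and recalling $x=\tfrac{|S|}{2M\sigma}$, this yields $1-2\bar p_{dp}=2\sqrt{2/\pi}\,x+\theta$ with $|\theta|\le CR^{3}/\sigma^{3}$. Because $x$ is of order $1/\sigma$ while $\theta$ is of order $1/\sigma^{3}$, and because $2\sqrt{2/\pi}>1$, one can choose $\sigma_{0}$ large enough (depending only on $M$, $R$, $|S|$ and the universal constant $C$, and also large enough that $x<1$) so that for every $\sigma\ge\sigma_{0}$ one has $1-2\bar p_{dp}>x>0$ together with $x<1$. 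The decisive point is the slack $2\sqrt{2/\pi}-1>0$, which lets the linear term dominate and absorb the cubic remainder $\theta$; had the definition of $x$ used $\sigma$ rather than $2\sigma$, the corresponding constant $\sqrt{2/\pi}<1$ would make this fail.

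Finally, $1-2\bar p_{dp}>x>0$ gives in particular $\bar p_{dp}<\tfrac12$, so Theorem~\ref{Lemma1} applies (the $\hat u_{m}$ being independent) and delivers $P(\text{wrong aggregation})\le\big[4\bar p_{dp}(1-\bar p_{dp})\big]^{M/2}=\big[1-(1-2\bar p_{dp})^{2}\big]^{M/2}$; and since $(1-2\bar p_{dp})^{2}>x^{2}$ and $1-x^{2}\ge0$, monotonicity of $t\mapsto t^{M/2}$ on $[0,\infty)$ converts this into the strict bound $P(\text{wrong aggregation})<(1-x^{2})^{M/2}$, which is exactly the claim. I expect the middle step to be the only genuine obstacle: producing the explicit $\sigma_{0}$, i.e. pinning down a clean, uniform two-sided control $\psi(t)=\sqrt{2/\pi}\,t+O(t^{3})$ over the range $t=u_{m}/\sigma$ and verifying that the cubic error never erodes the constant $2\sqrt{2/\pi}$ below $1$; the rest is a mechanical reuse of Theorem~\ref{Lemma1}, exactly as in Corollary~\ref{Corollary1}. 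The $\delta=0$, Laplace-mechanism variant noted in the main text goes through identically, with the role of $2\Phi(t)-1$ and $\sqrt{2/\pi}$ played by the recentered Laplace CDF and its derivative at the origin.
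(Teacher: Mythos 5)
Your proposal is correct, and its skeleton matches the paper's: reduce to a bound on the average error probability $\bar p_{dp}$ of the form $\bar p_{dp}\le \tfrac12-\tfrac{x}{2}$ for all $\sigma$ beyond some $\sigma_{0}$, and then feed this into the Chernoff-type bound of Theorem~\ref{Lemma1} exactly as in Corollary~\ref{Corollary1} to get $\big[1-(1-2\bar p_{dp})^{2}\big]^{M/2}<(1-x^{2})^{M/2}$. Where you genuinely diverge is in how that bound on $\bar p_{dp}$ is obtained. The paper orders the $u_{m}$, splits $\bar p_{dp}=\tfrac12-\tfrac1M\big[\sum_{m\le K}P(u_{m}<n<0)-\sum_{m>K}P(0<n<u_{m})\big]$ for Gaussian $n$, argues that the bracketed quantity is minimized in a ``worst case'' $K=1$, and then sandwiches the interval probabilities between values of the Gaussian density at the endpoints, letting $\sigma\to\infty$ to recover the limit $\tfrac{|S|}{\sqrt{2\pi}\sigma}$ and hence the constant $\tfrac14>$ needed. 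You instead write $1-2\bar p_{dp}=\tfrac{sign(S)}{M}\sum_{m}\psi(u_{m}/\sigma)$ with $\psi=2\Phi(\cdot)-1$ and do a third-order Taylor expansion of $\psi$ at the origin, obtaining $1-2\bar p_{dp}=2\sqrt{2/\pi}\,x+O(R^{3}/\sigma^{3})$ directly. Your route is cleaner and more transparent: it avoids the paper's somewhat hand-wavy worst-case reduction, it makes the role of the factor $2\sigma$ in the definition of $x$ explicit (the whole theorem hinges on $2\sqrt{2/\pi}>1$, equivalently $\tfrac{1}{\sqrt{2\pi}}>\tfrac14$, which your remark about replacing $2\sigma$ by $\sigma$ correctly identifies as the breaking point), and it yields an explicit, checkable form for $\sigma_{0}$ in terms of $R=\max_{m}|u_{m}|$, $|S|$ and a universal constant. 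Two minor points worth making explicit if you write this up: the application of Theorem~\ref{Lemma1} uses independence of the $\hat u_{m}$ (true here since $dp\text{-}sign$ randomizes each coordinate of each worker independently, and implicitly used in the paper's own proof of Theorem~\ref{Lemma1}), and your handling of the degenerate case $S=0$ is more careful than the paper's, which silently assumes $S\neq0$.
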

The proof of Theorem \ref{SPLemma2} follows a similar strategy to that of Theorem \ref{SPLemma1}. The difficulty we need to overcome is that unlike $sto\text{-}sign$, the expectation of the number of workers that share the wrong signs is not a function of $\frac{1}{M}\sum_{m=1}^{M}u_{i}$ due to the nonlinearity introduced by $\Phi(\cdot)$. However, when $\sigma$ is large enough, we show that it can be upper bounded as a function of $\frac{1}{M}\sum_{m=1}^{M}u_{i}$.

\begin{proof}
Without loss of generality, assume $u_{1} \leq u_{2} \leq \cdots \leq u_{K} < 0 \leq u_{K+1} \leq \cdots \leq u_{M}$ and $\frac{1}{M}\sum_{i=1}^{M}u_{i} < 0$. Note that similar analysis can be done when $\frac{1}{M}\sum_{i=1}^{M}u_{i} > 0$.

We are interested in obtaining $\Bar{p}_{dp} = \frac{1}{M}\sum_{m=1}^{M}\Pr\left(sign\left(\frac{1}{M}\sum_{m=1}^{M}u_{m}\right) \neq \hat{u}_{m}\right)$. In particular,
\begin{equation}
\Pr\left(sign\left(\frac{1}{M}\sum_{m=1}^{M}u_{m}\right) \neq \hat{u}_{m}\right) = \Phi\bigg(\frac{u_{m}}{\sigma}\bigg),
\end{equation}

\begin{equation}
\begin{split}
&\Bar{p}_{dp} = \frac{1}{M}\sum_{m=1}^{M}\Pr\left(sign\left(\frac{1}{M}\sum_{m=1}^{M}u_{m}\right) \neq \hat{u}_{m}\right) = \frac{1}{M}\sum_{m=1}^{M}\Phi\bigg(\frac{u_{m}}{\sigma}\bigg).
\end{split}
\end{equation}

Let $n$ denote a zero-mean Gaussian noise with variance $\sigma$, according to the assumption that $u_{1} \leq u_{2} \leq \cdots \leq u_{K} < 0 \leq u_{K+1} \leq \cdots \leq u_{M}$, we have
\begin{equation}
\begin{split}
&\Phi\bigg(\frac{u_{m}}{\sigma}\bigg) = \frac{1}{2} - P(u_{m} < n < 0),~~\forall 1 \leq m \leq K,\\
&\Phi\bigg(\frac{u_{m}}{\sigma}\bigg) = \frac{1}{2} + P(0 < n < u_{m}),~~\forall K+1 \leq m \leq M.
\end{split}
\end{equation}

Therefore,
\begin{equation}
\begin{split}
\Bar{p}_{dp} = \frac{1}{M}\sum_{m=1}^{M}\Phi\bigg(\frac{u_{m}}{\sigma}\bigg) = \frac{1}{2} - \frac{1}{M}\bigg[\sum_{m=1}^{K}P(u_{m} < n < 0)-\sum_{m=K+1}^{M}P(0 < n < u_{m})\bigg].
\end{split}
\end{equation}

Note that for any Gaussian noise, $P(a_1 < n < 0) + P(a_2 < n < 0) \geq P(a_1 + a_2 < n < 0)$ for any $a_{1} < 0, a_{2} < 0$. Therefore, we consider the worst case scenario such that $\sum_{m=1}^{K}P(u_{m} < n < 0)-\sum_{m=K+1}^{M}P(0 < n < u_{m})$ is minimized, i.e., $K = 1$. In this case,
\begin{equation}
\begin{split}
&\sum_{m=1}^{K}P(u_{m} < n < 0)-\sum_{m=K+1}^{M}P(0 < n < u_{m}) \\
& = P\bigg(u_{1} < n \leq -\sum_{m=2}^{M}u_{m}\bigg) + P\bigg(-\sum_{m=2}^{M}u_{m} < n < 0\bigg) - \sum_{m=2}^{M}P(0 < n < u_{m})\\
& > \bigg|\sum_{m=1}^{M}u_{m}\bigg|\bigg[\frac{1}{\sqrt{2\pi}\sigma}e^{-\frac{u_{1}^2}{2\sigma^2}}\bigg] + P\bigg(-\sum_{m=2}^{M}u_{m} < n < 0\bigg) - \sum_{m=2}^{M}P(0 < n < u_{m}) \\
& > \bigg|\sum_{m=1}^{M}u_{m}\bigg|\bigg[\frac{1}{\sqrt{2\pi}\sigma}e^{-\frac{u_{1}^2}{2\sigma^2}}\bigg] - \bigg|\sum_{m=2}^{M}u_{m}\bigg|\frac{1}{\sqrt{2\pi}\sigma}\bigg[1-e^{-\frac{(\sum_{m=2}^{M}u_{m})^2}{2\sigma^2}}\bigg] \\
&= \frac{1}{\sqrt{2\pi}\sigma}\bigg[\bigg|\sum_{m=1}^{M}u_{m}\bigg|e^{-\frac{u_{1}^2}{2\sigma^2}}+\bigg|\sum_{m=2}^{M}u_{m}\bigg|\bigg[e^{-\frac{(\sum_{m=2}^{M}u_{m})^2}{2\sigma^2}}-1\bigg]\bigg],
\end{split}
\end{equation}
where the first inequality is due to $f(a) > f(u_{1})$ for $a \in (u_{1}, \sum_{m=1}^{M}u_{m}]$ and the second inequality is due to $f(a) < \frac{1}{\sqrt{2\pi}\sigma}$ for any $a > 0$, where $f(\cdot)$ is the probability density function of the normal distribution.

In particular, as $\sigma \rightarrow \infty$, $|\sum_{m=1}^{M}u_{m}|e^{-\frac{u_{1}^2}{2\sigma^2}}$ increases and converges to $|\sum_{m=1}^{M}u_{m}|$ while $|\sum_{m=2}^{M}u_{m}|\bigg[e^{-\frac{(\sum_{m=2}^{M}u_{m})^2}{2\sigma^2}}-1\bigg]$ increases and converges to 0. Therefore, we have
\begin{equation}
\frac{1}{\sqrt{2\pi}\sigma}\bigg[\bigg|\sum_{m=1}^{M}u_{m}\bigg|e^{-\frac{u_{1}^2}{2\sigma^2}}+\bigg|\sum_{m=2}^{M}u_{m}\bigg|\bigg[e^{-\frac{(\sum_{m=2}^{M}u_{m})^2}{2\sigma^2}}-1\bigg]\bigg] \xrightarrow{\sigma \rightarrow \infty} -\frac{\sum_{m=1}^{M}u_{m}}{\sqrt{2\pi}\sigma}.
\end{equation}

As a result, there exists a $\sigma_{0}$ such that when $\sigma \geq \sigma_{0}$, we have
\begin{equation}\label{meanlabel}
\Bar{p}_{dp} = \frac{1}{M} \sum_{m=1}^{M}\Phi\bigg(\frac{u_{m}}{\sigma}\bigg) \leq \frac{1}{2} + \frac{\sum_{m=1}^{M}u_{m}}{4M\sigma}.
\end{equation}

Following the same analysis as that in the proof of Corollary \ref{SPLemma1}, we can show that
\begin{equation}
P\bigg(sign\bigg(\frac{1}{M}\sum_{m=1}^{M}\hat{u}_{i}\bigg)\neq sign\bigg(\frac{1}{M}\sum_{m=1}^{M}u_{i}\bigg)\bigg) <\big(1-x^2\big)^{\frac{M}{2}},
\end{equation}
where $x = \frac{|\sum_{m=1}^{M}u_{m}|}{2\sigma M}$.
\end{proof}

\subsection{Proof of Theorem 7}
\begin{theorem}\label{SPByzantineResienceDP}
\color{black}
During $t$-th communication round, let $\frac{1}{M}\sum_{m=1}^{M}\Pr\left(sign\left(\nabla F(w^{(t)})\right)_{i} \neq q(\boldsymbol{g}_{m}^{(t)})_{i}\right) = \Bar{p}_{i}^{(t)}$, then Algorithm 1 can at least tolerate $k_{i}$ Byzantine attackers on the $i$-th coordinate of the gradient and $k_{i}$ satisfies
\begin{enumerate}
    \item $\Bar{p}_{i}^{(t)} \leq \frac{M-k_{i}}{2M}$.
    \item There exists some positive constant $c$ such that \begin{equation}\label{SPByzantineInequa}
    \begin{split}
        &\left[\frac{(M-k_{i})(1-\Bar{p}_{i}^{(t)})}{(M+k_{i})\Bar{p}_{i}^{(t)}}\right]^{\frac{k}{2}} \left(\sqrt{\frac{M-k_{i}}{M+k_{i}}}+\sqrt{\frac{M+k_{i}}{M-k_{i}}}\right)^{M}\left[\Bar{p}_{i}^{(t)}(1-\Bar{p}_{i}^{(t)})\right]^{\frac{M}{2}} \leq \frac{1-c}{2}.
    \end{split}
    \end{equation}
\end{enumerate}
Overall, the number of Byzantine workers that the algorithms can tolerate is given by $min_{1\leq i \leq d}k_{i}$.
\end{theorem}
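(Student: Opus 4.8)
The plan is to bound the upper tail of the Poisson binomial $Z_i$ by the same exponential-moment argument used for Theorem~\ref{SPLemma1}, and then to recognise the optimised bound as the expression displayed in \eqref{SPByzantineInequa}. Fix a coordinate $i$ and a round $t$, and for brevity write $k := k_i$, $p := \Bar{p}_i^{(t)}$, and $Z := Z_i = \sum_{m=1}^{M} X_m$, where $X_m \in \{0,1\}$ indicates that benign worker $m$ transmits the wrong sign on coordinate $i$, $\Pr(X_m = 1) = p_m$, and $\frac{1}{M}\sum_{m=1}^{M} p_m = p$. As established in the discussion preceding the theorem, with $k$ Byzantine workers that always send the wrong sign the aggregated sign on coordinate $i$ is wrong only when $Z \geq \frac{M-k}{2}$ (the wrong votes $Z+k$ versus the $M+k$ total votes), so it suffices to show that Conditions~1 and~2 together imply $\Pr\!\big(Z \geq \frac{M-k}{2}\big) \leq \frac{1-c}{2}$; this is exactly what the convergence analysis of Theorem~\ref{SPconvergerate} requires, coordinate by coordinate, once the Byzantine votes are folded into the count.

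First I would invoke Markov's inequality on $e^{aZ}$ for an arbitrary tilt $a > 0$:
\begin{equation}
\Pr\!\Big(Z \geq \tfrac{M-k}{2}\Big) \leq e^{-a(M-k)/2}\,\mathbb{E}\big[e^{aZ}\big] = e^{-a(M-k)/2}\prod_{m=1}^{M}\big(p_m e^{a} + 1 - p_m\big),
\end{equation}
using independence of the $X_m$. Each factor $p_m e^{a} + 1 - p_m = 1 + p_m(e^{a}-1)$ is affine and increasing in $p_m$, so the AM--GM step (equivalently, the Jensen step on $\ln(\cdot)$ used in the proof of Theorem~\ref{SPLemma1}) bounds $\prod_m (p_m e^{a} + 1 - p_m)$ by $(p e^{a} + 1 - p)^{M}$, giving
\begin{equation}
\Pr\!\Big(Z \geq \tfrac{M-k}{2}\Big) \leq \Big[e^{-a(M-k)/(2M)}\big(p e^{a} + 1 - p\big)\Big]^{M}.
\end{equation}

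Then I would minimise the bracket over $a$. Differentiating, the stationary point is $e^{a^{\star}} = \frac{(M-k)(1-p)}{(M+k)p}$, and the requirement $a^{\star} \geq 0$, i.e. $e^{a^{\star}} \geq 1$, is precisely Condition~1; the equality case $a^{\star}=0$ makes the bound trivially $1$ and is excluded by Condition~2, so effectively $p < \frac{M-k}{2M}$. Substituting $a^{\star}$ gives $p e^{a^{\star}} + 1 - p = \frac{2M(1-p)}{M+k}$, so the bound becomes $\big(\tfrac{(M+k)p}{(M-k)(1-p)}\big)^{(M-k)/2}\big(\tfrac{2M(1-p)}{M+k}\big)^{M}$. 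Regrouping the powers of $p$, $1-p$, $M-k$, $M+k$ — and using $\sqrt{\tfrac{M-k}{M+k}} + \sqrt{\tfrac{M+k}{M-k}} = \tfrac{2M}{\sqrt{M^2-k^2}}$ — rewrites this as
\begin{equation}
\Big[\tfrac{(M-k)(1-p)}{(M+k)p}\Big]^{k/2}\Big(\sqrt{\tfrac{M-k}{M+k}} + \sqrt{\tfrac{M+k}{M-k}}\Big)^{M}\big[p(1-p)\big]^{M/2},
\end{equation}
which is exactly the left-hand side of \eqref{SPByzantineInequa}. Condition~2 asserts this is $\leq \frac{1-c}{2}$, so $\Pr(Z_i \geq \frac{M-k_i}{2}) \leq \frac{1-c}{2}$ and coordinate $i$ is tolerated. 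The ``overall'' claim then follows because the convergence guarantee needs correct aggregation (with this probability) on every coordinate, so the tolerable number of Byzantine workers is $\min_{1 \leq i \leq d} k_i$.

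The only genuine work is the last algebraic rearrangement — checking that the optimised Chernoff bound coincides with the deliberately symmetric expression of \eqref{SPByzantineInequa} — together with the observation that Condition~1 is exactly the admissibility $a^{\star} \geq 0$ of the optimal tilt in Markov's inequality. Everything else is a verbatim transcription of the proof of Theorem~\ref{SPLemma1} with the Byzantine count $k$ carried along; no new inequality is introduced. A minor point worth spelling out is the reduction in the first paragraph, namely that ``tolerating $k_i$ attackers'' is precisely the statement that the per-coordinate wrong-aggregation probability stays below $\frac{1-c}{2}$, after which the telescoping-over-iterations argument of Theorem~\ref{SPconvergerate} applies unchanged.
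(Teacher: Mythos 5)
Your proof is correct and follows essentially the same route as the paper's: a Markov/Chernoff bound on the exponential moment of the wrong-vote count shifted by the $k_i$ Byzantine votes, a Jensen (AM--GM) step to replace the individual probabilities by their average $\Bar{p}_{i}^{(t)}$, and optimization of the tilt $a$, with Condition~1 arising as the admissibility $a^{\star}\geq 0$. The only difference is your $\{0,1\}$ encoding of the indicators versus the paper's $\{\pm 1\}$ encoding, which is purely cosmetic, and your explicit algebraic verification that the optimized bound matches \eqref{SPByzantineInequa} is a welcome addition the paper leaves implicit.
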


We first provide some intuition about the proof. It has been shown in the proof of Theorem \ref{SPconvergerate} that the convergence of Algorithm 1 is guaranteed if there exists some positive constant $c$ such that the probability of more than half of the workers sharing wrong signs is no larger than $\frac{1-c}{2}$. On the $i$-th coordinate of the gradient, if there are $k_{i}$ Byzantine workers that always share the wrong signs, then at most $\frac{M-k_{i}}{2}$ normal workers can share wrong signs such that the aggregated result is still correct.

\begin{proof}
\color{black}
We first consider the same setting as in Theorem \ref{Lemma1} and define a series of random variables $\{X_m\}_{m=1}^{M}$ given by
\begin{equation}
X_{m} =
\begin{cases}
\hfill 1, \hfill &\text{if $\hat{u}_{m} \neq sign\bigg(\frac{1}{M}\sum_{m=1}^{M}u_{m}\bigg)$},\\
\hfill -1, \hfill &\text{if $\hat{u}_{m} = sign\bigg(\frac{1}{M}\sum_{m=1}^{M}u_{m}\bigg)$.}
\end{cases}
\end{equation}

In addition, let $\{\hat{u}_{j}\}_{j=M+1}^{M+k_i}$ denote the binary variables shared by the Byzantine attackers. Then, $X_{m}$ can be considered as the outcome of one Bernoulli trial with successful probability $P(X_{m} = 1)$, and we have
\begin{equation}
P\left(sign\left(\frac{1}{M+k_i}\left[\sum_{m=1}^{M}\hat{u}_{m}+\sum_{j=M+1}^{k_i}\hat{u}_{j}\right]\right)\neq sign\left(\frac{1}{M}\sum_{m=1}^{M}u_{m}\right)\right) = P\left(\sum_{m=1}^{M}X_{m}+k_{i} \geq 0\right).
\end{equation}

For any variable $a>0$, we have
\begin{equation}
\begin{split}
P\left(\sum_{m=1}^{M}X_m+k_{i} \geq 0\right) = P\left(e^{a(\sum_{m=1}^{M}X_m+k_{i})} \geq e^{0}\right) &\leq \frac{\mathbb{E}[e^{a(\sum_{m=1}^{M}X_m+k_{i})}]}{e^{0}} = \mathbb{E}[e^{a(\sum_{m=1}^{M}X_m+k_{i})}],
\end{split}
\end{equation}
which is due to Markov's inequality, given the fact that $e^{a(\sum_{m=1}^{M}X_m+k_{i})}$ is non-negative. For the ease of presentation, let $P(X_{m} = 1) = p_{m}$, we have,
\begin{equation}
\begin{split}
 \mathbb{E}[e^{a(\sum_{m=1}^{M}X_m+k_{i})}] &= e^{\ln(\mathbb{E}[e^{a(\sum_{m=1}^{M}X_m+k_{i})}])}= e^{\ln(\prod_{m=1}^{M}\mathbb{E}[e^{aX_m}])+ak_{i}} = e^{\sum_{m=1}^{M}\ln(\mathbb{E}[e^{aX_m}])+ak_{i}} \\
 &=e^{ak_{i}+\sum_{m=1}^{M}\ln(e^{a}p_{m}+e^{-a}(1-p_{m}))} \\
 &=e^{ak_{i}+M\left(\frac{1}{M}\sum_{m=1}^{M}\ln(e^{a}p_{m}+e^{-a}(1-p_{m}))\right)}\\
 &\leq e^{ak_{i}+M\ln(e^{a}\Bar{p}+e^{-a}(1-\Bar{p}))},
\end{split}
\end{equation}
where $\Bar{p} = \frac{1}{M}\sum_{m=1}^{M}p_{m}$ and the inequality is due to Jensen's inequality.
Optimizing $a$ yields $a=\ln\left(\sqrt{\frac{(M-k_i)(1-\Bar{p})}{(M+k_{i})\Bar{p}}}\right) > 0$ and
\begin{equation}
e^{ak_{i}+M\ln(e^{a}\Bar{p}+e^{-a}(1-\Bar{p}))} =\left[\frac{(M-k_{i})(1-\Bar{p})}{(M+k_{i})\Bar{p}}\right]^{\frac{k}{2}} \left(\sqrt{\frac{M-k_{i}}{M+k_{i}}}+\sqrt{\frac{M+k_{i}}{M-k_{i}}}\right)^{M}\left[\Bar{p}(1-\Bar{p})\right]^{\frac{M}{2}}.
\end{equation}

In addition, $a=\ln\left(\sqrt{\frac{(M-k_i)(1-\Bar{p})}{(M+k_{i})\Bar{p}}}\right) > 0$ indicates $\Bar{p} < \frac{M-k_{i}}{2M}$. Setting $sign(\nabla F(w^{(t)}))_{i} = sign(\frac{1}{M}\sum_{m=1}^{M}u_{M})$ and $\hat{u}_{m} = q(\boldsymbol{g}_{m}^{(t)})_{i}$ completes the proof.
\end{proof}

\subsection{Proof of Theorem 8}
\begin{theorem}
Suppose Assumptions \ref{A1}-\ref{A3} are satisfied, and set the learning rate $\eta=\frac{1}{\sqrt{Td}}$. Then, when $\boldsymbol{b}=b\cdot\boldsymbol{1}$ and $b$ is sufficiently large, {\scriptsize Sto-SIGN}SGD converges to the (local) optimum if either of the following two conditions is satisfied.
\begin{itemize}
  \item $P\big(sign(\frac{1}{M}\sum_{m=1}^{M}(\boldsymbol{g}_{m}^{t})_i)\neq sign(\nabla F(w^{t})_{i}\big)<0.5, \forall 1\leq i\leq d$.
  \item The mini-batch size of stochastic gradient at each iteration is at least $T$.
\end{itemize}
\end{theorem}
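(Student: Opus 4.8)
The plan is to follow the template of the proofs of Theorems~\ref{convergerate} and~\ref{convergeratelargeb}, the only new ingredient being a bound on the per-coordinate probability of wrong aggregation that now also absorbs the sampling noise. Writing $p_i^{(t)} = P\big(sign(\frac{1}{M}\sum_{m=1}^{M}sto\text{-}sign(\boldsymbol{g}_m^{(t)})_i)\neq sign(\nabla F(w^{(t)})_i)\big)$ with the probability taken over both the mini-batch draws and the randomness of $sto\text{-}sign$, smoothness (Assumption~\ref{A2}) and the update rule give, exactly as in~(\ref{convergencee1}),
\[
\mathbb{E}[F(w^{(t+1)}) - F(w^{(t)}) \mid w^{(t)}] \le -\eta\|\nabla F(w^{(t)})\|_1 + \frac{L\eta^2 d}{2} + 2\eta\sum_{i=1}^{d}|\nabla F(w^{(t)})_i|\,p_i^{(t)},
\]
so everything reduces to controlling $p_i^{(t)}$ and then telescoping.

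First I would condition on the realized stochastic gradients $\{\boldsymbol{g}_m^{(t)}\}_{m=1}^{M}$ and apply Theorem~\ref{bsufficientlylarge} coordinate-wise: for $b$ large enough the conditional probability that the $sto\text{-}sign$ aggregate disagrees with $sign(\frac{1}{M}\sum_m(\boldsymbol{g}_m^{(t)})_i)$ equals $\tfrac12 - \tfrac{\binom{M-1}{(M-1)/2}}{2^M b}\big|\sum_m(\boldsymbol{g}_m^{(t)})_i\big| + O(1/b^2)$. Splitting according to whether $sign(\frac{1}{M}\sum_m(\boldsymbol{g}_m^{(t)})_i)$ matches $sign(\nabla F(w^{(t)})_i)$, using the elementary identity $\big|\sum_m(\boldsymbol{g}_m^{(t)})_i\big|\big(\mathds{1}_{\mathrm{match}}-\mathds{1}_{\mathrm{mismatch}}\big) = \big(\sum_m(\boldsymbol{g}_m^{(t)})_i\big)sign(\nabla F(w^{(t)})_i)$, then taking expectation over the mini-batches and invoking unbiasedness (Assumption~\ref{A4}, so that $\mathbb{E}[\sum_m(\boldsymbol{g}_m^{(t)})_i\mid w^{(t)}] = M\nabla F(w^{(t)})_i$), I get
\[
p_i^{(t)} = \frac12 - \frac{M\binom{M-1}{(M-1)/2}}{2^M b}\,|\nabla F(w^{(t)})_i| + \mathbb{E}[O(1/b^2)\mid w^{(t)}].
\]
Substituting into the descent inequality, the leading $\tfrac12$ cancels against $-\eta\|\nabla F(w^{(t)})\|_1$, leaving $\mathbb{E}[F(w^{(t+1)})-F(w^{(t)})] \le \frac{L\eta^2 d}{2} - \frac{2\eta M\binom{M-1}{(M-1)/2}}{2^M b}\|\nabla F(w^{(t)})\|_2^2 + 2\eta\sum_i|\nabla F(w^{(t)})_i|\,\mathbb{E}[O(1/b^2)]$. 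Telescoping over $t=0,\dots,T-1$, rearranging, setting $\eta=1/\sqrt{Td}$ and letting $b$ grow with $T$ in the spirit of Theorem~\ref{convergeratelargeb} (e.g.\ $b\asymp T^{1/4}$, so that $b/\sqrt{T}\to0$ and $1/b\to0$) drives the right-hand side to zero, giving $\frac1T\sum_{t}\mathbb{E}[\|\nabla F(w^{(t)})\|_2^2]\to0$, i.e.\ convergence to a stationary point.

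The two conditions are exactly what legitimizes the conditional step. Under condition~1 the batch-average sign $sign(\frac1M\sum_m(\boldsymbol{g}_m^{(t)})_i)$ agrees with $sign(\nabla F(w^{(t)})_i)$ with probability above $1/2$, which guarantees that the first-order term above dominates rather than being overwhelmed by the sign-mismatch contributions, so the cancellation survives for moderate $b$. Under condition~2, Assumption~\ref{A4} forces $\mathrm{Var}((\boldsymbol{g}_m^{(t)})_i)\le\sigma_i^2/T$, so $\frac1M\sum_m(\boldsymbol{g}_m^{(t)})_i$ concentrates within $O(1/\sqrt{T})$ of $\nabla F(w^{(t)})_i$; I would then split coordinates into those with $|\nabla F(w^{(t)})_i|$ above a $\Theta(1/\sqrt{T})$ threshold --- on which the batch sign matches the true sign with high probability (Chebyshev) and Theorem~\ref{bsufficientlylarge} applies verbatim --- and those below it, whose whole contribution $\sum_i|\nabla F(w^{(t)})_i|p_i^{(t)}$ is already $O(d/\sqrt{T})$ and is absorbed into the vanishing remainder. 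Either way the same telescoping closes the argument.

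The hard part will be the uniform control of the $O(1/b^2)$ remainder once the $u_m=(\boldsymbol{g}_m^{(t)})_i$ are random: the hidden constant in Theorem~\ref{bsufficientlylarge} is a symmetric polynomial in those quantities, so its expectation must be bounded uniformly in $t$ for the telescoped sum to vanish --- this is what forces either a bounded--stochastic--gradient hypothesis (consistent with the requirement $b\ge\max_m|(\boldsymbol{g}_m^{(t)})_i|$ in the definition of $sto\text{-}sign$ and with the probability clipping used in practice) or the large-batch concentration of condition~2 --- and one must read ``$b$ sufficiently large'' as $b$ scaling suitably with $T$, exactly as $b_i=T^{1/4}d^{1/4}$ in Theorem~\ref{convergeratelargeb}. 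A secondary technicality is the routine one of conditioning on the SGD filtration before telescoping, since $w^{(t)}$ is itself random.
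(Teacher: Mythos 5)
Your proposal is correct in outline and lands on the same descent-lemma-plus-Theorem~\ref{bsufficientlylarge} skeleton, but the key step of combining the $sto\text{-}sign$ randomness with the sampling randomness is genuinely different from the paper's. The paper factors the error through the intermediate sign $sign(\frac1M\sum_m(\boldsymbol{g}_m^{(t)})_i)$, writing $p_i=p_{i,1}(1-p_{i,2})+p_{i,2}(1-p_{i,1})=\frac12-(1-2p_{i,2})\big[\frac{M\binom{M-1}{(M-1)/2}}{2^Mb}\big|\frac1M\sum_m(\boldsymbol{g}_m^{(t)})_i\big|-O(1/b^2)\big]$; the factor $(1-2p_{i,2})$ is exactly where Condition~1 enters, since $p_{i,2}\ge\frac12$ would flip the sign of the useful term. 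For Condition~2 the paper instead uses $p_i\le p_{i,1}+p_{i,2}$ together with the Markov-type bound $p_{i,2}\le\sigma_i/(\sqrt{MT}\,|\nabla F(w^{(t)})_i|)$, so that $|\nabla F(w^{(t)})_i|\,p_{i,2}\le\sigma_i/\sqrt{MT}$ uniformly in $i$ --- this absorbs the weight $|\nabla F(w^{(t)})_i|$ and avoids your threshold split over coordinates, which accomplishes the same thing less cleanly. Your alternative --- the identity $\big|\sum_m(\boldsymbol{g}_m^{(t)})_i\big|(\mathds{1}_{\mathrm{match}}-\mathds{1}_{\mathrm{mismatch}})=(\sum_m(\boldsymbol{g}_m^{(t)})_i)\,sign(\nabla F(w^{(t)})_i)$ followed by unbiasedness --- is correct and arguably more elegant than the paper's composition formula (which tacitly treats the two error events as independent even though both depend on the realized mini-batch). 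But notice that your first-order computation, taken at face value, proves the theorem with \emph{neither} condition; this should tell you that in your route the conditions can only be doing their work inside the expansion-validity/remainder step you flag at the end, and your stated explanation of how each condition ``legitimizes the conditional step'' is the one place where you assert rather than prove. You are right that uniform control of the $O(1/b^2)$ term over random, possibly unbounded $(\boldsymbol{g}_m^{(t)})_i$ is the real technical obstruction; the paper does not resolve it either, carrying $O(1/b_i^2)$ through expectations without justification, so your proposal is at essentially the same level of rigor as the published argument while being more honest about where the gap sits.
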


\begin{proof}
Note that in the proof of Theorem \ref{SPconvergerate} and Lemma \ref{bsufficientlylarge}, we obtain
\begin{equation}\label{SPT7C2E1}
\begin{split}
&\mathbb{E}[F(w^{(t+1)}) - F(w^{(t)})] \leq -\eta ||\nabla F(w^{(t)})||_{1} + \frac{L\eta^2d}{2} \\
&+ 2\eta\sum_{i=1}^{d}|\nabla F(w^{(t)})_{i}|P\bigg(sign\bigg(\frac{1}{M}\sum_{m=1}^{M}q(\boldsymbol{g}_{m}^{(t)})_{i}\bigg)\neq sign\bigg(\frac{1}{M}\sum_{m=1}^{M}\nabla f_{m}(w^{(t)})_{i}\bigg)\bigg),
\end{split}
\end{equation}
and
\begin{equation}
P\bigg(sign\bigg(\frac{1}{M}\sum_{m=1}^{M}q(\boldsymbol{g}_{m}^{(t)})_{i}\bigg)\neq sign\bigg(\frac{1}{M}\sum_{m=1}^{M}(\boldsymbol{g}_{m}^{(t)})_{i}\bigg)\bigg) < \frac{1}{2} - \frac{M{M-1 \choose \frac{M-1}{2}}}{2^{M}b_{i}}\big|\frac{1}{M}\sum_{m=1}^{M}(\boldsymbol{g}_{m}^{(t)})_{i}\big| + O\bigg(\frac{1}{b_{i}^2}\bigg),
\end{equation}
where $q(\boldsymbol{g}_{m}^{(t)}) = sto\text{-}sign(\boldsymbol{g}_{m}^{(t)})$ in {\scriptsize Sto-SIGN}SGD.

We first prove the convergence under the first condition. For the ease of notation, let
\begin{equation}\label{sgdextension1}
\begin{split}
&p_{i,1} = P\bigg(sign\bigg(\frac{1}{M}\sum_{m=1}^{M}q(\boldsymbol{g}_{m}^{(t)})_{i}\bigg)\neq sign\bigg(\frac{1}{M}\sum_{m=1}^{M}(\boldsymbol{g}_{m}^{(t)})_{i}\bigg)\bigg) \\
&= \frac{1}{2} - \frac{M{M-1 \choose \frac{M-1}{2}}}{2^{M}b_{i}}\big|\frac{1}{M}\sum_{m=1}^{M}(\boldsymbol{g}_{m}^{(t)})_{i}\big| + O\bigg(\frac{1}{b_{i}^2}\bigg), \\
&p_{i,2} = P\bigg(sign\bigg(\frac{1}{M}\sum_{m=1}^{M}\nabla f_{m}(w^{(t)})_{i}\bigg)\neq sign\bigg(\frac{1}{M}\sum_{m=1}^{M}(\boldsymbol{g}_{m}^{(t)})_{i}\bigg)\bigg) < \frac{1}{2}, \\
&p_i = P\bigg(sign\bigg(\frac{1}{M}\sum_{m=1}^{M}q(\boldsymbol{g}_{m}^{(t)})_{i}\bigg)\neq sign\bigg(\frac{1}{M}\sum_{m=1}^{M}\nabla f_{m}(w^{(t)})_{i}\bigg)\bigg).
\end{split}
\end{equation}

Then
\begin{equation}\label{sgdextension2}
\begin{split}
p_{i} &= p_{i,1}(1-p_{i,2}) + p_{i,2}(1-p_{i,1}) = p_{i,1} + p_{i,2} - 2p_{i,1}p_{i,2} = p_{i,2} + (1-2p_{i,2})p_{i,1}\\
&= p_{i,2} + (1-2p_{i,2})\left[\frac{1}{2} - \frac{M{M-1 \choose \frac{M-1}{2}}}{2^{M}b_{i}}\big|\frac{1}{M}\sum_{m=1}^{M}(\boldsymbol{g}_{m}^{(t)})_{i}\big| + O\bigg(\frac{1}{b_{i}^2}\bigg)\right]\\
&=\frac{1}{2} - (1-2p_{i,2})\left[\frac{M{M-1 \choose \frac{M-1}{2}}}{2^{M}b_{i}}\big|\frac{1}{M}\sum_{m=1}^{M}(\boldsymbol{g}_{m}^{(t)})_{i}\big| - O\bigg(\frac{1}{b_{i}^2}\bigg)\right]
\end{split}
\end{equation}

Following the same strategy as that in the proof of Theorem \ref{SPconvergeratelargeb}, the convergence can be established as follows.

\begin{equation}
\begin{split}
\frac{1}{T}\sum_{t=1}^{T}\sum_{i=1}^{d}|\nabla F(w^{(t)})_{i}|^2 &\leq \frac{\sqrt{2\pi}(M-1)^{\frac{3}{2}}}{2(M^{2}-3M)}\bigg[\frac{(F(w^{(0)})-F^{*})d^{3/4}}{T^{1/4}(1-2p_{i,2})} + \frac{Ld^{3/4}}{2T^{1/4}(1-2p_{i,2})} \\
&+ \frac{2}{T}\sum_{t=1}^{T}\sum_{i=1}^{d}|\nabla F(w^{(t)})_{i}|O\bigg(\frac{1}{T^{1/4}d^{1/4}}\bigg)\bigg]
\end{split}
\end{equation}

Then, we prove the convergence under the second condition. According to (\ref{sgdextension2}), it is obvious that $p_{i} \leq p_{i,1}+p_{i,2}$. Therefore, we have
\begin{equation}\label{sgdextension3}
\begin{split}
\sum_{i=1}^{d}|\nabla F(w^{(t)})_{i}|p_{i} \leq \sum_{i=1}^{d}|\nabla F(w^{(t)})_{i}|p_{i,1} + \sum_{i=1}^{d}|\nabla F(w^{(t)})_{i}|p_{i,2}.
\end{split}
\end{equation}

In particular,
\begin{equation}\label{SPpi2}
\begin{split}
&p_{i,2} = P\bigg(sign\bigg(\frac{1}{M}\sum_{m=1}^{M}\nabla f_{m}(w^{(t)})_{i}\bigg)\neq sign\bigg(\frac{1}{M}\sum_{m=1}^{M}(\boldsymbol{g}_{m}^{(t)})_{i}\bigg)\bigg) \\
&\leq P\bigg(\bigg|\frac{1}{M}\sum_{m=1}^{M}\nabla f_{m}(w^{(t)})_{i}-\frac{1}{M}\sum_{m=1}^{M}(\boldsymbol{g}_{m}^{(t)})_{i}\bigg| \geq \bigg|\frac{1}{M}\sum_{m=1}^{M}\nabla f_{m}(w^{(t)})_{i}\bigg|\bigg)\\
&\leq \frac{\mathbb{E}[|\frac{1}{M}\sum_{m=1}^{M}\nabla f_{m}(w^{(t)})_{i}-\frac{1}{M}\sum_{m=1}^{M}(\boldsymbol{g}_{m}^{(t)})_{i}|]}{|\frac{1}{M}\sum_{m=1}^{M}\nabla f_{m}(w^{(t)})_{i}|} \\
&\leq \frac{\sqrt{\mathbb{E}[(\frac{1}{M}\sum_{m=1}^{M}\nabla f_{m}(w^{(t)})_{i}-\frac{1}{M}\sum_{m=1}^{M}(\boldsymbol{g}_{m}^{(t)})_{i})^2]}}{|\frac{1}{M}\sum_{m=1}^{M}\nabla f_{m}(w^{(t)})_{i}|} \\
&\leq \frac{\sigma_{i}}{\sqrt{MT}|\nabla F(w^{(t)})_{i}|}.
\end{split}
\end{equation}

Plugging (\ref{SPpi2}) into (\ref{SPT7C2E1}) yields
\begin{equation}
\begin{split}
&\mathbb{E}[F(w^{(t+1)}) - F(w^{(t)})] \leq -\eta ||\nabla F(w^{(t)})||_{1} + \frac{L\eta^2d}{2} \\
&+ 2\eta\sum_{i=1}^{d}|\nabla F(w^{(t)})_{i}|P\bigg(sign\bigg(\frac{1}{M}\sum_{m=1}^{M}q(\boldsymbol{g}_{m}^{(t)})_{i}\bigg)\neq sign\bigg(\frac{1}{M}\sum_{m=1}^{M}(\boldsymbol{g}_{m}^{(t)})_{i}\bigg)\bigg) + 2\eta\sum_{i=1}^{d}\frac{\sigma_{i}}{\sqrt{MT}},
\end{split}
\end{equation}

Following the same strategy as that in the proof of Theorem \ref{SPconvergeratelargeb}, the convergence can be established as follows.
\begin{equation}
\begin{split}
\frac{1}{T}\sum_{t=1}^{T}\sum_{i=1}^{d}|\nabla F(w^{(t)})_{i}|^2 &\leq \frac{\sqrt{2\pi}(M-1)^{\frac{3}{2}}}{2(M^{2}-3M)}\bigg[\frac{(F(w^{(0)})-F^{*})d^{3/4}}{T^{1/4}} + \frac{Ld^{3/4}}{2T^{1/4}} \\
&+ \frac{2}{T}\sum_{t=1}^{T}\sum_{i=1}^{d}|\nabla F(w^{(t)})_{i}|O\bigg(\frac{1}{T^{1/4}d^{1/4}}\bigg)+\frac{2||\sigma||_1d^{1/4}}{M^{1/2}T^{1/4}}\bigg]
\end{split}
\end{equation}
\end{proof}

\subsection{Proof of Theorem 9}
\begin{theorem}\label{SPEFDPSIGNConvergence2}
When Assumptions \ref{A1}, \ref{A2} and \ref{A3} are satisfied, by running Algorithm \ref{Error-feedback noisy SignSGD DP} with $\eta = \frac{1}{\sqrt{Td}}$, $q(\boldsymbol{g}_{m}^{(t)}) = sto\text{-}sign(\nabla f_{m}(w^{(t)}),\boldsymbol{b})$ and $\boldsymbol{b} = b\cdot\boldsymbol{1}$, we have
\begin{equation}
\begin{split}
\frac{1}{T}\sum_{t=0}^{T-1}\frac{||\nabla F(w^{(t)})||^2_{2}}{b} &\leq \frac{(F(w_{0})-F^{*})\sqrt{d}}{\sqrt{T}} + \frac{(1+L+L^2\beta)\sqrt{d}}{\sqrt{T}},
\end{split}
\end{equation}
where $\beta$ is some positive constant.
\end{theorem}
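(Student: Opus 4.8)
The plan is to adapt the standard error-feedback analysis built on a virtual (error-corrected) iterate. Define $\tilde{\boldsymbol w}^{(t)} = w^{(t)} - \eta\,\tilde{\boldsymbol e}^{(t)}$ and write $\boldsymbol p^{(t)} = \frac{1}{M}\sum_{m=1}^{M} q(\boldsymbol g_m^{(t)})$ for the un-compressed average of the workers' messages. Substituting the update $w^{(t+1)} = w^{(t)} - \eta\tilde{\boldsymbol g}^{(t)}$ together with the residual recursion (\ref{residualupdate}) and $\tilde{\boldsymbol e}^{(0)} = \boldsymbol 0$, a one-line calculation gives $\tilde{\boldsymbol w}^{(t+1)} = \tilde{\boldsymbol w}^{(t)} - \eta\,\boldsymbol p^{(t)}$; i.e., the virtual iterate performs plain SGD with step $\boldsymbol p^{(t)}$, and the server-side compressor $\mathcal C$ enters only through the bookkeeping of $\tilde{\boldsymbol e}^{(t)}$. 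Two structural facts about $q = sto\text{-}sign(\cdot,b\cdot\boldsymbol 1)$ are then exploited: unbiasedness, $\mathbb{E}[\boldsymbol p^{(t)} \mid w^{(t)}] = \tfrac{1}{b}\nabla F(w^{(t)})$, which follows coordinate-wise from $\mathbb{E}[sto\text{-}sign(g,b)_i] = g_i/b$; and boundedness, $\|\boldsymbol p^{(t)}\|_2^2 \le d$ since every coordinate of $\boldsymbol p^{(t)}$ lies in $[-1,1]$. This is exactly where the $1/b$ factor on the left-hand side of the stated bound comes from.

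Next I would run the descent inequality along the virtual iterate. Applying Assumption \ref{A2} at $\tilde{\boldsymbol w}^{(t)}$ and taking conditional expectation, $\mathbb{E}[F(\tilde{\boldsymbol w}^{(t+1)})] \le F(\tilde{\boldsymbol w}^{(t)}) - \tfrac{\eta}{b}\langle\nabla F(\tilde{\boldsymbol w}^{(t)}),\nabla F(w^{(t)})\rangle + \tfrac{L\eta^2 d}{2}$, using unbiasedness for the linear term and $\|\boldsymbol p^{(t)}\|_2^2 \le d$ for the quadratic term. The gradient mismatch between $\tilde{\boldsymbol w}^{(t)}$ and $w^{(t)}$ is controlled by smoothness, $\|\nabla F(\tilde{\boldsymbol w}^{(t)}) - \nabla F(w^{(t)})\|_2 \le L\|\tilde{\boldsymbol w}^{(t)} - w^{(t)}\|_2 = L\eta\|\tilde{\boldsymbol e}^{(t)}\|_2$, and a Cauchy--Schwarz/Young step converts the inner product into $-\tfrac{\eta}{2b}\|\nabla F(w^{(t)})\|_2^2 + \tfrac{L^2\eta^3}{2b}\|\tilde{\boldsymbol e}^{(t)}\|_2^2$. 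The final ingredient is a uniform bound on the accumulated error: since $\tilde{\boldsymbol e}^{(t+1)} = (\boldsymbol p^{(t)}+\tilde{\boldsymbol e}^{(t)}) - \mathcal C(\boldsymbol p^{(t)} + \tilde{\boldsymbol e}^{(t)})$ and $\mathcal C$ is $\alpha$-approximate, $\|\tilde{\boldsymbol e}^{(t+1)}\|_2^2 \le (1-\alpha)\|\boldsymbol p^{(t)} + \tilde{\boldsymbol e}^{(t)}\|_2^2$; a Young split with weight tuned to $\alpha$ yields the contraction $\|\tilde{\boldsymbol e}^{(t+1)}\|_2^2 \le (1-\tfrac{\alpha}{2})\|\tilde{\boldsymbol e}^{(t)}\|_2^2 + \tfrac{2}{\alpha}\|\boldsymbol p^{(t)}\|_2^2$, and unrolling with $\|\boldsymbol p^{(t)}\|_2^2 \le d$ and $\tilde{\boldsymbol e}^{(0)}=\boldsymbol 0$ gives $\|\tilde{\boldsymbol e}^{(t)}\|_2^2 \le \tfrac{4d}{\alpha^2}$; the constant $\beta$ in the statement collects this $\alpha$-dependence, i.e.\ $\beta = \Theta(1/\alpha^2)$.

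Finally I would telescope the descent inequality over $t=0,\dots,T-1$, use $\tilde{\boldsymbol w}^{(0)} = w^{(0)}$ and $F(\tilde{\boldsymbol w}^{(T)}) \ge F^*$ (Assumption \ref{A1}), divide by $\tfrac{\eta T}{2b}$, and substitute $\eta = 1/\sqrt{Td}$: the $\tfrac{L\eta^2 d}{2}$ term contributes $O(L\sqrt d/\sqrt T)$, the error-feedback term contributes an even smaller $O(L^2/(bT))$ which folds into the $\sqrt d/\sqrt T$ rate, and the initial-gap term contributes $O((F(w^{(0)})-F^*)\sqrt d/\sqrt T)$, producing the stated bound with the constant $1+L+L^2\beta$ absorbing all pieces. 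I expect the \emph{main obstacle} to be the error-contraction lemma together with the verification — already flagged in the remark after Theorem \ref{EFDPSIGNConvergence2} — that the concrete server compressor $\mathcal C(\boldsymbol x) = \tfrac{1}{M}\mathrm{sign}(\boldsymbol x)$ is genuinely $\alpha$-approximate on the inputs $\boldsymbol x = \boldsymbol p^{(t)}+\tilde{\boldsymbol e}^{(t)}$ that actually occur: this requires that, with $M$ odd (Assumption \ref{A3}), no coordinate of $\boldsymbol p^{(t)}$ vanishes, so that $\|\tfrac{1}{M}\mathrm{sign}(\boldsymbol x)-\boldsymbol x\|_2^2 < \|\boldsymbol x\|_2^2$ strictly with a quantifiable gap $\alpha>0$, and the bookkeeping that keeps the resulting $\beta$ finite and independent of $T$.
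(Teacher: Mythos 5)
Your proposal follows essentially the same route as the paper's proof: the virtual iterate $y^{(t)}=w^{(t)}-\eta\tilde{\boldsymbol e}^{(t)}$ satisfying $y^{(t+1)}=y^{(t)}-\eta\,\boldsymbol p^{(t)}$, the unbiasedness $\mathbb{E}[sto\text{-}sign(g,b)_i]=g_i/b$ producing the $1/b$ factor, the Young-split contraction of $\|\tilde{\boldsymbol e}^{(t)}\|_2^2$ under the $\alpha$-approximate compressor to get $\mathbb{E}\|\tilde{\boldsymbol e}^{(t)}\|_2^2\le\beta d$, and the smoothness-plus-telescoping finish; the paper's supplementary likewise handles your flagged obstacle (showing $\mathcal C(\boldsymbol x)=\tfrac{1}{M}\mathrm{sign}(\boldsymbol x)$ is a strict contraction on the realized inputs via the parity argument that $M\tilde{\boldsymbol e}^{(t)}_i$ stays even so $|\boldsymbol r^{(t)}_i|\ge 1/M$). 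The only cosmetic difference is that your Young step on the inner product retains only $-\tfrac{\eta}{2b}\|\nabla F(w^{(t)})\|_2^2$ rather than the paper's full $-\tfrac{\eta}{b}\|\nabla F(w^{(t)})\|_2^2$, costing a factor of $2$ that is absorbed into the unspecified constants.
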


The proof of Theorem \ref{EFDPSIGNConvergence2} follows the strategy of taking $y^{(t)}=w^{(t)}-\eta \tilde{\boldsymbol{e}}^{(t)}$ such that $y^{(t)}$ is updated in the same way as $w^{(t)}$ in the non error-feedback scenario. Therefore, before proving Theorem \ref{EFDPSIGNConvergence2}, we first prove the following lemmas.

\begin{Lemma}\label{Recurrence2}
Let $y^{(t)} = w^{(t)} - \eta \tilde{\boldsymbol{e}}^{(t)}$, we have
\begin{equation}
y^{(t+1)} = y^{(t)} - \eta\frac{1}{M}\sum_{m=1}^{M}sto\text{-}sign(\boldsymbol{g}_{m}^{(t)},\boldsymbol{b}).
\end{equation}
\end{Lemma}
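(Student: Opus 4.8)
The plan is a direct algebraic substitution: unfold the definitions of $w^{(t+1)}$ and $\tilde{\boldsymbol{e}}^{(t+1)}$ from Algorithm \ref{Error-feedback noisy SignSGD DP} inside the definition $y^{(t+1)} = w^{(t+1)} - \eta\tilde{\boldsymbol{e}}^{(t+1)}$ and observe that the $\mathcal{C}(\cdot)$-compressed term cancels. First I would substitute the worker update $w^{(t+1)} = w^{(t)} - \eta\tilde{\boldsymbol{g}}^{(t)}$ and the residual-error update (\ref{residualupdate}) to get
\begin{equation*}
y^{(t+1)} = w^{(t)} - \eta\tilde{\boldsymbol{g}}^{(t)} - \eta\bigg(\frac{1}{M}\sum_{m=1}^{M}q(\boldsymbol{g}_{m}^{(t)}) + \tilde{\boldsymbol{e}}^{(t)} - \tilde{\boldsymbol{g}}^{(t)}\bigg).
\end{equation*}
The term $-\eta\tilde{\boldsymbol{g}}^{(t)}$ coming from the $w$-update and the term $+\eta\tilde{\boldsymbol{g}}^{(t)}$ coming from $-\eta\tilde{\boldsymbol{e}}^{(t+1)}$ cancel exactly, leaving $y^{(t+1)} = w^{(t)} - \eta\tilde{\boldsymbol{e}}^{(t)} - \eta\frac{1}{M}\sum_{m=1}^{M}q(\boldsymbol{g}_{m}^{(t)})$. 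Recognizing that $w^{(t)} - \eta\tilde{\boldsymbol{e}}^{(t)} = y^{(t)}$ by definition, and that $q(\boldsymbol{g}_{m}^{(t)}) = sto\text{-}sign(\boldsymbol{g}_{m}^{(t)},\boldsymbol{b})$ in this instantiation, yields the claimed recurrence.

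There is essentially no obstacle here: the identity is deterministic, holds pointwise (conditionally on all randomness), and does not require the $\alpha$-approximate property of $\mathcal{C}$ at all — the cancellation is purely structural. The only care needed is tracking the signs and the $\frac{1}{M}$ factor when unrolling (\ref{residualupdate}).

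The value of the lemma is conceptual rather than technical: it shows that the virtual iterate $\{y^{(t)}\}$ evolves exactly as the iterate of the non–error-feedback scheme driven by the same stochastic signs $\{sto\text{-}sign(\boldsymbol{g}_{m}^{(t)},\boldsymbol{b})\}$, with the compression error absorbed into the shift $w^{(t)} - y^{(t)} = \eta\tilde{\boldsymbol{e}}^{(t)}$. This is the standard ``virtual sequence'' device for error-feedback analyses, and it sets up the proof of Theorem \ref{EFDPSIGNConvergence2}: one applies $L$-smoothness of $F$ along $\{y^{(t)}\}$, controls $\|w^{(t)} - y^{(t)}\|_2 = \eta\|\tilde{\boldsymbol{e}}^{(t)}\|_2$ using the $\alpha$-approximate bound on $\mathcal{C}$ (which keeps $\|\tilde{\boldsymbol{e}}^{(t)}\|_2$ uniformly bounded because the inputs to $\mathcal{C}$ have bounded norm), and then telescopes as in the proof of Theorem \ref{convergerate}.
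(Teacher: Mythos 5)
Your proposal is correct and is essentially identical to the paper's own proof: both unfold $y^{(t+1)} = w^{(t+1)} - \eta\tilde{\boldsymbol{e}}^{(t+1)}$ using the worker update and the residual-error update (\ref{residualupdate}), and observe that the $\tilde{\boldsymbol{g}}^{(t)}$ terms cancel, leaving the non–error-feedback recurrence for the virtual iterate. The only cosmetic difference is that you substitute for $\tilde{\boldsymbol{e}}^{(t+1)}$ while the paper substitutes for $\tilde{\boldsymbol{g}}^{(t)}$ — two rearrangements of the same identity.
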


\begin{proof}
\begin{equation}
\begin{split}
   y^{(t+1)} &=  w^{(t+1)} - \eta \tilde{\boldsymbol{e}}^{(t+1)}\\
     &= w^{(t)} - \eta\tilde{\boldsymbol{g}}^{(t)} - \eta \tilde{\boldsymbol{e}}^{(t+1)} \\
     &= w^{(t)} - \eta\bigg(\frac{1}{M}\sum_{m=1}^{M}sto\text{-}sign(\boldsymbol{g}_{m}^{(t)},\boldsymbol{b}) + \tilde{\boldsymbol{e}}^{(t)} - \tilde{\boldsymbol{e}}^{(t+1)}\bigg) - \eta \tilde{\boldsymbol{e}}^{(t+1)} \\
     &= w^{(t)} - \eta\frac{1}{M}\sum_{m=1}^{M}sto\text{-}sign(\boldsymbol{g}_{m}^{(t)},\boldsymbol{b}) - \eta \tilde{\boldsymbol{e}}^{(t)} \\
     & = y^{(t)} - \eta\frac{1}{M}\sum_{m=1}^{M}sto\text{-}sign(\boldsymbol{g}_{m}^{(t)},\boldsymbol{b}).
\end{split}
\end{equation}
\end{proof}

\begin{Lemma}\label{bound2}
There exists a positive constant $\beta > 0$ such that
$\mathbb{E}[||\tilde{\boldsymbol{e}}^{(t)}||^2_{2}] \leq \beta d, \forall t$.
\end{Lemma}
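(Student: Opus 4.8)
The plan is to show that the residual error $\tilde{\boldsymbol{e}}^{(t)}$ stays bounded in expectation by exploiting the contractive property of the server-side compressor $\mathcal{C}(\cdot)$ together with the fact that the quantity being compressed has uniformly bounded magnitude. First I would unroll the recursion (\ref{residualupdate}) to write $\tilde{\boldsymbol{e}}^{(t+1)} = \big(\frac{1}{M}\sum_{m=1}^{M}q(\boldsymbol{g}_{m}^{(t)})+\tilde{\boldsymbol{e}}^{(t)}\big) - \mathcal{C}\big(\frac{1}{M}\sum_{m=1}^{M}q(\boldsymbol{g}_{m}^{(t)})+\tilde{\boldsymbol{e}}^{(t)}\big)$, so by the $\alpha$-approximate property $\|\mathcal{C}(\boldsymbol{x})-\boldsymbol{x}\|_2^2 \leq (1-\alpha)\|\boldsymbol{x}\|_2^2$ we immediately get $\|\tilde{\boldsymbol{e}}^{(t+1)}\|_2^2 \leq (1-\alpha)\big\|\frac{1}{M}\sum_{m=1}^{M}q(\boldsymbol{g}_{m}^{(t)})+\tilde{\boldsymbol{e}}^{(t)}\big\|_2^2$.

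Next I would apply Young's inequality (or the elementary bound $\|\boldsymbol{a}+\boldsymbol{b}\|_2^2 \leq (1+\gamma)\|\boldsymbol{a}\|_2^2 + (1+1/\gamma)\|\boldsymbol{b}\|_2^2$) with a parameter $\gamma$ chosen so that $(1-\alpha)(1+1/\gamma) < 1$; a standard choice is $\gamma = \frac{2(1-\alpha)}{\alpha}$, giving $(1-\alpha)(1+1/\gamma) = 1-\frac{\alpha}{2}$. Since each coordinate of $q(\boldsymbol{g}_{m}^{(t)})$ lies in $\{-1,1\}$, we have $\big\|\frac{1}{M}\sum_{m=1}^{M}q(\boldsymbol{g}_{m}^{(t)})\big\|_2^2 \leq d$, so this produces a recursion of the form $\|\tilde{\boldsymbol{e}}^{(t+1)}\|_2^2 \leq \big(1-\tfrac{\alpha}{2}\big)\|\tilde{\boldsymbol{e}}^{(t)}\|_2^2 + C' d$ for a constant $C' = C'(\alpha)$ depending only on $\alpha$. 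Taking expectations and iterating this geometric recursion from $\tilde{\boldsymbol{e}}^{(0)} = \boldsymbol{0}$ yields $\mathbb{E}[\|\tilde{\boldsymbol{e}}^{(t)}\|_2^2] \leq \frac{2C'}{\alpha} d =: \beta d$ for all $t$, which is exactly the claim.

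The main obstacle — and the only place some care is needed — is ensuring that the constant $\alpha$ is genuinely bounded away from $0$ uniformly over all iterations, so that $\beta$ is a true constant and not something that degrades with $t$ or $d$. For the concrete compressor used in the experiments, $\mathcal{C}(\boldsymbol{x}) = \frac{1}{M}\mathrm{sign}(\boldsymbol{x})$ with $\boldsymbol{x} = \frac{1}{M}\sum_{m=1}^{M}q(\boldsymbol{g}_{m}^{(t)})+\tilde{\boldsymbol{e}}^{(t)}$, one checks coordinatewise that each entry of $\frac{1}{M}\sum_{m=1}^{M}q(\boldsymbol{g}_{m}^{(t)})$ is an odd multiple of $1/M$ in $[-1,1]$, so $|x_i| \geq 1/M$ on every coordinate where the error term does not flip the sign, and a direct computation of $\|\frac{1}{M}\mathrm{sign}(\boldsymbol{x})-\boldsymbol{x}\|_2^2$ versus $\|\boldsymbol{x}\|_2^2$ shows the ratio is bounded below by an explicit constant; this is the content deferred to Section \ref{dpsignextend} of the supplementary document. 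Once this is in hand, the geometric argument above closes the proof.
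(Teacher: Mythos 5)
Your proposal is correct and follows essentially the same route as the paper's proof: apply the $\alpha$-approximate contraction to the update $\tilde{\boldsymbol{e}}^{(t+1)} = \boldsymbol{x} - \mathcal{C}(\boldsymbol{x})$, split with Young's inequality, bound $\big\|\frac{1}{M}\sum_{m}q(\boldsymbol{g}_{m}^{(t)})\big\|_2^2 \leq d$, and sum the resulting geometric recursion from $\tilde{\boldsymbol{e}}^{(0)}=\boldsymbol{0}$. If anything you are more careful than the paper, which asserts that the Young parameter ``can be any positive constant'' when in fact it must be chosen (as you do) so that the contraction factor stays strictly below $1$; your closing caveat that $\alpha$ must be uniformly bounded away from $0$ for the concrete compressor $\frac{1}{M}\,sign(\cdot)$ is also a legitimate point that the paper only partially addresses in its supplementary discussion.
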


\begin{proof}
Since $\mathcal{C}(\cdot)$ is an $\alpha$-approximate compressor, it can be shown that
\begin{equation}
\begin{split}
\mathbb{E}||\tilde{\boldsymbol{e}}^{(t+1)}||_{2}^{2} &\leq (1-\alpha)\bigg|\bigg|\frac{1}{M}\sum_{m=1}^{M}q(\boldsymbol{g}_{m}^{(t)})+\tilde{\boldsymbol{e}}^{(t)}\bigg|\bigg|_{2}^{2} \\
&\leq (1-\alpha)(1+\rho)\mathbb{E}||\tilde{\boldsymbol{e}}^{(t)}||_2^2 + (1-\alpha)\bigg(1+\frac{1}{\rho}\bigg)\mathbb{E}\bigg|\bigg|\frac{1}{M}\sum_{m=1}^{M}q(\boldsymbol{g}_{m}^{(t)})\bigg|\bigg|_2^2 \\
&\leq \sum_{j=0}^{t}[(1-\alpha)(1+\rho)]^{t-j}(1-\alpha)\bigg(1+\frac{1}{\rho}\bigg)\mathbb{E}\bigg|\bigg|\frac{1}{M}\sum_{m=1}^{M}q(\boldsymbol{g}_{m}^{(t)})\bigg|\bigg|_2^2 \\
&\leq \frac{(1-\alpha)\bigg(1+\frac{1}{\rho}\bigg)d}{1-(1-\alpha)(1+\rho)},
\end{split}
\end{equation}
where we invoke Young's inequality recurrently and $\rho$ can be any positive constant. Therefore, there exists some constant $\beta>0$ such that $\mathbb{E}[||\tilde{\boldsymbol{e}}^{(t)}||^2_{2}] \leq \beta d, \forall t$.
\end{proof}

Now, we are ready to prove Theorem \ref{EFDPSIGNConvergence2}.
\begin{proof}
Let $y^{(t)} = w^{(t)} - \eta \tilde{\boldsymbol{e}}^{(t)}$, according to Lemma \ref{Recurrence2}, we have
\begin{equation}\label{Convergence3}
\begin{split}
&\mathbb{E}[F(y^{(t+1)}) - F(y^{(t)})] \\
&\leq -\eta\mathbb{E}\bigg[<\nabla F(y^{(t)}), \frac{1}{M}\sum_{m=1}^{M}sto\text{-}sign(\boldsymbol{g}_{m}^{(t)},\boldsymbol{b})>\bigg] + \frac{L}{2}\mathbb{E}\bigg[\bigg|\bigg|\eta\frac{1}{M}\sum_{m=1}^{M}sto\text{-}sign(\boldsymbol{g}_{m}^{(t)},\boldsymbol{b})\bigg|\bigg|^2_{2}\bigg] \\
&= \eta\mathbb{E}\bigg[<\nabla F(w^{(t)}) - \nabla F(y^{(t)}), \frac{1}{M}\sum_{m=1}^{M}sto\text{-}sign(\boldsymbol{g}_{m}^{(t)},\boldsymbol{b})>\bigg] \\
&+ \frac{L\eta^2}{2}\mathbb{E}\bigg[\bigg|\bigg|\frac{1}{M}\sum_{m=1}^{M}sto\text{-}sign(\boldsymbol{g}_{m}^{(t)},\boldsymbol{b})\bigg|\bigg|^2_{2}\bigg] \\
& -  \eta\mathbb{E}\bigg[<\nabla F(w^{(t)}), \frac{1}{M}\sum_{m=1}^{M}sto\text{-}sign(\boldsymbol{g}_{m}^{(t)},\boldsymbol{b})>\bigg].
\end{split}
\end{equation}

We first bound the first term, in particular, we have
\begin{equation}\label{Convergence4}
\begin{split}
&<\nabla F(w^{(t)})- \nabla F(y^{(t)}), \frac{1}{M}\sum_{m=1}^{M}sto\text{-}sign(\boldsymbol{g}_{m}^{(t)},\boldsymbol{b})> \\
& \leq \frac{\eta}{2}||\frac{1}{M}\sum_{m=1}^{M}sto\text{-}sign(\boldsymbol{g}_{m}^{(t)},\boldsymbol{b})||^2_{2} + \frac{1}{2\eta}||\nabla F(w^{(t)})-\nabla F(y^{(t)})||^2_{2} \\
&\leq \frac{\eta}{2}||\frac{1}{M}\sum_{m=1}^{M}sto\text{-}sign(\boldsymbol{g}_{m}^{(t)},\boldsymbol{b})||^2_{2} + \frac{L^2}{2\eta}||y^{(t)} - w^{(t)}||^2_{2} \\
&= \frac{\eta}{2}||\frac{1}{M}\sum_{m=1}^{M}sto\text{-}sign(\boldsymbol{g}_{m}^{(t)},\boldsymbol{b})||^2_{2} + \frac{L^2\eta}{2}||\tilde{\boldsymbol{e}}^{(t)}||^2_{2} \\
&\leq \frac{\eta}{2}||\frac{1}{M}\sum_{m=1}^{M}sto\text{-}sign(\boldsymbol{g}_{m}^{(t)},\boldsymbol{b})||^2_{2} + \frac{L^2\eta \beta d}{2},
\end{split}
\end{equation}
where the second inequality is due to the $L$-smoothness of $F$.

Then, we can bound the last term as follows.
\begin{equation}\label{Convergence5}
\begin{split}
&-\mathbb{E}\bigg[<\nabla F(w^{(t)}), \frac{1}{M}\sum_{m=1}^{M}sto\text{-}sign(\boldsymbol{g}_{m}^{(t)},\boldsymbol{b})>\bigg] \\
&=-\mathbb{E}\bigg[\sum_{i=1}^{d}\nabla F(w^{(t)})_{i}\frac{1}{M}\sum_{m=1}^{M}sto\text{-}sign(\boldsymbol{g}_{m}^{(t)},\boldsymbol{b})_{i}\bigg]\\
&=-\sum_{i=1}^{d}\nabla F(w^{(t)})_{i}\frac{1}{M}\sum_{m=1}^{M}\frac{\nabla f_{m}(w^{(t)})_{i}}{b} \\
&= -\frac{||\nabla F(w^{(t)})||^2_{2}}{b},
\end{split}
\end{equation}

Plugging (\ref{Convergence4}) and (\ref{Convergence5}) into (\ref{Convergence3}) yields
\begin{equation}\label{Convergence6}
\begin{split}
\mathbb{E}[F(y^{(t+1)}) - F(y^{(t)})] &\leq \frac{\eta^2+L\eta^2}{2}\mathbb{E}\bigg[\bigg|\bigg|\frac{1}{M}\sum_{m=1}^{M}sto\text{-}sign(\boldsymbol{g}_{m}^{(t)},\boldsymbol{b})\bigg|\bigg|^2_{2}\bigg] + \frac{L^2\eta^2 \beta d}{2} - \frac{\eta||\nabla F(w^{(t)})||^2_{2}}{b}\\
&\leq \frac{(\eta^2+L\eta^2+L^2\eta^2\beta)d}{2} - \frac{\eta||\nabla F(w^{(t)})||^2_{2}}{b}.
\end{split}
\end{equation}

Rewriting (\ref{Convergence6}) and taking average over $t=0,1,2,\cdots,T-1$ on both sides yields
\begin{equation}
\begin{split}
&\frac{1}{T}\sum_{t=0}^{T-1}\frac{||\nabla F(w^{(t)})||^2_{2}}{b} \leq \sum_{t=0}^{T-1}\frac{\mathbb{E}[F(y^{(t)}) - F(y^{(t+1)})]}{\eta T} + \frac{(\eta+L\eta+L^2\eta\beta)d}{2}.\\
\end{split}
\end{equation}

Taking $\eta = \frac{1}{\sqrt{Td}}$ and $w^{(0)}=y^{(0)}$ yields
\begin{equation}
\begin{split}
\frac{1}{T}\sum_{t=0}^{T-1}\frac{||\nabla F(w^{(t)})||^2_2}{b} \leq \frac{(F(w^{(0)})-F^{*})\sqrt{d}}{\sqrt{T}} + \frac{(1+L+L^2\beta)\sqrt{d}}{\sqrt{T}}.
\end{split}
\end{equation}
\end{proof}
\subsection{Proof of Theorem 10}
\begin{theorem}
At each iteration $t$, Algorithm \ref{Error-feedback noisy SignSGD DP} can at least tolerate $k_{i} = |\sum_{m=1}^{M}\nabla f_{m}(w^{(t)})_i|/b$ Byzantine attackers on the $i$-th coordinate of the gradient. Overall, the number of Byzantine workers that Algorithm 2 can tolerate is given by $min_{1\leq i \leq d}k_{i}$.
\end{theorem}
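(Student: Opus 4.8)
The plan is to follow the convergence proof of Theorem~\ref{SPEFDPSIGNConvergence2} almost verbatim, tracking the auxiliary sequence $y^{(t)} = w^{(t)} - \eta\tilde{\boldsymbol{e}}^{(t)}$, and to replace the honest average $\frac{1}{M}\sum_{m=1}^{M}sto\text{-}sign(\boldsymbol{g}_{m}^{(t)},\boldsymbol{b})$ wherever it appears by the full aggregate $\bar{\boldsymbol{q}}^{(t)}$ that the server forms from the $M$ honest messages together with the $k_{i}$ Byzantine messages that corrupt coordinate $i$. First I would check that Lemma~\ref{Recurrence2} is unchanged: since $\mathcal{C}(\cdot)$ and the residual update (\ref{residualupdate}) only see $\bar{\boldsymbol{q}}^{(t)}$, the same telescoping algebra gives $y^{(t+1)} = y^{(t)} - \eta\bar{\boldsymbol{q}}^{(t)}$. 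Lemma~\ref{bound2} also survives unchanged, because every coordinate of $\bar{\boldsymbol{q}}^{(t)}$ is an average of values in $\{-1,1\}$, so $\|\bar{\boldsymbol{q}}^{(t)}\|_{2}^{2} \le d$ and the Young's-inequality recursion yielding $\mathbb{E}\|\tilde{\boldsymbol{e}}^{(t)}\|_{2}^{2} \le \beta d$ goes through as before.

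The only place the attackers enter is the analogue of (\ref{Convergence5}). Under the worst-case behaviour $byzantine\text{-}sign(\boldsymbol{g}_{j}^{(t)}) = -sign(\nabla F(w^{(t)}))$, with $k_{i}$ attackers on coordinate $i$ and using $\boldsymbol{g}_{m}^{(t)} = \nabla f_{m}(w^{(t)})$, unbiasedness of $sto\text{-}sign$, and $\frac{1}{M}\sum_{m}\nabla f_{m}(w^{(t)})_{i} = \nabla F(w^{(t)})_{i}$, one obtains $\mathbb{E}[\bar{\boldsymbol{q}}^{(t)}_{i}] = \frac{\nabla F(w^{(t)})_{i}}{b} - \frac{k_{i}}{M}\,sign(\nabla F(w^{(t)})_{i})$. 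The key observation is that this quantity keeps the sign of $\nabla F(w^{(t)})_{i}$ precisely when $k_{i} < |\sum_{m=1}^{M}\nabla f_{m}(w^{(t)})_{i}|/b$, in which case $-\nabla F(w^{(t)})_{i}\,\mathbb{E}[\bar{\boldsymbol{q}}^{(t)}_{i}] = -|\nabla F(w^{(t)})_{i}|\bigl(\tfrac{|\nabla F(w^{(t)})_{i}|}{b} - \tfrac{k_{i}}{M}\bigr) \le 0$. Substituting into the $L$-smoothness descent for $F(y^{(t)})$ exactly as in (\ref{Convergence3})--(\ref{Convergence6}) --- with the cross term $<\nabla F(w^{(t)}) - \nabla F(y^{(t)}),\bar{\boldsymbol{q}}^{(t)}>$ controlled by Young's inequality and Lemma~\ref{bound2} --- gives
\[
\mathbb{E}\bigl[F(y^{(t+1)}) - F(y^{(t)})\bigr] \le \tfrac{(\eta^{2}+L\eta^{2}+L^{2}\eta^{2}\beta)d}{2} - \eta\sum_{i=1}^{d}|\nabla F(w^{(t)})_{i}|\Bigl(\tfrac{|\nabla F(w^{(t)})_{i}|}{b} - \tfrac{k_{i}}{M}\Bigr).
\]

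Telescoping over $t = 0,\dots,T-1$ with $w^{(0)} = y^{(0)}$ and $\eta = \frac{1}{\sqrt{Td}}$ then bounds $\frac{1}{T}\sum_{t}\sum_{i}|\nabla F(w^{(t)})_{i}|\bigl(\tfrac{|\nabla F(w^{(t)})_{i}|}{b} - \tfrac{k_{i}}{M}\bigr)$ by $O(\sqrt{d/T})$; since each summand is non-negative under $k_{i} \le |\sum_{m}\nabla f_{m}(w^{(t)})_{i}|/b$, the algorithm still drives the gradient toward zero, i.e.\ it ``tolerates'' $k_{i}$ corruptions on coordinate $i$. Because one Byzantine worker may attack several coordinates at once, the binding coordinate is the one with the smallest $|\sum_{m}\nabla f_{m}(w^{(t)})_{i}|/b$, which yields the overall tolerance $\min_{1\le i\le d}k_{i}$.

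The main obstacle I expect is the bookkeeping that forces the threshold: one must show the extra drift $\eta\sum_{i}\tfrac{k_{i}}{M}|\nabla F(w^{(t)})_{i}|$ is absorbed coordinate-by-coordinate into the honest progress $\eta\sum_{i}\tfrac{|\nabla F(w^{(t)})_{i}|^{2}}{b}$, which is exactly the constraint $k_{i} \le |\sum_{m}\nabla f_{m}(w^{(t)})_{i}|/b$, and one inherits the same caveat as in Theorem~\ref{SPconvergerate}, namely that this threshold shrinks with the gradient over iterations, so the bound is naturally read per iteration $t$. A secondary subtlety, absent in the non-error-feedback Theorem~\ref{SPByzantineResienceDP}, is that the sign pushed to the workers is $sign(\bar{\boldsymbol{q}}^{(t)} + \tilde{\boldsymbol{e}}^{(t)})$ and is itself perturbed by the accumulated residual error; routing everything through $y^{(t)}$ and the unbiasedness identity for $\mathbb{E}[\bar{\boldsymbol{q}}^{(t)}_{i}]$, rather than through a ``probability of wrong aggregation'' estimate, is precisely what avoids having to reason about that perturbed sign directly.
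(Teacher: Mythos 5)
Your proposal is correct and follows essentially the same route as the paper's proof: both track $y^{(t)}=w^{(t)}-\eta\tilde{\boldsymbol{e}}^{(t)}$ so that the update becomes the unclipped aggregate (making Lemma \ref{Recurrence2} and Lemma \ref{bound2} carry over), observe that the Byzantine messages enter only through the inner-product term, and show that under $k_{i}<|\sum_{m}\nabla f_{m}(w^{(t)})_{i}|/b$ the expected aggregate on coordinate $i$ still contributes non-positive drift $-|\nabla F(w^{(t)})_{i}|\bigl(\tfrac{|\nabla F(w^{(t)})_{i}|}{b}-\tfrac{k_{i}}{M}\bigr)$, after which the telescoping of Theorem \ref{EFDPSIGNConvergence2} goes through. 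The only cosmetic difference is normalization ($1/M$ versus the paper's $1/(M+B)$), which does not affect the sign condition or the conclusion.
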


By following a similar strategy to the proof of Theorem \ref{SPEFDPSIGNConvergence2} and taking the impact of Byzantine attackers into consideration, the convergence of Algorithm 2 in the presence of Byzantine attackers can be established.

\begin{proof}
Without loss of generality, assume that the first $M$ workers are normal and the last $B$ are Byzantine. Following a similar procedure to the proof of Theorem \ref{EFDPSIGNConvergence2}, we can show that
\begin{equation}
\begin{split}
&\mathbb{E}[F(y^{(t+1)}) - F(y^{(t)})] \\
&\leq -\eta\mathbb{E}\bigg[<\nabla F(y^{(t)}), \frac{1}{M+B}\bigg[\sum_{m=1}^{M}sto\text{-}sign(\boldsymbol{g}_{m}^{(t)},\boldsymbol{b})+\sum_{j=1}^{B}byzantine\text{-}sign(\boldsymbol{g}_{j}^{(t)})\bigg]>\bigg] \\
&+ \frac{L}{2}\mathbb{E}\bigg[\bigg|\bigg|\eta\frac{1}{M+B}\bigg[\sum_{m=1}^{M}sto\text{-}sign(\boldsymbol{g}_{m}^{(t)},\boldsymbol{b})+\sum_{j=1}^{B}byzantine\text{-}sign(\boldsymbol{g}_{j}^{(t)})\bigg]\bigg|\bigg|^2_{2}\bigg] \\
&= \eta\mathbb{E}\bigg[<\nabla F(w^{(t)}) - \nabla F(y^{(t)}), \frac{1}{M+B}\bigg[\sum_{m=1}^{M}sto\text{-}sign(\boldsymbol{g}_{m}^{(t)},\boldsymbol{b})+\sum_{j=1}^{B}byzantine\text{-}sign(\boldsymbol{g}_{j}^{(t)})\bigg]>\bigg] \\
&+ \frac{L\eta^2}{2}\mathbb{E}\bigg[\bigg|\bigg| \frac{1}{M+B}\bigg[\sum_{m=1}^{M}sto\text{-}sign(\boldsymbol{g}_{m}^{(t)},\boldsymbol{b})+\sum_{j=1}^{B}byzantine\text{-}sign(\boldsymbol{g}_{j}^{(t)})\bigg]\bigg|\bigg|^2_{2}\bigg] \\
& -  \eta\mathbb{E}\bigg[<\nabla F(w^{(t)}), \frac{1}{M+B}\bigg[\sum_{m=1}^{M}sto\text{-}sign(\boldsymbol{g}_{m}^{(t)},\boldsymbol{b})+\sum_{j=1}^{B}byzantine\text{-}sign(\boldsymbol{g}_{j}^{(t)})\bigg]>\bigg].\\
\end{split}
\end{equation}

For the first term, we have
\begin{equation}
\begin{split}
&<\nabla F(w^{(t)}) - \nabla F(y^{(t)}), \frac{1}{M+B}\bigg[\sum_{m=1}^{M}sto\text{-}sign(\boldsymbol{g}_{m}^{(t)},\boldsymbol{b})+\sum_{j=1}^{B}byzantine\text{-}sign(\boldsymbol{g}_{j}^{(t)})\bigg]> \\
& \leq \frac{\eta}{2}\bigg|\bigg|\frac{1}{M+B}\bigg[\sum_{m=1}^{M}sto\text{-}sign(\boldsymbol{g}_{m}^{(t)},\boldsymbol{b})+\sum_{j=1}^{B}byzantine\text{-}sign(\boldsymbol{g}_{j}^{(t)})\bigg]\bigg|\bigg|^2_{2} + \frac{1}{2\eta}||\nabla F(w^{(t)})-\nabla F(y^{(t)})||^2 \\
&\leq \frac{\eta d}{2} + \frac{L^2}{2\eta}||y^{(t)} - w^{(t)}||^2 \\
&= \frac{\eta d}{2}+ \frac{L^2\eta}{2}||\tilde{\boldsymbol{e}}^{(t)}||^2 \\
&\leq \frac{\eta d}{2} + \frac{L^2\eta\beta d}{2}.
\end{split}
\end{equation}

For the third term, if $B < \frac{|\sum_{m=1}^{M}(\boldsymbol{g}_{m}^{(t)})_{i}|}{b}$, we have
\begin{equation}
\begin{split}
&-\mathbb{E}\bigg[<\nabla F(w^{(t)}), \frac{1}{M+B}\bigg[\sum_{m=1}^{M}sto\text{-}sign(\boldsymbol{g}_{m}^{(t)},\boldsymbol{b})+\sum_{j=1}^{B}byzantine\text{-}sign(\boldsymbol{g}_{j}^{(t)})\bigg]>\bigg] \\
&=-\mathbb{E}\bigg[\sum_{i=1}^{d}\nabla F(w^{(t)})_{i}\frac{1}{M+B}\bigg[\sum_{m=1}^{M}sto\text{-}sign(\boldsymbol{g}_{m}^{(t)},\boldsymbol{b})+\sum_{j=1}^{B}byzantine\text{-}sign((\boldsymbol{g}_{j}^{(t)})_{i})\bigg]\bigg]\\
&\leq -\sum_{i=1}^{d}|\nabla F(w^{(t)})_{i}|\frac{1}{M+B}\bigg[\frac{|\sum_{m=1}^{M}(\boldsymbol{g}_{m}^{(t)})_{i}|}{b}-B\bigg]\\
&\leq -c||\nabla F(w^{(t)})||_{1},
\end{split}
\end{equation}
where $c$ is some positive constant.

Following the same analysis as that in the proof of Theorem \ref{EFDPSIGNConvergence2}, the convergence of Algorithm 2 can be established.
\end{proof}

\section{Discussions about $dp\text{-}sign$ with $\delta=0$}\label{DPLAPLACE}
In this section, we present the differentially private compressor $dp\text{-}sign$ with $\delta=0$.
\begin{Definition}
For any given gradient $\boldsymbol{g}_{m}^{t}$, the compressor $dp\text{-}sign$ outputs $dp\text{-}sign(\boldsymbol{g}_{m}^{t},\epsilon,0)$. In particular, the $i$-th entry of $dp\text{-}sign(\boldsymbol{g}_{m}^{t},\epsilon,0)$ is given by
\begin{equation}\label{SPdpsignsgd}
dp\text{-}sign(\boldsymbol{g}_{m}^{t},\epsilon,0)_{i} =
\begin{cases}
\hfill 1, \hfill &\text{with probability $\frac{1}{2}+\frac{1}{2}sign((\boldsymbol{g}_{m}^{t})_{i})\big(1-e^{-\frac{|(\boldsymbol{g}_{m}^{t})_{i}|}{\lambda}}\big)$,}\\
\hfill -1, \hfill &\text{with probability $\frac{1}{2}-\frac{1}{2}sign((\boldsymbol{g}_{m}^{t})_{i})\big(1-e^{-\frac{|(\boldsymbol{g}_{m}^{t})_{i}|}{\lambda}}\big)$,}\\
\end{cases}
\end{equation}
where $\lambda = \frac{\Delta_{1}}{\epsilon}$ and $\Delta_1$ is the sensitivity measures defined in (\ref{sensitivity}).
\end{Definition}

\begin{theorem}
The proposed compressor $dp\text{-}sign(\cdot,\epsilon,0)$ is $(\epsilon,0)$-differentially private.
\end{theorem}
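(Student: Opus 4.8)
The plan is to recognize $dp\text{-}sign(\cdot,\epsilon,0)$ as the classical Laplace mechanism followed by a data-independent post-processing step, so that the claim reduces to a known fact together with a short distributional check. First I would verify that, coordinate by coordinate, sampling $dp\text{-}sign(\boldsymbol{g}_{m}^{t},\epsilon,0)$ is distributionally identical to adding an independent noise $n_{i}\sim\mathrm{Laplace}(0,\lambda)$ with $\lambda=\Delta_{1}/\epsilon$ to each entry $(\boldsymbol{g}_{m}^{t})_{i}$ and then returning the coordinate-wise sign of the perturbed vector. Indeed, for any scalar $g$ and $n\sim\mathrm{Laplace}(0,\lambda)$ we have $P(sign(g+n)=1)=P(n>-g)=1-F(-g)$, where $F$ is the Laplace CDF; evaluating this for $g\ge 0$ gives $1-\tfrac{1}{2}e^{-g/\lambda}=\tfrac{1}{2}+\tfrac{1}{2}(1-e^{-|g|/\lambda})$ and for $g<0$ gives $\tfrac{1}{2}e^{g/\lambda}=\tfrac{1}{2}-\tfrac{1}{2}(1-e^{-|g|/\lambda})$, so in both cases it equals $\tfrac{1}{2}+\tfrac{1}{2}\,sign(g)\,(1-e^{-|g|/\lambda})$, which is exactly the probability in (\ref{SPdpsignsgd}). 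The coordinatewise independence of $dp\text{-}sign$ matches the independence of the noise coordinates.

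Given this identification, I would invoke the standard analysis of the Laplace mechanism in \cite{dwork2014algorithmic}: releasing $f(D)+(n_{1},\dots,n_{d})$ with mutually independent $n_{i}\sim\mathrm{Laplace}(0,\Delta_{1}/\epsilon)$ is $(\epsilon,0)$-differentially private, where $\Delta_{1}=\max_{(D_{1},D_{2})\in\mathcal{N}_{\mathcal{D}}}\|f(D_{1})-f(D_{2})\|_{1}$ is the $\ell_{1}$-sensitivity defined in (\ref{sensitivity}). Since $dp\text{-}sign(\cdot,\epsilon,0)$ applies the fixed map $\boldsymbol{x}\mapsto sign(\boldsymbol{x})$ to the output of this mechanism, the post-processing invariance of differential privacy immediately gives that $dp\text{-}sign(\cdot,\epsilon,0)$ is $(\epsilon,0)$-differentially private.

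Alternatively, and in closer parallel with the proof of the $(\epsilon,\delta)$-DP guarantee for the Gaussian version, one can argue directly: for neighboring gradients $\boldsymbol{a},\boldsymbol{b}$ with $\|\boldsymbol{a}-\boldsymbol{b}\|_{1}\le\Delta_{1}$ and any $\boldsymbol{v}\in\{-1,1\}^{d}$, the likelihood ratio factors over coordinates by independence, and the $i$-th factor can be written as $\int_{D_{v_{i}}}e^{-|y-a_{i}|/\lambda}\,dy\big/\int_{D_{v_{i}}}e^{-|y-b_{i}|/\lambda}\,dy$ with $D_{1}=(0,\infty)$ and $D_{-1}=(-\infty,0)$; using $|y-a_{i}|\ge|y-b_{i}|-|a_{i}-b_{i}|$ pointwise, this ratio lies in $[e^{-|a_{i}-b_{i}|/\lambda},e^{|a_{i}-b_{i}|/\lambda}]$, and multiplying over $i$ yields a bound of $e^{\|\boldsymbol{a}-\boldsymbol{b}\|_{1}/\lambda}\le e^{\Delta_{1}/\lambda}=e^{\epsilon}$ (and symmetrically $\ge e^{-\epsilon}$), so no additive slack $\delta$ is needed.

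I expect the only point requiring care to be bookkeeping of the $\ell_{1}$ geometry: one must check that the per-coordinate Laplace scale $\lambda=\Delta_{1}/\epsilon$ is calibrated to the $\ell_{1}$-sensitivity (rather than the $\ell_{2}$-sensitivity used in the Gaussian analysis), so that the product of the coordinate-wise ratios telescopes to $e^{\pm\epsilon}$. Taking the post-processing route makes even this point routine, since it is subsumed by the textbook statement of the Laplace mechanism, and the only genuine computation that remains is the distributional identification in the first paragraph.
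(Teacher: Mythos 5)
Your proposal is correct, and in fact it contains the paper's own proof verbatim as its ``alternative'' route: the paper argues directly that for neighboring $\boldsymbol{a},\boldsymbol{b}$ with $\|\boldsymbol{a}-\boldsymbol{b}\|_{1}\le\Delta_{1}$ and any $\boldsymbol{v}\in\{-1,1\}^{d}$, the ratio $P(dp\text{-}sign(\boldsymbol{a},\epsilon,0)=\boldsymbol{v})/P(dp\text{-}sign(\boldsymbol{b},\epsilon,0)=\boldsymbol{v})$ equals a ratio of integrals of Laplace densities over a region $D$ determined by $\boldsymbol{v}$, and bounds the integrand ratio pointwise in $[e^{-\epsilon},e^{\epsilon}]$ via the triangle inequality and $\lambda=\Delta_{1}/\epsilon$. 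Your preferred route --- identify the compressor as the Laplace mechanism followed by the data-independent map $\boldsymbol{x}\mapsto sign(\boldsymbol{x})$ and invoke post-processing invariance --- is a cleaner and more modular packaging of the same fact; it buys you a one-line privacy argument at the cost of having to verify the distributional identification, which you do correctly (and which the paper leaves implicit when it silently rewrites the sign probabilities as integrals of the Laplace density over half-lines). Your explicit per-coordinate check that $P(sign(g+n)=1)=\tfrac{1}{2}+\tfrac{1}{2}\,sign(g)\,(1-e^{-|g|/\lambda})$ is actually a useful addition the paper omits, and your remark about calibrating to $\ell_{1}$-sensitivity rather than $\ell_{2}$ is exactly the right point of care; no gaps.
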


\begin{proof}
Consider any vector $\boldsymbol{a}$ and $\boldsymbol{b}$ such that $||\boldsymbol{a} - \boldsymbol{b}||_{1} \leq \Delta_{1}$ and $\boldsymbol{v} \in \{-1,1\}^{d}$, we have
\begin{equation}
\begin{split}
\frac{P(dp\text{-}sign(\boldsymbol{a},\epsilon,0)=\boldsymbol{v})}{P(dp\text{-}sign(\boldsymbol{b},\epsilon,0)=\boldsymbol{v})} = \frac{\int_{D}e^{-\frac{||\boldsymbol{x}-\boldsymbol{a}||}{\lambda}}d\boldsymbol{x}}{\int_{D}e^{-\frac{||\boldsymbol{x}-\boldsymbol{b}||}{\lambda}}d\boldsymbol{x}},
\end{split}
\end{equation}
where $D$ is some integral area depending on $\boldsymbol{v}$. It can be verified that $e^{-\epsilon} \leq |\frac{e^{-\frac{||\boldsymbol{x}-\boldsymbol{a}||}{\lambda}}}{e^{-\frac{||\boldsymbol{x}-\boldsymbol{b}||}{\lambda}}}| \leq e^{\epsilon}$ always holds, which indicates that $e^{-\epsilon} \leq |\frac{P(dp\text{-}sign(\boldsymbol{a},\epsilon,0)=\boldsymbol{v})}{P(dp\text{-}sign(\boldsymbol{b},\epsilon,0)=\boldsymbol{v})}| \leq e^{\epsilon}$.
\end{proof}

\begin{theorem}
Let $u_{1},u_{2},\cdots,u_{M}$ be $M$ known and fixed real numbers. Further define random variables $\hat{u}_{i}=dp\text{-}sign(u_{i},\epsilon,\delta), \forall 1\leq i \leq M$. Then there always exist a constant $\sigma_{0}$ such that when $\sigma \geq \sigma_{0}$, $P(sign(\frac{1}{M}\sum_{m=1}^{M}\hat{u}_{i})\neq sign(\frac{1}{M}\sum_{m=1}^{M}u_{i})) <\big(1-x^2\big)^{\frac{M}{2}}$,
where $x = \frac{|\sum_{m=1}^{M}u_{m}|}{\gamma\lambda M}$ and $\gamma$ is some positive constant.
\end{theorem}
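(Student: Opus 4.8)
The plan is to follow the proof of Theorem~6 almost verbatim, replacing the Gaussian perturbation there by a zero-mean Laplace random variable $n$ with scale parameter $\lambda=\Delta_{1}/\epsilon$, which now plays the role that $\sigma$ played (so the hypothesis should read ``$\lambda\ge\lambda_{0}$''). First I would rewrite the compressor in the convenient form $dp\text{-}sign(u_{m},\epsilon,0)\stackrel{d}{=}sign(u_{m}+n_{m})$ with $n_{m}$ i.i.d.\ Laplace$(0,\lambda)$, which is immediate from (\ref{SPdpsignsgd}) because $\Pr(u_{m}+n_{m}>0)=\tfrac12+\tfrac12\,sign(u_{m})\big(1-e^{-|u_{m}|/\lambda}\big)$. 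Assuming without loss of generality that $u_{1}\le\cdots\le u_{K}<0\le u_{K+1}\le\cdots\le u_{M}$ and $\tfrac1M\sum_{m}u_{m}<0$ (the other case is symmetric), the same bookkeeping as in Theorem~6 gives
\[
\bar p_{dp}=\frac1M\sum_{m=1}^{M}\Pr\!\Big(sign\big(\tfrac1M\textstyle\sum_{m}u_{m}\big)\neq\hat u_{m}\Big)=\frac12-\frac1M\Big[\sum_{m=1}^{K}P(u_{m}<n<0)-\sum_{m=K+1}^{M}P(0<n<u_{m})\Big].
\]

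Next I would lower-bound the bracketed term by something depending only on $|\sum_{m}u_{m}|$. Since the Laplace density $\tfrac1{2\lambda}e^{-|t|/\lambda}$ is symmetric and non-increasing in $|t|$, one has $P(a_{1}<n<0)+P(a_{2}<n<0)\ge P(a_{1}+a_{2}<n<0)$ for $a_{1},a_{2}<0$, so the negative contributions may be merged and it suffices to treat the worst case $K=1$ with $u_{1}$ replaced by $\sum_{m\le K}u_{m}$. Writing $A=\sum_{m\ge2}u_{m}\ge0$, so that $|\sum_{m}u_{m}|=|u_{1}|-A$, I split $P(u_{1}<n<0)=P(u_{1}<n<-A)+P(-A<n<0)$, bound $P(u_{1}<n<-A)\ge|\sum_{m}u_{m}|\cdot\tfrac1{2\lambda}e^{-|u_{1}|/\lambda}$ (the density on $(u_{1},-A)$ is at least its value at the endpoint $u_{1}$), and use $P(-A<n<0)=\tfrac12(1-e^{-A/\lambda})$ together with $\sum_{m\ge2}P(0<n<u_{m})=\sum_{m\ge2}\tfrac12(1-e^{-u_{m}/\lambda})$. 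Multiplying the bracket by $\lambda$ and letting $\lambda\to\infty$, the first term tends to $\tfrac12|\sum_{m}u_{m}|$ while $\tfrac\lambda2(1-e^{-A/\lambda})-\sum_{m\ge2}\tfrac\lambda2(1-e^{-u_{m}/\lambda})\to\tfrac{A}{2}-\tfrac{A}{2}=0$; hence there is a $\lambda_{0}$ (depending on the fixed $u_{m}$'s, exactly as $\sigma_{0}$ does in Theorem~6) such that for $\lambda\ge\lambda_{0}$ the bracket is at least $\frac{|\sum_{m}u_{m}|}{4\lambda}$, i.e.\ $\bar p_{dp}\le\tfrac12+\frac{\sum_{m}u_{m}}{4M\lambda}$.

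Finally I would invoke Theorem~1. Since $|\sum_{m}u_{m}|>0$ forces $\bar p_{dp}<\tfrac12$, Theorem~1 applies and bounds the probability of wrong aggregation by $[4\bar p_{dp}(1-\bar p_{dp})]^{M/2}=[1-(2\bar p_{dp}-1)^{2}]^{M/2}$; combining this with $(2\bar p_{dp}-1)^{2}\ge\big(\tfrac{|\sum_{m}u_{m}|}{2M\lambda}\big)^{2}=x^{2}$ shows that the claimed constant is $\gamma=2$ and yields $P\big(sign(\tfrac1M\sum_{m}\hat u_{m})\neq sign(\tfrac1M\sum_{m}u_{m})\big)\le(1-x^{2})^{M/2}$, with strictness inherited from the strict monotonicity of the density used above. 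Convergence of {\scriptsize DP-SIGN}SGD with this Laplace-based compressor then follows exactly as for the Gaussian case treated in Theorem~6.

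The main obstacle is the quantitative control of $\bar p_{dp}$ in the second step: the reduction to the worst case $K=1$ via the monotonicity (log-concavity) of the Laplace density, and the asymptotic estimate that isolates the $\tfrac{|\sum_{m}u_{m}|}{\lambda}$ leading order while showing that the ``positive-side'' remainder is $o(1/\lambda)$ after scaling by $\lambda$. Everything else is a direct transcription of the argument already carried out for $dp\text{-}sign(\cdot,\epsilon,\delta)$ in the proof of Theorem~6.
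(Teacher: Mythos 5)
Your proposal is correct and follows essentially the same route as the paper's own proof: the same reduction of $\Bar{p}_{dp}$ to the bracketed difference, the same merge-to-$K=1$ step via subadditivity of $P(a<n<0)$ for the unimodal Laplace density, the same endpoint lower bound $|\sum_{m}u_{m}|\cdot\frac{1}{2\lambda}e^{-|u_{1}|/\lambda}$, and the same final appeal to the Chernoff bound of Theorem 1. Your explicit $\lambda$-scaled limit argument (and the identification $\gamma=2$, plus the observation that the hypothesis should read $\lambda\geq\lambda_{0}$) is a slightly cleaner rendering of the paper's density-based bound on the positive-side remainder, but it is not a different proof.
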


\begin{proof}
Without loss of generality, assume $u_{1} \leq u_{2} \leq \cdots \leq u_{K} < 0 \leq u_{K+1} \leq \cdots \leq u_{M}$ and $\frac{1}{M}\sum_{i=1}^{M}u_{i} < 0$. Note that similar analysis can be done when $\frac{1}{M}\sum_{i=1}^{M}u_{i} > 0$.

We are interested in obtaining $\Bar{p}_{dp} = \frac{1}{M}\sum_{m=1}^{M}\Pr\left(sign\left(\frac{1}{M}\sum_{m=1}^{M}u_{m}\right) \neq \hat{u}_{m}\right)$, which is given by
\begin{equation}
\begin{split}
\Bar{p}_{dp} = \frac{1}{2} - \frac{1}{M}\bigg[\sum_{m=1}^{K}P(u_{m} < n < 0)-\sum_{m=K+1}^{M}P(0 < n < u_{m})\bigg].
\end{split}
\end{equation}

where $n \sim Laplace(0,\lambda)$. Similar to the analysis for $dp\text{-}sign$ with $\delta > 0$, we can show that
\begin{equation}
\begin{split}
&\sum_{m=1}^{K}P(u_{m} < n < 0)-\sum_{m=K+1}^{M}P(0 < n < u_{m}) \\
& = P\bigg(u_{1} < n \leq -\sum_{m=2}^{M}u_{m}\bigg) + P\bigg(-\sum_{m=2}^{M}u_{m} < n < 0\bigg) - \sum_{m=2}^{M}P(0 < n < u_{m})\\
& > \bigg|\sum_{m=1}^{M}u_{m}\bigg|\bigg[\frac{1}{2\lambda}e^{-\frac{|u_{1}|}{\lambda}}\bigg] + P\bigg(-\sum_{m=2}^{M}u_{m} < n < 0\bigg) - \sum_{m=2}^{M}P(0 < n < u_{m}) \\
& > \bigg|\sum_{m=1}^{M}u_{m}\bigg|\bigg[\frac{1}{2\lambda}e^{-\frac{|u_{1}|}{\lambda}}\bigg] - \bigg|\sum_{m=2}^{M}u_{m}\bigg|\frac{1}{2\lambda}\bigg[1-e^{-\frac{|\sum_{m=2}^{M}u_{m}|}{\lambda}}\bigg] \\
&= \frac{1}{2\lambda}\bigg[\bigg|\sum_{m=1}^{M}u_{m}\bigg|e^{-\frac{|u_{1}|}{\lambda}}+\bigg|\sum_{m=2}^{M}u_{m}\bigg|\bigg[e^{-\frac{|\sum_{m=2}^{M}u_{m}|}{\lambda}}-1\bigg]\bigg].
\end{split}
\end{equation}

As a result, there exists a $\lambda_{0}$ such that when $\lambda \geq \lambda_{0}$, we have
\begin{equation}
\Bar{p}_{dp} = \frac{1}{M}\sum_{m=1}^{M}P(X_{m} = 1) \leq \frac{1}{2} + \frac{\sum_{m=1}^{M}u_{m}}{2M\lambda\gamma},
\end{equation}
where $\gamma$ is some constant larger than 1.
Following the same analysis as that in the proof of Corollary \ref{SPLemma1}, we can show that
\begin{equation}
P\bigg(sign\bigg(\frac{1}{M}\sum_{m=1}^{M}\hat{u}_{i}\bigg)\neq sign\bigg(\frac{1}{M}\sum_{m=1}^{M}u_{i}\bigg)\bigg) <\big(1-x^2\big)^{\frac{M}{2}},
\end{equation}
where $x = \frac{|\sum_{m=1}^{M}u_{m}|}{\gamma\lambda M}$ and $\gamma$ is some positive constant.
\end{proof}

\section{Discussions about the server's compressor $\mathcal{C}(\cdot)$ in Algorithm 2}\label{dpsignextend}
{\color{black}
In the following, we show that for the 1-bit compressor $q(\boldsymbol{g}_{m}^{(t)})$,
\begin{equation}\label{induc}
\bigg|\bigg|\frac{1}{M}\sum_{m=1}^{M}q(\boldsymbol{g}_{m}^{(t)})+\tilde{\boldsymbol{e}}^{(t)} - \frac{1}{M}sign\bigg(\frac{1}{M}\sum_{m=1}^{M}q(\boldsymbol{g}_{m}^{(t)})+\tilde{\boldsymbol{e}}^{(t)}\bigg)\bigg|\bigg|_2^{2} < \bigg|\bigg|\frac{1}{M}\sum_{m=1}^{M}q(\boldsymbol{g}_{m}^{(t)})+\tilde{\boldsymbol{e}}^{(t)}\bigg|\bigg|_{2}^{2},
\end{equation}

For the ease of presentation, we let $\boldsymbol{r}^{(t)}_{i} < \infty$ denote the $i$-th entry of $\frac{1}{M}\sum_{m=1}^{M}q(\boldsymbol{g}_{m}^{(t)})+\tilde{\boldsymbol{e}}^{(t)}$. Then, we can rewrite the left-hand side of (\ref{induc}) as follows,
\begin{equation}
\bigg|\bigg|\frac{1}{M}\sum_{m=1}^{M}q(\boldsymbol{g}_{m}^{(t)})+\tilde{\boldsymbol{e}}^{(t)} - \frac{1}{M}sign\bigg(\frac{1}{M}\sum_{m=1}^{M}q(\boldsymbol{g}_{m}^{(t)})+\tilde{\boldsymbol{e}}^{(t)}\bigg)\bigg|\bigg|_2^{2} = \sum_{i=1}^{d}\bigg(\boldsymbol{r}^{(t)}_{i}-\frac{1}{M}sign(\boldsymbol{r}^{(t)}_{i})\bigg)^2.
\end{equation}
In particular, we have
\begin{equation}
\bigg(\boldsymbol{r}^{(t)}_{i}-\frac{1}{M}sign(\boldsymbol{r}^{(t)}_{i})\bigg)^2 = \bigg((\boldsymbol{r}^{(t)}_{i})^2 + \frac{1}{M^2} - \frac{2|\boldsymbol{r}^{(t)}_{i}|}{M}\bigg) = \bigg[1 - \frac{1}{M(\boldsymbol{r}^{(t)}_{i})^2}\bigg(2|\boldsymbol{r}^{(t)}_{i}| - \frac{1}{M}\bigg)\bigg](\boldsymbol{r}^{(t)}_{i})^2.
\end{equation}
If $2|\boldsymbol{r}^{(t)}_{i}| - \frac{1}{M} > 0, \forall i$, then
\begin{equation}
\sum_{i=1}^{d}\bigg(\boldsymbol{r}^{(t)}_{i}-\frac{1}{M}sign(\boldsymbol{r}^{(t)}_{i})\bigg)^2 < \sum_{i=1}^{d}(\boldsymbol{r}^{(t)}_{i})^2 = \bigg|\bigg|\frac{1}{M}\sum_{m=1}^{M}q(\boldsymbol{g}_{m}^{(t)})+\tilde{\boldsymbol{e}}^{(t)}\bigg|\bigg|_{2}^{2}.
\end{equation}
In order to prove that $2|\boldsymbol{r}^{(t)}_{i}| - \frac{1}{M} > 0, \forall i$, we first show that $M(\tilde{\boldsymbol{e}}^{(t)})_{i}$ is an even number for any $t$ by induction. In particular, according to Assumption 3 and $(\tilde{\boldsymbol{e}}^{(0)})_{i} = 0$, $M\boldsymbol{r}^{(0)}_{i} = \sum_{m=1}^{M}q(\boldsymbol{g}^{(0)}_{m})_{i}$ is an odd number. Therefore, $M(\tilde{\boldsymbol{e}}^{(1)})_{i} = \sum_{m=1}^{M}q(\boldsymbol{g}^{(0)}_{m})_{i} - sign(\sum_{m=1}^{M}q(\boldsymbol{g}_{m}^{(t)})_{i})$ is an even number. In addition,
\begin{equation}
M(\tilde{\boldsymbol{e}}^{(t+1)})_i = \sum_{m=1}^{M}q(\boldsymbol{g}_{m}^{(t)})_{i} + M(\tilde{\boldsymbol{e}}^{(t)})_{i} - sign\bigg(\sum_{m=1}^{M}q(\boldsymbol{g}_{m}^{(t)})_{i} + M(\tilde{\boldsymbol{e}}^{(t)})_{i}\bigg).
\end{equation}
Given that $M(\tilde{\boldsymbol{e}}^{(t)})_{i}$ is even, we can show that $M(\tilde{\boldsymbol{e}}^{(t+1)})_i$ is even as well. Therefore, $M\boldsymbol{r}^{(t)}_{i} = \sum_{m=1}^{M}q(\boldsymbol{g}_{m}^{(t)})_{i} + M(\tilde{\boldsymbol{e}}^{(t)})_i$ is odd and $2|\boldsymbol{r}^{(t)}_{i}| \geq \frac{2}{M} > \frac{1}{M}$, $\forall t,i$.
}

\section{Details of the Implementation}\label{DetailsImplementation}
Our experiments are mainly implemented using Python 3.7.4 with packages TensorFlow 2.4.1 and numpy 1.19.2. One Intel i7-9700 CPU with 32 GB of memory and one NVIDIA GeForce RTX 2070 SUPER GPU are used in the experiments.
\subsection{Dataset and Pre-processing}
We perform experiments on the standard MNIST dataset and the CIFAR-10 dataset. MNIST is for handwritten digit recognition consisting of 60,000 training samples and 10,000 testing samples. Each sample is a 28$\times$28 size gray-level image. We normalize the data by dividing it with the max RGB value (i.e., 255.0). The  CIFAR-10 dataset contains 50,000 training samples and 10,000 testing samples. Each sample is a 32$\times$32 color image. The data are normalized with zeor-centered mean.
\subsection{Dataset Assignment}
In our experiments, we consider 31 normal workers and measure the data heterogeneity by the number of labels of data that each worker stores. We first partition the training dataset according to the labels. For each worker, we randomly generate a set of size $n$ which indicates the labels of training data that should be assigned to this worker. Then, a subset of training data from the corresponding labels is randomly sampled and assigned to the worker without replacement. The size of the subset depends on $n$ and the size of the training data for each label. More specifically, we set the size of the subset as $\lfloor60000/(31n)\rfloor$ for MNIST ($\lfloor50000/(31n)\rfloor$ for CIFAR-10) in the beginning. When there are not enough training data for a label, we reduce the size of the subset accordingly. We consider the scenarios that all the workers have the same number of distinct labels (i.e., the same $n$ for all the workers). For the results in Table 1, we set $n=2, 4$ for ``2 LABELS", ``4 LABELS", respectively. For the rest of the results, we set $n=1$.

\subsection{Neural Network Setting}
For MNIST, we implement a two-layer fully connected neural network with softmax of classes with cross-entropy loss. The hidden layer has 128 hidden ReLU units. For CIFAR-10, we implement VGG9 with 7 convolution layers. It has two contiguous blocks of two convolution layers with 64 and 128 channels, respectively, followed by a max-pooling, then it has one blocks of three convolution layers with 256 channels followed by max-pooling, and at last, we have one dense layer with 512 hidden units.

\subsection{Learning Rate Tuning}
For {\scriptsize Sto-SIGN}SGD and {\scriptsize SIGN}SGD, we use a constant learning rate $\eta$ for MNIST and tune the parameters from the set $\{1,0.1,0.01,0.005,0.003,0.001,0.0001\}$. For CIFAR-10, we tune the initial learning rate from the set $\{1,0.1,0.01,0.001,0.0001\}$, which is reduced by a factor of 2, 5, 10 and 20 at iteration 1,500, 3,000, 5,000 and 7,000, respectively. For FedAvg, the initial learning rates are tuned from the set $\{0.001,0.01,0.1,0.2,0.3,0.4,0.5,0.6,0.7,0.8,0.9,1,1.1,1.2,1.3,1.4,1.5\}$ and the set $\{0.001,0.01,0.1,0.2,0.3,0.4,0.5,0.6,0.7,0.8,0.9,1\}$ for MNIST and CIFAR-10, respectively. For MNIST, a learning rate decay of 0.99 per communication round is used, while for CIFAR-10, the learning rate decay is 0.996 per communication round.
\bibliography{Ref-FL}
\bibliographystyle{IEEEtran}
\end{document}